\documentclass{article}

\usepackage{microtype}
\usepackage{graphicx}
\usepackage{subcaption}
\usepackage{booktabs} 

\usepackage{hyperref}
\usepackage{wrapfig}
\usepackage{subcaption}

\usepackage{booktabs}
\usepackage{array}

\usepackage[accepted]{icml2026}

\usepackage{amsmath}
\usepackage{amssymb}
\usepackage{mathtools}
\usepackage{amsthm}
\usepackage{wrapfig}
\usepackage{booktabs}

\usepackage[capitalize,noabbrev]{cleveref}

\theoremstyle{plain}
\newtheorem{theorem}{Theorem}[section]
\newtheorem{proposition}[theorem]{Proposition}

\theoremstyle{definition}

\theoremstyle{remark}

\usepackage[textsize=tiny]{todonotes}

\usepackage{mathtools}
\usepackage{bbm}
\usepackage{pifont}
\usepackage{wrapfig}
\usepackage{algorithm}
\usepackage{graphics}
\usepackage{tabularray}
\usepackage{adjustbox}
\usepackage{multirow}
\usepackage{amssymb}
\usepackage{amsthm}
\usepackage{graphicx}
\usepackage{subcaption}
\usepackage{float}
\usepackage[table]{xcolor}
\usepackage{booktabs}
\usepackage{paralist}

\theoremstyle{plain}

\newtheorem{manualpropositioninner}{Proposition}
\newenvironment{manualproposition}[1]{%
  \renewcommand\themanualpropositioninner{#1}%
  \manualpropositioninner
}{\endmanualpropositioninner}

\UseTblrLibrary{booktabs}

\newcommand{\bs}{\boldsymbol}
\newcommand{\udl}{\underline}
\newcommand{\tbcl}{\cellcolor[HTML]{E0FAE0}}

\RequirePackage{xspace}
\makeatletter
\DeclareRobustCommand\onedot{\futurelet\@let@token\@onedot}
\def\@onedot{\ifx\@let@token.\else.\null\fi\xspace}

\def\eg{\emph{e.g}\onedot} 

\def\ie{\emph{i.e}\onedot}

\makeatother

\def\eqref#1{Eq.~(\ref{#1})}

\def\bx{{\boldsymbol x}}
\def\bxhat{{\hat{\boldsymbol x}_{0|t} }}

\def\bz{{\boldsymbol z}}

\def\b0{{\boldsymbol 0}}
\def\bI{{\boldsymbol I}}
\def\Q{{\boldsymbol Q}}
\def\beps{{\boldsymbol \epsilon}}
\def\bth{{\boldsymbol \theta}}
\def\bphi{{\boldsymbol \phi}}

\def\R{\mathbb{R}}

\def\N{{\cal N}}

\icmltitlerunning{Beyond Generative Priors: Minority Sampling with JEPA-Guided Diffusion}

\begin{document}

\twocolumn[
  \icmltitle{Beyond Generative Priors: Minority Sampling with JEPA-Guided Diffusion}



  \icmlsetsymbol{equal}{*}
  \icmlsetsymbol{corresponding}{$\dagger$}

  
  \begin{icmlauthorlist}
  \icmlauthor{Sol Park}{comp}
  \icmlauthor{Soobin Um}{corresponding,sch}
  \end{icmlauthorlist}

  \icmlaffiliation{comp}{Agency for Defense Development, Daejeon, South Korea}
  \icmlaffiliation{sch}{Department of Artificial Intelligence, Kookmin University, Seoul, South Korea}

  \icmlcorrespondingauthor{Soobin Um}{soobin.um@kookmin.ac.kr}

  \icmlkeywords{Diffusion Models, Minority Sample Generation, Image Generation, Joint-Embedding Predictive Architecture}

  \vskip 0.3in
]



\printAffiliationsAndNotice{$\dagger$Corresponding author.}
\begin{abstract}
Minority sampling aims to generate low-density instances on a data manifold and is of central importance in applications such as medical diagnosis, anomaly detection, and creative AI. Existing approaches, however, define minority samples relative to generative priors learned from training data, confining rarity to model-specific notions that may poorly reflect real-world semantics. In this work, we propose a world-centric perspective on minority sampling, which defines rarity with respect to real-world priors rather than generator-induced densities. To this end, we introduce \emph{JEPA guidance}, a diffusion sampling framework guided by a Joint-Embedding Predictive Architecture (JEPA)---a class of world models that encode broad, semantically rich representations. JEPA guidance steers diffusion trajectories toward low-density regions under the implicit density induced by the JEPA, thereby aligning generated minorities with real-world semantic rarity. To make JEPA guidance computationally practical, we develop principled approximation strategies accompanied by theoretical error bounds, significantly reducing the overhead of guidance computation. Extensive experiments across unconditional, class-conditional, and text-to-image generation demonstrate that JEPA guidance consistently improves the fidelity and semantic validity of minority samples, outperforming generator-centric baselines in capturing real-world notions of rarity. Code is available at \url{https://github.com/soobin-um/jepa-guidance}.
\end{abstract}

\section{Introduction}

\emph{Minority samples} refer to underrepresented instances that reside in low-density regions of the data manifold~\citep{yu2020inclusive, sehwag2022generating}. These samples exhibit distinctive characteristics rarely observed in majority (high-density) regions, making them valuable for applications such as medical diagnosis~\citep{um2024dont}, anomaly detection~\citep{du2023dream}, and creative AI~\citep{rombach2022high, han2022rarity}. 
However, synthesizing such underrepresented samples---known as \emph{minority sampling}---remains challenging, as generative models tend to favor high-density regions~\citep{sehwag2022generating}. Recent advances in diffusion models~\citep{song2019generative, ho2020denoising} have opened new possibilities, with their strong capacity to model complex data distributions enabling substantial progress in this area~\citep{um2024dont, um2024self, um2025minority, um2025boostandskip}.

\begin{figure}[!t]
    \centering
    \includegraphics[width=1.0\columnwidth]{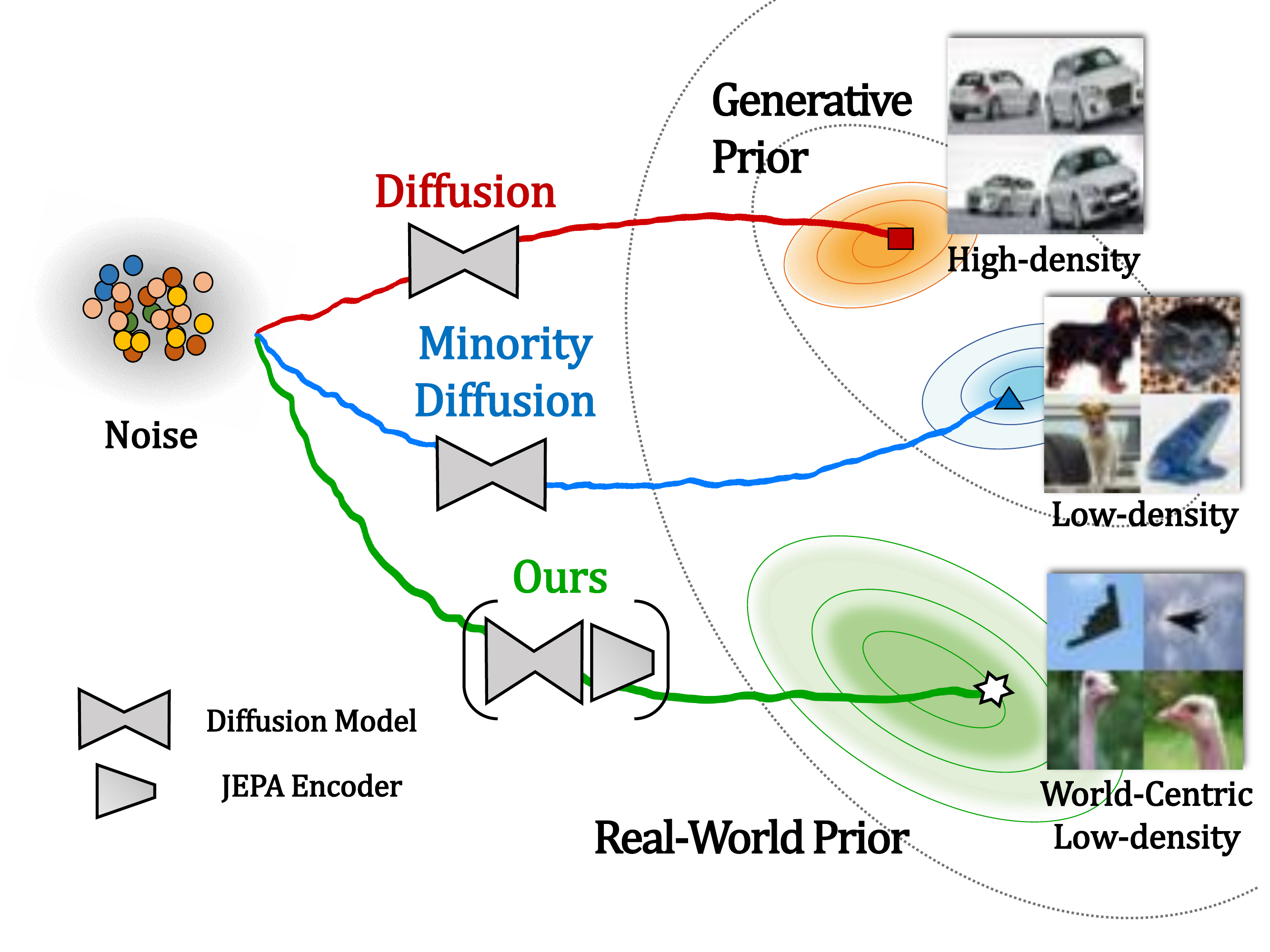} 
    \vspace{-5mm}
    \caption{
        \textbf{Beyond generative priors: world-centric minority sampling.}
        Existing minority-guidance methods (blue) target low-density regions \emph{within} the learned generative prior~\citep{um2025boostandskip}, producing samples that are rare only under a specific training distribution (\eg, a dog on a white background).
        Our approach (green) leverages a JEPA encoder---a promising candidate for world models~\citep{lecun2022path}---to guide diffusion toward low-density regions under a \emph{world prior}, generating samples that are genuinely rare in the real world (\eg, stealth aircraft).
    }
    \label{fig:main_figure}
    \vspace{-5mm}
\end{figure}

Despite these advances, existing minority samplers are fundamentally constrained by the prior implicitly defined by the generative model itself~\citep{um2024dont, um2024self, um2025minority, um2025boostandskip}. They identify minority samples only relative to the model's own learned density, which merely captures a specific training set, leading to fragmented and model-specific notions of minority (\cref{fig:main_figure}). This limitation becomes critical when considering minority definitions grounded in broader, real-world semantics. In particular, current approaches cannot generate minority samples with respect to priors captured by world models~\citep{ha2018world, lecun2022path}---an increasingly central component in the pursuit of artificial general intelligence.

In this work, we move beyond generative priors and introduce a new paradigm: \emph{world-centric} minority sampling aligned with real-world priors.
Our approach, \emph{JEPA Guidance}, is a guided sampler that generates minority instances with respect to the implicit prior of Joint-Embedding Predictive Architectures (JEPAs)~\citep{lecun2022path, assran2023self}. JEPAs are a promising class of world models~\citep{lecun2022path} whose learned representations have been shown to encode data density, providing a natural proxy for real-world priors.

Specifically at inference time, we first estimate the densities of intermediate latent samples induced by an off-the-shelf JEPA encoder (\eg, DINOv2~\citep{oquab2023dinov2}). We then guide the diffusion sampling process toward low-density regions under this JEPA-induced prior, thereby encouraging the generation of samples that are rare with respect to the world prior rather than the generator's own density. However, we observe that naive optimization of JEPA-based densities using existing formulations~\citep{balestriero2025gaussian} is computationally prohibitive during sampling. To address this, we incorporate efficient approximation techniques based on random matrix theory~\citep{halko2011finding} with a theoretical upper bound, substantially reducing computational overhead. To further enhance practical applicability, we provide techniques to address the domain gap between the diffusion model and the JEPA encoder, along with an extension to conditional generation tasks.

Our extensive experiments demonstrate that JEPA guidance substantially improves the ability of diffusion models to produce high-fidelity minority samples aligned with real-world priors. We show that JEPA guidance is broadly applicable across a wide range of generative settings, including unconditional and class-conditional image generation, text-to-image models (\eg, Stable Diffusion~\citep{rombach2022high}), and even few-step models such as SDXL-Lightning~\citep{lin2024sdxl}. To further demonstrate practical applicability, we explore a downstream application: data-augmented classification. Importantly, JEPA guidance is entirely training-free and can be readily integrated at inference time using only pretrained diffusion and JEPA models. Moreover, our approach is condition-agnostic: the JEPA encoder does not require access to conditioning information used by the diffusion model (\eg, class labels or text prompts), allowing JEPA guidance to be applied without modifying either model.

Our contributions are summarized as follows:
\begin{compactitem}
    \item We introduce a new perspective on minority sampling based on atypicality defined by real-world priors, rather than generative priors.
    \item We propose \emph{JEPA guidance}, a diffusion-based method that steers sampling toward low-density regions under JEPA-induced priors.
    \item We empirically demonstrate that JEPA guidance generates high-fidelity minority samples that are atypical under world priors.
\end{compactitem}

\section{Related Work}

\noindent \textbf{Minority Sampling with Diffusion Models.}
The task of generating minority samples has been widely explored within the diffusion modeling framework, exploiting its capacity to access low-density regions of the data distribution. Early efforts in this direction predominantly relied on classifier-guided diffusion~\citep{dhariwal2021diffusion}, in which auxiliary classifiers provide gradient-based signals to steer the sampling process toward minority regions~\citep{sehwag2022generating, um2024dont}. Subsequent works have aimed to reduce the dependence on external classifiers by introducing self-contained minority guidance that can be computed solely using a pretrained diffusion model~\citep{um2024self}. This line of research has enabled minority sample generation across diverse settings, including text-to-image synthesis~\citep{um2025minority}. More recently, the approach in~\citet{um2025boostandskip} further improves practicality by proposing a guidance-free initialization strategy that encourages minority representations with only marginal additional computational cost. However, these methods remain fundamentally limited to defining minorities with respect to the generative model’s own prior, and therefore cannot produce minority samples under broader real-world priors, such as those induced by world models~\citep{ha2018world}. In contrast, our approach explicitly decouples the notion of minority from the generative prior by leveraging a world prior encoded in a JEPA, enabling minority sampling that is defined with respect to real-world representations rather than the generator itself.

\noindent \textbf{Joint-Embedding Predictive Architectures (JEPAs).}
JEPAs are a family of self-supervised learning methods that learn by predicting in a joint embedding space~\citep{lecun2022path, assran2023self}. The key idea is to encode both input $\bx$ and target ${\bs y}$ into a shared embedding space, then train a predictor to map from $\bx$'s embedding to ${\bs y}$'s embedding. This joint embedding-space prediction encourages learning semantic, high-level features rather than low-level details, making JEPAs a compelling candidate for modeling real-world priors~\citep{lecun2022path}. In practice, I-JEPA applies this principle to images by predicting the representations of masked target blocks from a context block~\citep{assran2023self}, while V-JEPA extends it to videos for capturing temporally consistent structures~\citep{assran2025v}. Among JEPA-family models, DINOs~\citep{oquab2023dinov2, simeoni2025dinov3} stand out for their scale (\eg, trained on 142M diverse images for DINOv2~\citep{oquab2023dinov2}) and strong performance across various downstream tasks. Recent work further shows that JEPAs implicitly encode the probability density of their training data through their learned representations, introducing \emph{JEPA-SCORE}~\citep{balestriero2025gaussian}---a density estimator defined via the Jacobian of a JEPA encoder. This finding highlights the potential of JEPAs as a source of real-world priors for applications such as outlier detection and data curation. In this work, we move beyond using this density signal solely for post-hoc ranking. Instead, we integrate it as guidance during diffusion sampling, reframing minority generation from being defined by the generative model’s narrow prior to being grounded in the broader, real-world representations captured by JEPAs.

\section{Background}

\subsection{Diffusion-Based Generative Models}

Diffusion models are a class of generative frameworks characterized by two paired processes: a forward diffusion process and a reverse denoising process. The forward process is an iterative noise-perturbation procedure that gradually corrupts a clean data point $\bx_0 \in \mathbb{R}^n$ into a fully noisy sample $\bx_T \in \mathbb{R}^n$. It is commonly defined as a sequence of Gaussian transitions parameterized by a variance schedule $\{ \beta_t \}_{t=1}^T$:
\begin{align}
\label{eq:forward}
    q( \bx_t | \bx_{t-1} ) = \N (\bx_t; \sqrt{1 - \beta_t} \bx_{t-1}, \beta_t \bI   ),
\end{align}
where $\{ \bx_t \}_{t=1}^{T-1}$ denote intermediate latent variables that share the same dimensionality as $\bx_0$. Notably, the forward process admits a closed-form marginal,
$q(\bx_t | \bx_0) = \mathcal{N}(\sqrt{\alpha_t}\bx_0, (1-\alpha_t)\bI)$, where $\alpha_t \coloneqq \prod_{s=1}^t (1-\beta_s)$. The reverse process corresponds to a progressive denoising procedure modeled by learnable Gaussian transitions: 
\begin{align}
\label{eq:reverse}
    p_\bth (\bx_{t-1} | \bx_t) \coloneqq \N ( \bx_{t-1} ; {\bs \mu}_\bth (\bx_t, t), {\bs \Sigma}_\bth (\bx_t, t) ),    
\end{align}
where the mean and covariance are parameterized by a neural network with parameters $\bth$. Training diffusion models amounts to learning this network, typically by optimizing a variational bound~\citep{ho2020denoising} or an equivalent denoising score-matching objective~\citep{song2020score}. In particular, the DDPM training~\citep{ho2020denoising} yields the mean parameterized with a noise prediction network:
\begin{align*}
    {\bs \mu}_\bth (\bx_t, t) = \frac{1}{\sqrt{1 - \beta_t}}\left( \bx_t - \frac{\beta_t}{\sqrt{1 - \alpha_t} }  \beps_\bth(\bx_t, t) \right),
\end{align*}
where $\beps_\bth(\bx_t, t)$ is trained to predict noise added accordance to~\eqref{eq:forward}. The variance is often fixed, \eg, ${\bs \Sigma}_\bth (\bx_t, t) = \beta_t \bI$~\citep{ho2020denoising}.

Once trained, data generation can be done by iteratively sampling from the reverse transitions in~\eqref{eq:reverse}, starting from $\bx_T \sim \mathcal{N}(\b0, \bI)$ and progressing toward $\bx_0$:
\begin{align}
\label{eq:sampling}
    \bx_{t-1} = {\bs \mu}_\bth (\bx_t, t) + {\bs \Sigma}_\bth (\bx_t, t)^{1/2} \bz, \quad \bz \sim \N({\bs 0}, \bI).
\end{align}

\noindent \textbf{Guided Sampling.} One instrumental feature of diffusion sampling is its amenability to post-hoc conditioning, commonly referred to as \emph{guidance}~\citep{song2020score}. Specifically, at each timestep $t$, an arbitrary energy-based gradient can be incorporated into~\eqref{eq:sampling} to encourage the sampling process to progress toward a desired direction~\citep{epstein2023diffusion}:
\begin{align}
\label{eq:guided_sampling}
    \bx_{t-1} = {\bs \mu}_\bth (\bx_t, t) + {\bs \Sigma}_\bth (\bx_t, t)^{1/2} \bz + \eta_t {\bs g}(\bx_t, t),
\end{align}
where ${\bs g}(\bx_t, t)$ is a user-defined guidance function, and $\eta_t$ denotes the (time-dependent) controller that modulates the strength of guidance.

\subsection{JEPA-SCORE: Implicit Density Learned by JEPAs}
\label{section:jepascore}

JEPA-SCORE is a density measure implicitly encoded by the learned representations of Joint-Embedding Predictive Architectures (JEPAs)~\citep{balestriero2025gaussian}. Given a pretrained JEPA encoder $f:\mathbb{R}^{n}\rightarrow\mathbb{R}^{d}$ (like DINOv2~\citep{oquab2023dinov2}), JEPA-SCORE is defined as the sum of the logarithms of the singular values of its Jacobian:
\begin{align}
\label{eq:jepascore}
    \text{JS}(\bx)
    \coloneqq
    \sum_{i=1}^{r}
    \log\!\left(\sigma_i\!\left(J_{f}(\bx)\right)\right),
\end{align}
where $J_{f}(\bx) \in \mathbb{R}^{d \times n}$ denotes the Jacobian of $f$ at $\bx$, $\sigma_i(\cdot)$ denotes the $i$-th singular value, and $r \coloneqq \operatorname{rank}(J_{f}(\bx))$.

Intuitively, JEPA-SCORE summarizes the local geometric effect of the JEPA encoder $f$ around the input $\bx$. The singular values of the Jacobian quantify how infinitesimal perturbations in the input space are amplified along orthogonal directions in the representation space. Prior work shows that JEPA-SCORE correlates with the underlying data density, assigning higher values to samples from denser regions and lower values to those from sparser regions of the data distribution~\citep{balestriero2025gaussian}.

In practice, computing JEPA-SCORE requires performing singular value decomposition (SVD) of the Jacobian. However, the Jacobian $J_{f}$ is often prohibitively large in realistic settings, making exact SVD computationally expensive. As a result, directly applying JEPA-SCORE in downstream tasks, particularly within iterative procedures such as diffusion sampling, becomes impractical.

\section{Method}

\subsection{Redefining Minorities: from Generator-Centric to World-Centric}

Our approach starts by reformulating the definition of minority samples, clarifying the limitations of conventional generator-centric perspectives. Minority samples are commonly defined as instances that reside in low-density regions of a data distribution. More formally, given a generative model with implicit density $p_\bth$, minorities can be characterized as~\citep{um2025boostandskip}
\begin{align}
\label{eq:minority_gen}
    {\cal S}_{\bth, \epsilon} \coloneqq \{ \bx \in {\cal M}_\bth : p_{\bth}(\bx) < \epsilon \},
\end{align}
where ${\cal M}_\bth$ denotes the data manifold induced by the generative model (\ie, the support of $p_\bth$), and $\epsilon$ is a small positive threshold. Existing minority sampling methods are explicitly designed to draw samples from~\eqref{eq:minority_gen}~\citep{sehwag2022generating, um2024dont, um2024self}, thereby defining minorities solely with respect to the generative model’s learned distribution. We refer to this formulation as \emph{generator-centric} minority sampling.

However, such a definition inherently ties the notion of minority samples to the training data of the generative model, which may be limited in scale and thus fail to faithfully reflect real-world priors. Moreover, it is susceptible to the inductive biases and modeling limitations of the generative model itself. As a result, samples that are rare under the generative prior do not necessarily correspond to genuinely rare or atypical instances in the real world; see~\cref{fig:motivation_knn} for instance.

To address this limitation, we propose to redefine minorities with respect to a world prior that is independent of the generative model. Specifically, we consider a representation-based prior induced by a pretrained JEPA, and define minority samples as those residing in low-density regions under this world-level representation~\citep{lecun2022path}. Let $f_\bphi$ denote a JEPA network parameterized by $\bphi$. In this setting, minority samples are formally defined as
\begin{align}
\label{eq:minority_world}
    {\cal S}_{\bphi, \epsilon} \coloneqq \{ \bx \in {\cal M}_\bphi : p_{\bphi}(\bx) < \epsilon \},
\end{align}
where $p_{\bphi}$ denotes the implicit density induced by the JEPA representations, and ${\cal M}_\bphi$ is the manifold induced by the JEPA representation. Our objective is to generate samples from~\eqref{eq:minority_world}, thereby performing \emph{world-centric} minority sampling.

\cref{fig:motivation} exhibits an illustrative comparison of the two definitions on CIFAR-10~\citep{krizhevsky2009learning}. As we can see, generator-centric minorities (defined by AvgkNN distance in the test set) are semantically dispersed across diverse classes, whereas world-centric minorities (defined by JEPA-SCORE) concentrate on specific semantic categories---notably ostriches and stealth aircraft, atypical instances often underrepresented in web-scale pretraining data for the JEPA encoder~\citep{oquab2023dinov2}. This observation suggests that world-centric minorities capture semantically meaningful atypicality, rather than mere statistical outliers in the training distribution. We provide an extended qualitative comparison of the two definitions on ImageNet $256 \times 256$ in~\cref{sec:comparison_definitions_in256}.

\begin{figure}[!t]
    \hspace{5.5mm}
    \centering
    \begin{subfigure}[t]{0.44\columnwidth}
        \centering
        \includegraphics[width=\linewidth]{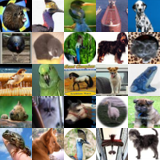}%
        \vspace{-1mm}
        \caption{Generator-minor}
        \label{fig:motivation_knn}
    \end{subfigure}
    \hfill
    \begin{subfigure}[t]{0.44\columnwidth}
        \centering
        \includegraphics[width=\linewidth]{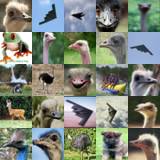}%
        \vspace{-1mm}
        \caption{World-minor}
        \label{fig:motivation_jepa}
    \end{subfigure}
    
    \vspace{1.0mm}
    
    \includegraphics[width=1.0\columnwidth]{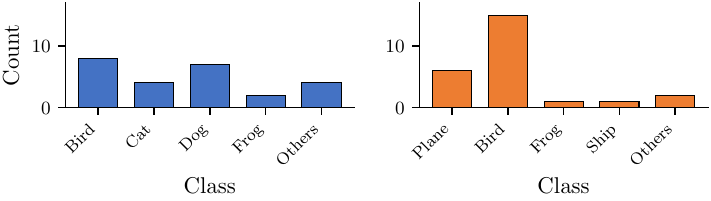}
    
    \vspace{-1mm}
    \caption{
        \textbf{Comparison of minority samples under generator-centric (a) and world-centric (b) definitions on CIFAR-10.} For each definition, we show the top-25 minority samples (top) and their class distribution (bottom). Generator-centric minorities, defined by AvgkNN distance in the test set, are semantically dispersed across classes. In contrast, world-centric minorities (via JEPA-SCORE with DINOv2~\citep{oquab2023dinov2}) concentrate on specific categories---notably birds (60\%) and planes (24\%), many of which are ostriches and stealth aircraft, atypical instances often underrepresented in the world prior captured by web-scale pretraining data of the JEPA. See~\cref{fig:comparison_definitions_in256} for more comparisons.
    }
    \label{fig:motivation}
    \vspace{-3mm}
\end{figure}

\subsection{JEPA Guidance: World-Centric Minority Sampling}
\label{sec:jepa_guidance}

We now introduce JEPA Guidance, a principled mechanism for implementing world-centric minority sampling with diffusion models. The key idea is to incorporate a JEPA encoder $f_\bphi$ to guide sampling toward low-density regions under the JEPA-induced prior $p_{\bphi}$. A naive realization of this idea is to directly compute JEPA-SCORE (defined in~\eqref{eq:jepascore}) at each sampling step and use its gradient as a guidance signal. However, as discussed in~\cref{section:jepascore}, computing JEPA-SCORE requires performing SVD on a large Jacobian, which is computationally prohibitive in practice.

\noindent \textbf{Approximation via Randomized SVD.}
To address this challenge, we leverage randomized SVD~\citep{halko2011finding} to efficiently approximate JEPA-SCORE. Let the SVD of the Jacobian of the JEPA encoder be
\begin{align*}
    J_f(\bx) = {\bs U} {\bs \Sigma} {\bs V}^\top \in \R^{d \times n},
\end{align*}
where ${\bs \Sigma}$ contains the singular values of $J_f(\bx)$. Randomized SVD constructs an orthonormal projection matrix $\Q \in \mathbb{R}^{d \times l}$ ($l \ll d$) such that $J_f(\bx) \approx \Q \Q^\top J_f(\bx)$~\citep{halko2011finding}. Using this projection, we define a compressed Jacobian
\begin{align*}
    \tilde J_f(\bx) \coloneqq \Q^\top J_f(\bx) \in \mathbb{R}^{l \times n},
\end{align*}
whose SVD can be computed efficiently:
\begin{align*}
    \tilde{J}_f(\bx) = \tilde{\bs U} \tilde{\bs \Sigma} \tilde{\bs V}^\top.
\end{align*}

We then approximate JEPA-SCORE using the top-$k$ singular values of the compressed Jacobian:
\begin{align}
\label{eq:jepascore_apx}
    \bar{\text{JS}}(\bx)
    \coloneqq
    \sum_{i=1}^{k}
    \log\big( \tilde{\sigma}_i \big( \tilde J_f(\bx) \big) \big),
\end{align}
where $\{ \tilde{\sigma}_i \}_{i=1}^k$ denote the leading singular values of $\tilde J_f(\bx)$. In practice, we set $k = l - p$, where $p$ is an oversampling parameter used in randomized SVD to improve approximation quality~\citep{halko2011finding}. When combined with $q$ steps of power iteration, randomized SVD can accurately recover the leading singular subspace~\citep{halko2011finding}, yielding $\tilde{\sigma}_i \approx \sigma_i$ for the top-$k$ components. Below we provide a theoretical bound on our approximation error; see~\cref{sec:proofs} for the proof.
\begin{proposition}
\label[proposition]{prop:main}
The approximation error of~\eqref{eq:jepascore_apx} against~\eqref{eq:jepascore} is upper bounded as:
\begin{align}
    \text{JS}(\bx) - \bar{\text{JS}}(\bx) \le \mathcal{E}_{\text{RSVD}}(\bx) + \mathcal{E}_{\text{Trunc}}(\bx),
\end{align}
where $\mathcal{E}_{\text{RSVD}}(\bx)$ denotes the error bound due to randomized SVD:
\begin{align}
    \mathcal{E}_{\text{RSVD}}(\bx)
    \coloneqq 
    \sum_{i=1}^{k}
    \log\left(
    1+C_{k,q}\frac{\sigma_{k+1}\big(J_{f}(\bx)\big)}
    {\tilde{\sigma}_{i}\big(\bar{J}_{f,k}(\bx)\big)}
    \right),
\end{align}
and $\mathcal{E}_{\text{Trunc}}(\bx)$ is the truncation error bound from disregarding $\{ \sigma_i \}_{i=k+1}^r$:
\begin{align}
    \mathcal{E}_{\text{Trunc}}(\bx)
    \coloneqq
    \frac{r-k}{2}\log\left(
    \frac{\big\lVert J_{f}(\bx)-\bar{J}_{f,k}(\bx) \big\rVert_{F}^2 }{r-k}
    \right).
\end{align}
Here $\bar{J}_{f,k}(\bx)$ is the rank-$k$ approximation of $J_f(\bx)$ via randomized SVD, and $C_{k,q} \coloneqq 1 + ( 1 + 4 \sqrt{2r / (k-1)} )^{1/(2q+1)}$ is a constant depending on rank $k$ and power iterations $q$~\citep{halko2011finding}.
\end{proposition}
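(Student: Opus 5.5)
The plan is to split $\text{JS}(\bx)$ into a head term over the top-$k$ singular values and a tail term over the remaining ones:
\begin{align*}
\text{JS}(\bx) - \bar{\text{JS}}(\bx) = \underbrace{\sum_{i=1}^{k}\big(\log\sigma_i - \log\tilde\sigma_i\big)}_{\text{(I)}} + \underbrace{\sum_{i=k+1}^{r}\log\sigma_i}_{\text{(II)}},
\end{align*}
with $\sigma_i=\sigma_i(J_f(\bx))$. I will then bound (I) by $\mathcal{E}_{\text{RSVD}}$ and (II) by $\mathcal{E}_{\text{Trunc}}$. Throughout I identify the singular values $\tilde\sigma_i(\tilde J_f)$ with those of the rank-$k$ randomized reconstruction $\bar J_{f,k}=\Q\,[\tilde J_f]_k$. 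This identification holds because $\Q$ has orthonormal columns, so left-multiplying by $\Q$ preserves singular values.

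For (I), I will use Weyl's inequality for singular values, $\sigma_i(A)\le\sigma_i(B)+\lVert A-B\rVert_2$. Applying it with $A=J_f(\bx)$ and $B=\bar J_{f,k}(\bx)$ gives $\sigma_i\le\tilde\sigma_i+\lVert J_f-\bar J_{f,k}\rVert_2$. Next I invoke the spectral-norm guarantee of randomized SVD with $q$ power iterations~\citep{halko2011finding}, in the form $\lVert J_f-\bar J_{f,k}\rVert_2\le C_{k,q}\,\sigma_{k+1}$. The projection error $\lVert J_f-\Q\Q^\top J_f\rVert_2$ is controlled by $(1+4\sqrt{2r/(k-1)})^{1/(2q+1)}\sigma_{k+1}$, following Halko et al.'s Cor.~10.10 and the power-scheme bound, with $\min(d,n)$ replaced by $r$ since only the rank-$r$ part matters. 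Truncating $\Q\Q^\top J_f$ to rank $k$ adds at most another $\sigma_{k+1}$; this is the source of the leading $1+$ in $C_{k,q}$. With these two facts, $\log\sigma_i-\log\tilde\sigma_i\le\log(1+C_{k,q}\sigma_{k+1}/\tilde\sigma_i)$, and summing over $i\le k$ yields $\mathcal{E}_{\text{RSVD}}$. One caveat is that the Halko bound holds in expectation or with high probability. I will state it as the deterministic form the proposition asserts, either absorbing the failure probability into the constant or noting that the bound is understood in that sense.

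For (II), I will use Jensen's inequality (concavity of $\log$) on the tail:
\begin{align*}
\sum_{i=k+1}^{r}\log\sigma_i=\frac{1}{2}\sum_{i=k+1}^{r}\log\sigma_i^2\le\frac{r-k}{2}\log\Big(\frac{1}{r-k}\sum_{i=k+1}^{r}\sigma_i^2\Big).
\end{align*}
By Eckart--Young, $\sum_{i>k}\sigma_i^2=\min_{\operatorname{rank}B\le k}\lVert J_f-B\rVert_F^2\le\lVert J_f-\bar J_{f,k}\rVert_F^2$. Since $\log$ is monotone, this gives $\mathcal{E}_{\text{Trunc}}$. Adding the bounds on (I) and (II) completes the proof.

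The main obstacle is the randomized-SVD step: reconciling the probabilistic Halko et al.\ bound, and the exact form of $C_{k,q}$, with the deterministic statement. I would first try to match the constant precisely by combining the power-iteration projection bound with the rank-$k$ truncation error via the triangle inequality, and I would flag the high-probability qualification explicitly.
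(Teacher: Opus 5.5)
Your proposal is correct and follows the paper's proof essentially step for step. It uses the same head/tail split, Weyl's inequality plus the Halko et al.\ spectral-norm bound $\lVert J_f-\bar J_{f,k}\rVert\le C_{k,q}\,\sigma_{k+1}(J_f)$ for the head, and Jensen's inequality with Eckart--Young (via $\lVert J_f-J_{f,k}\rVert_F\le\lVert J_f-\bar J_{f,k}\rVert_F$) for the tail. Your caveat about the randomized bound is one the paper's proof silently skips, and only your second option works: state the head bound on the event where the spectral-norm guarantee holds (the tail bound is deterministic). Absorbing the failure probability into the constant is impossible because a Gaussian sketch misses the top singular direction with positive probability, so no spectrum-independent constant can do it.
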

Notice that the overall approximation error decomposes into two terms: the estimation error from randomized SVD and the truncation error from discarding $\{ \sigma_i \}_{i=k+1}^r$. This implies that even when the top-$k$ singular values are accurately estimated, the approximation error can remain large if $k$ is chosen too small. However, we empirically find that $k \approx 10$ suffices for effective guidance, as the lower singular values could act as a near-constant offset across samples, contributing little to variations in atypicality. We provide an in-depth analysis in~\cref{sec:appendix_experiment}.

\noindent \textbf{Guidance with the Envelope Theorem.}
Our next step is to construct a guidance function that optimizes~\eqref{eq:jepascore_apx} during diffusion sampling. Since the JEPA encoder $f_\bphi$ operates on clean samples $\bx_0$, not the noisy latents $\bx_t$ encountered during inference, we evaluate it on a denoised estimate $\hat{\bx}_{0|t}$. This yields the following guidance term:
\begin{align*}
    \tilde{\bs g}(\bx_t, t) \coloneqq - \nabla_{\bx_t} \bar{\text{JS}}(\hat{\bx}_{0|t}),
\end{align*}
where $\hat{\bx}_{0|t} \coloneqq \big( \bx_t - \sqrt{1 - \alpha_t}\, \beps_\bth ( \bx_t, t ) \big) / \sqrt{\alpha_t}$ is the denoised estimate of $\bx_0$ given $\bx_t$~\citep{chung2023diffusion}. However, this formulation suffers from a critical limitation: the objective in~\eqref{eq:jepascore_apx} retains a computational graph with respect to $\bx_t$ through both $\Q$ and $\tilde{J}_f$, leading to excessive memory consumption and computational overhead. In particular, $\Q$ is obtained via an inner optimization that depends on $J_f(\hat{\bx}_{0|t})$~\citep{halko2011finding}, and thus implicitly on $\bx_t$.

\begin{algorithm}[t]
    \caption{JEPA-guided minority sampling}
    \label{alg:jg}
    \begin{algorithmic}[1]
        \STATE \textbf{Input:} $\beps_\bth, f_\bphi, T, N, \tau, k, p, q, \eta_t$
        \STATE ${\bs x}_T \sim {\cal N}({\bs 0}, {\bs I})$
        \FOR{$t = T, T-1, \ldots, 1$}
            \STATE ${\bs z} \sim {\cal N}({\bs 0}, {\bs I})$ if $t>1$, else ${\bs z} = {\bs 0}$
            \STATE ${\bs x}_{t-1} \gets \bs{\mu}_{\bs{\theta}}({\bs x}_t, t) + {\bs \Sigma}^{1/2}_{\bs \theta}({\bs x}_t, t) {\bs z}$
            \IF{$t < \tau T$ \textbf{and} $t \bmod N = 0$}
                \STATE $\hat{\bx}_{0|t} \gets (\bx_t - \sqrt{1 - \alpha_t} \beps_\bth(\bx_t, t)) / \sqrt{\alpha_t}$
                \STATE $J_f \gets \nabla_{\hat{\bx}_{0|t}} f_\bphi(\hat{\bx}_{0|t})$
                \STATE $\Q^* \gets \textsc{RandSVD}(J_f, k, p, q)$
                \STATE $\text{JS}^* \gets \sum_{i=1}^{k} \log\big( \tilde{\sigma}_i ( \texttt{sg}(\Q^{*\top}) J_f ) \big)$
                \STATE ${\bs x}_{t-1} \gets {\bs x}_{t-1} - \eta_t \nabla_{\bx_t} \text{JS}^*$
            \ENDIF
        \ENDFOR
        \STATE \textbf{Return} ${\bs x}_0$
    \end{algorithmic}
\end{algorithm}

\begin{figure*}[t]
    \centering
    \begin{subfigure}[t]{0.49\textwidth}
        \centering
        \begin{tabular}{@{}c@{\hspace{1mm}}c@{\hspace{1mm}}c@{}}
             DDIM &  MinorityPrompt &  Ours \\
            \includegraphics[width=0.32\linewidth]{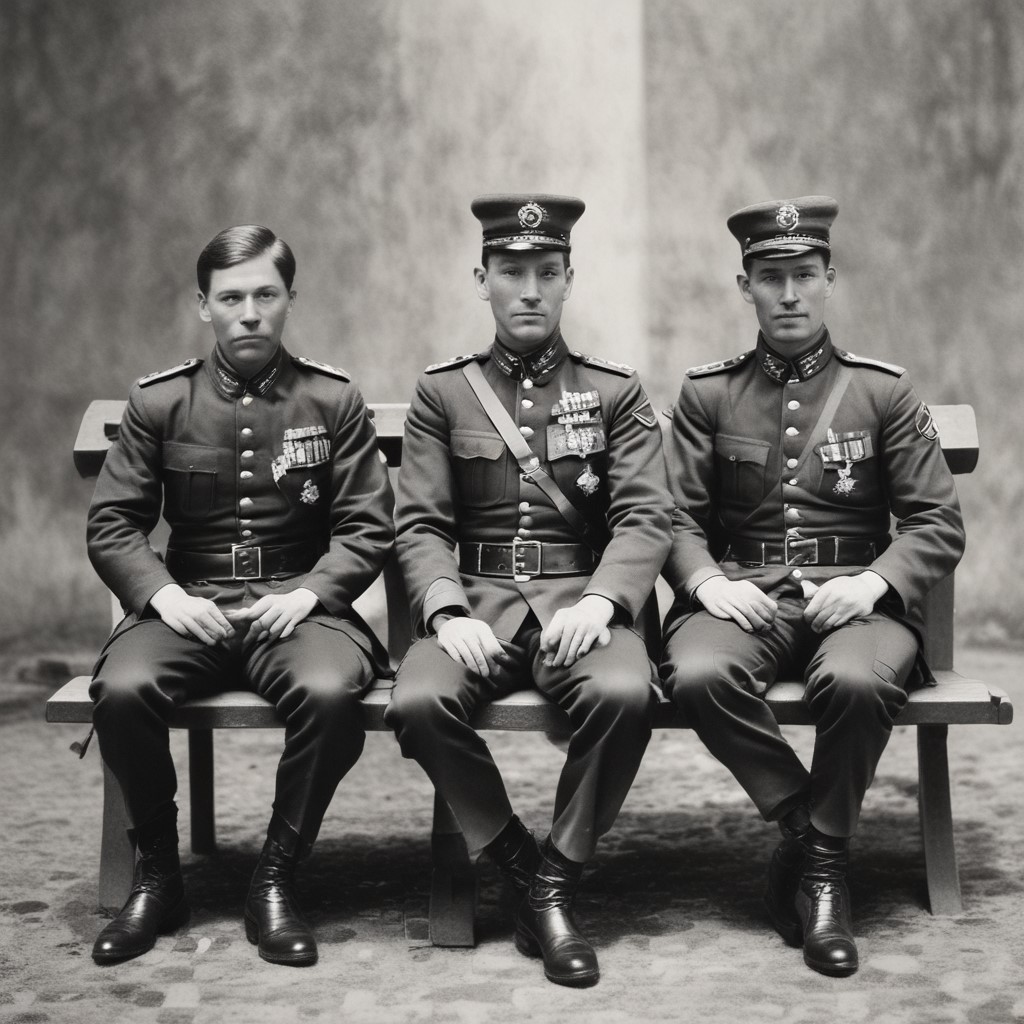} &
            \includegraphics[width=0.32\linewidth]{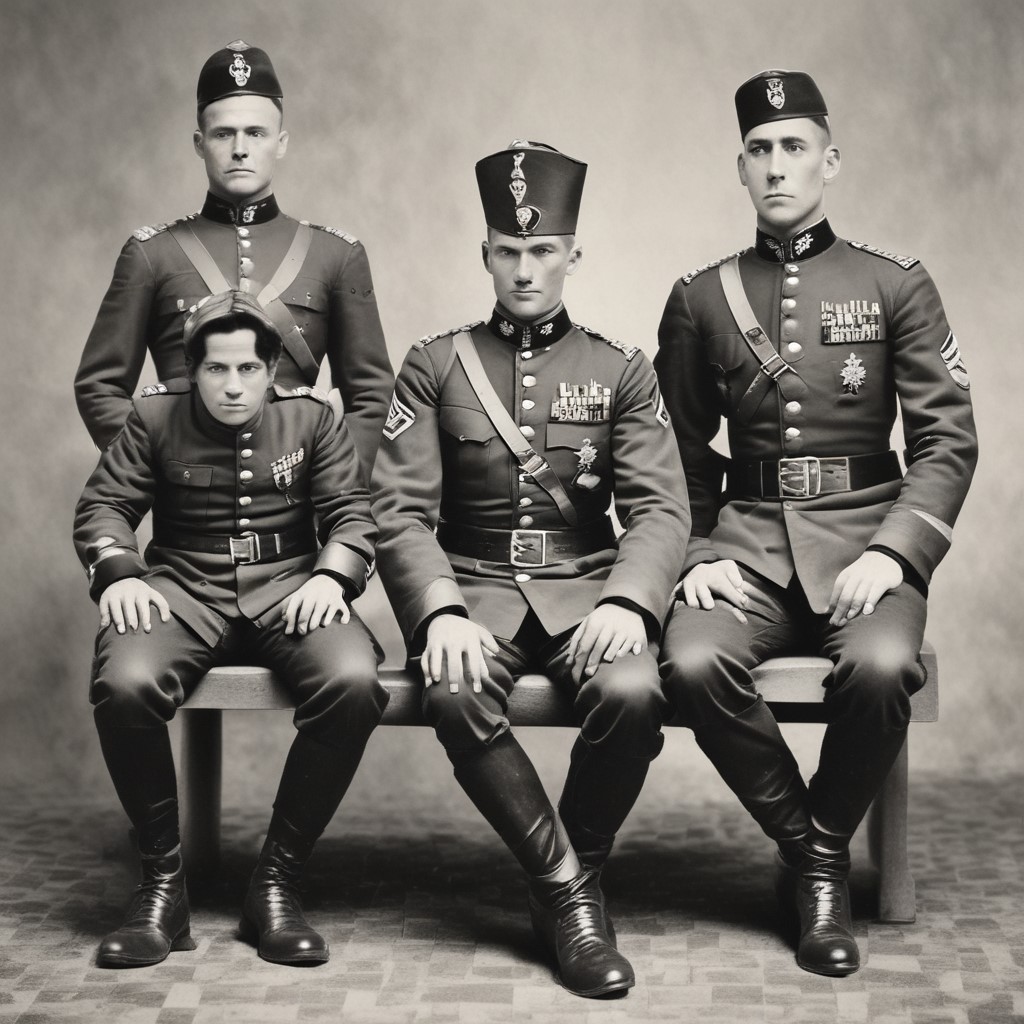} &
            \includegraphics[width=0.32\linewidth]{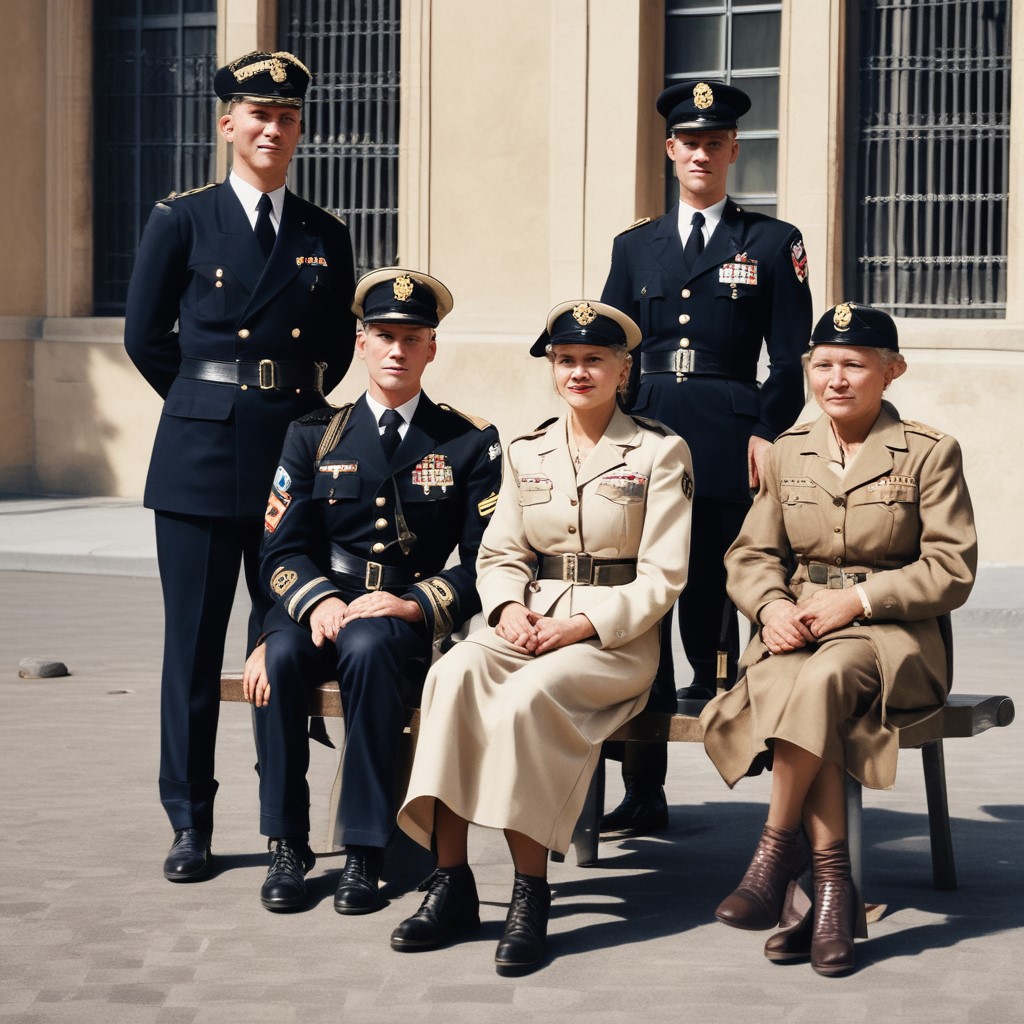} \\
        \end{tabular}
        \vspace{-2.5mm}
        \caption*{``Three persons with military attire seated on a bench''}
    \end{subfigure}
    \hfill
    \begin{subfigure}[t]{0.49\textwidth}
        \centering
        \begin{tabular}{@{}c@{\hspace{1mm}}c@{\hspace{1mm}}c@{}}
             DDIM &  MinorityPrompt &  Ours \\
            \includegraphics[width=0.32\linewidth]{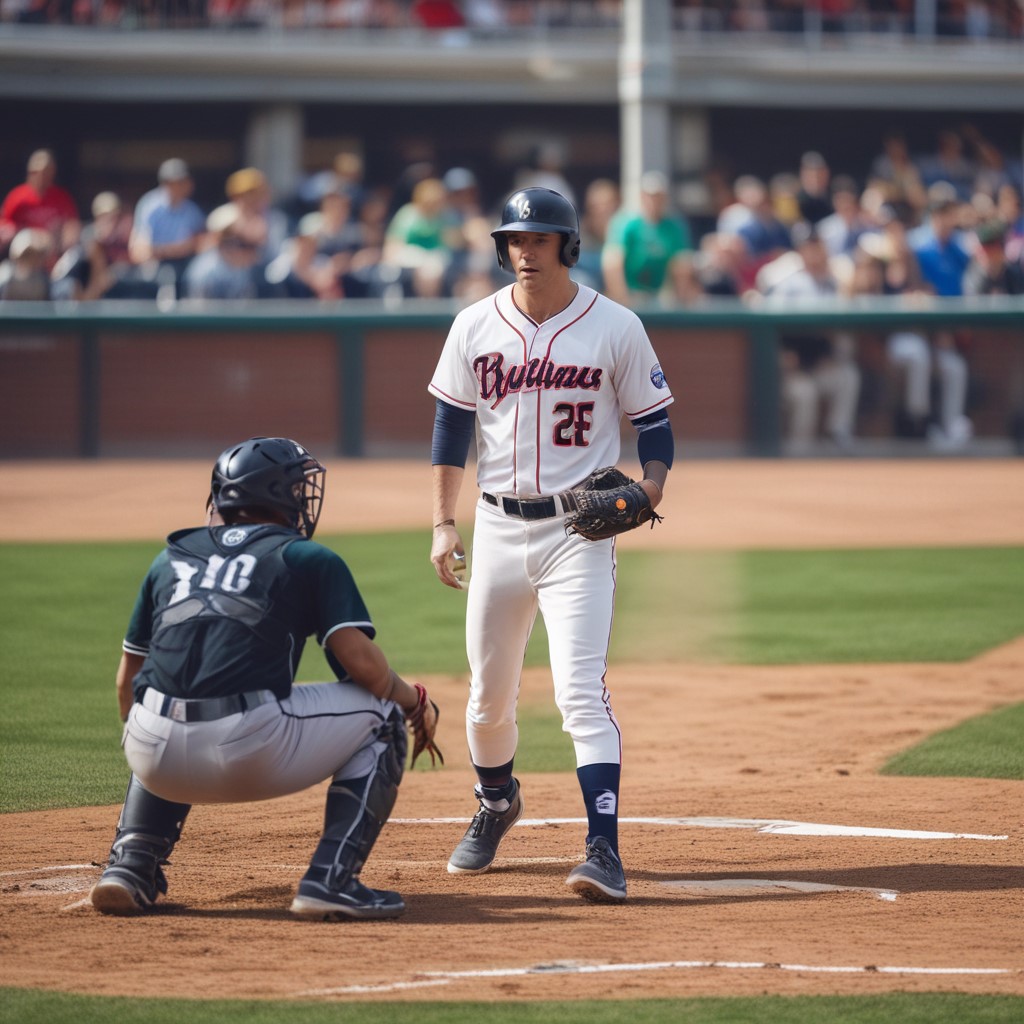} &
            \includegraphics[width=0.32\linewidth]{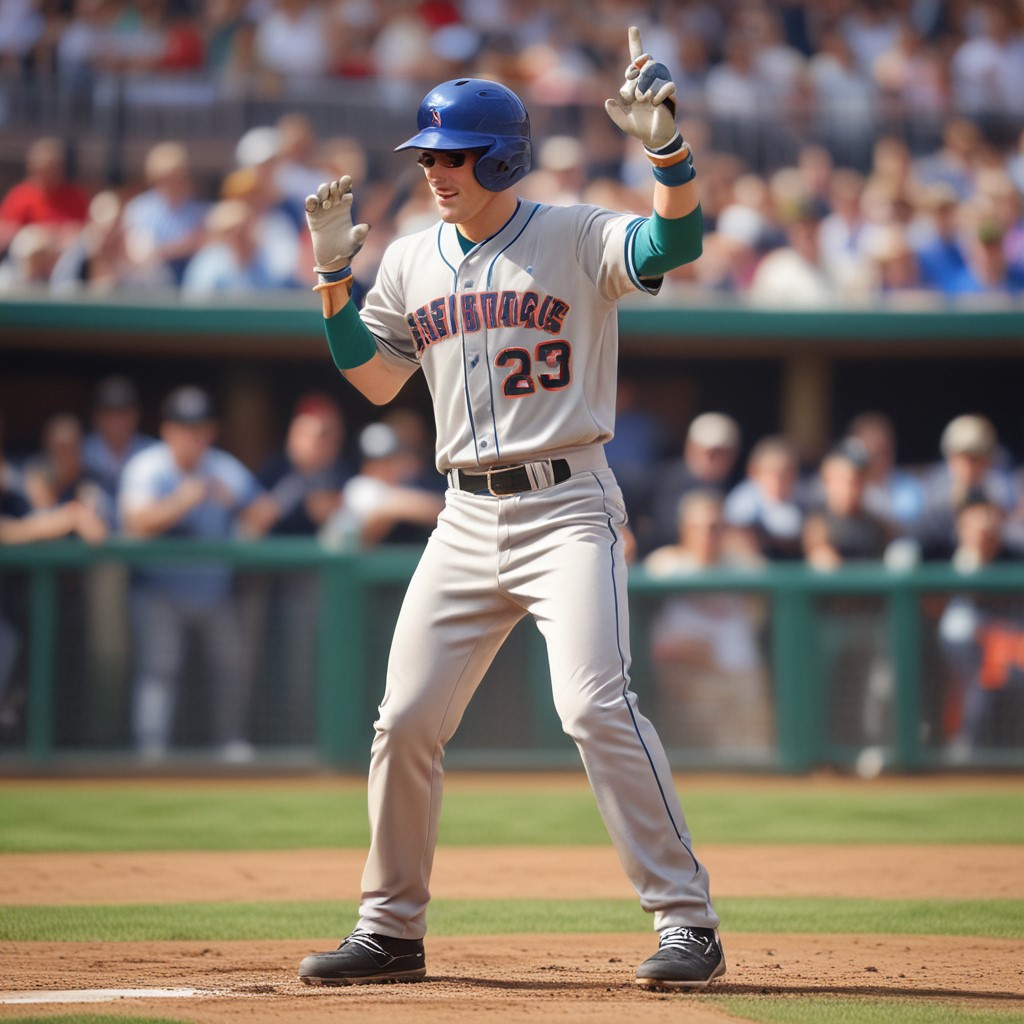} &
            \includegraphics[width=0.32\linewidth]{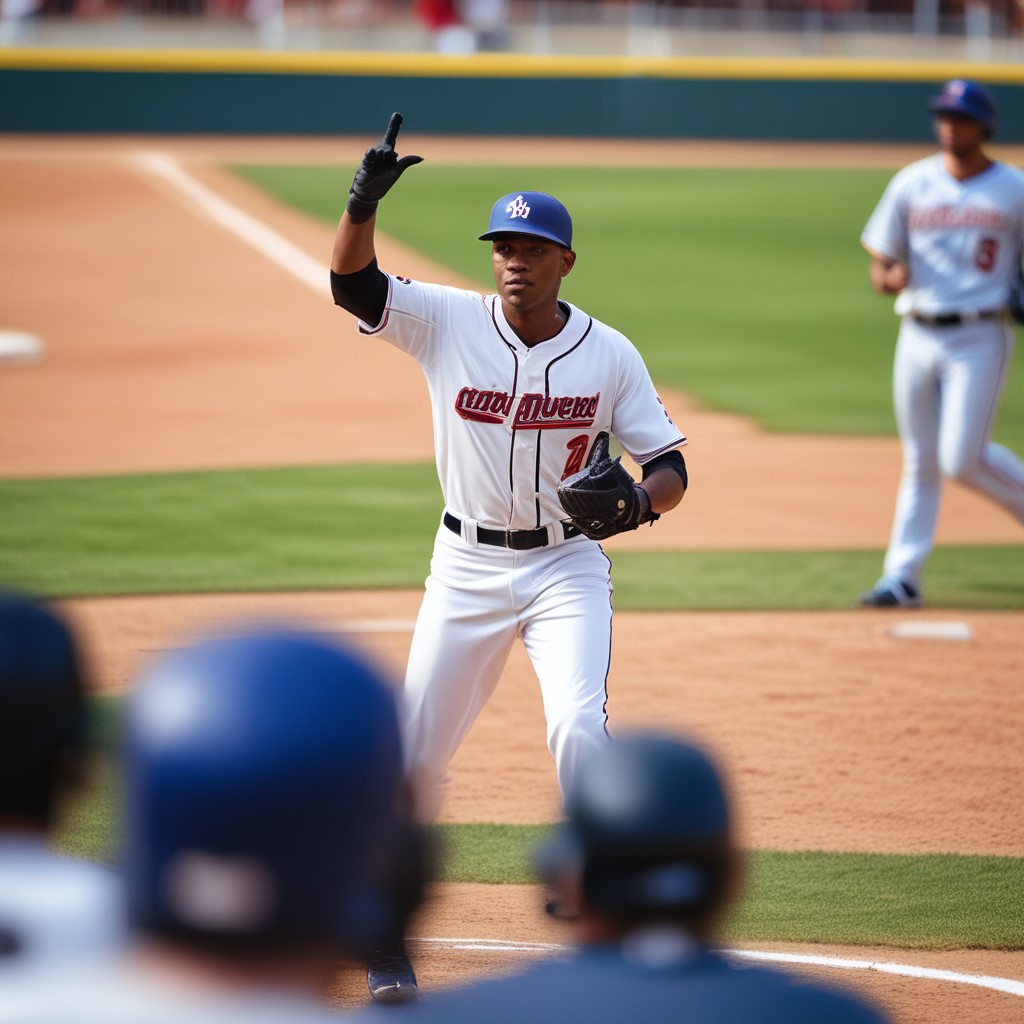} \\
        \end{tabular}
        \vspace{-2.5mm}
        \caption*{``A baseball player on the home plate during a game''}
    \end{subfigure}
    
    \vspace{2mm}

    \begin{subfigure}[t]{0.49\textwidth}
        \centering
        \begin{tabular}{@{}c@{\hspace{1mm}}c@{\hspace{1mm}}c@{}}
            \includegraphics[width=0.32\linewidth]{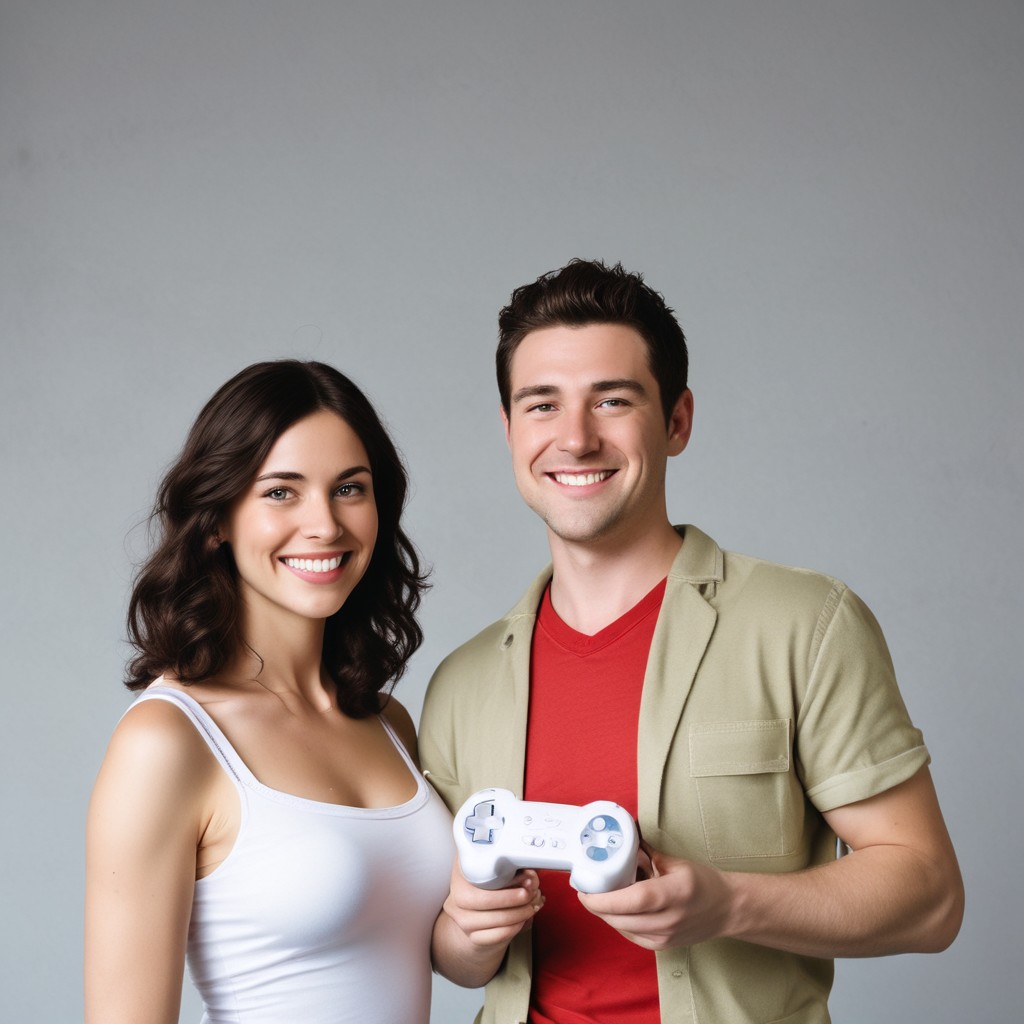} &
            \includegraphics[width=0.32\linewidth]{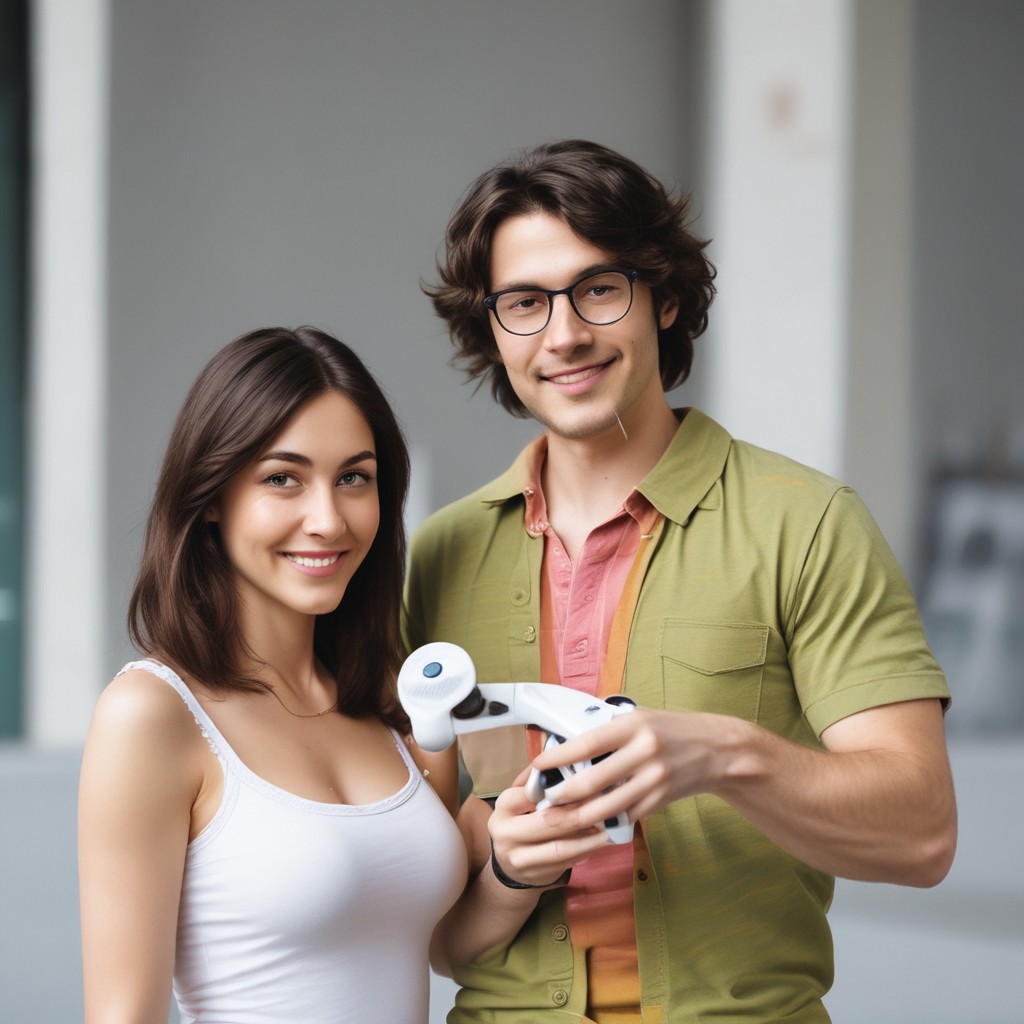} &
            \includegraphics[width=0.32\linewidth]{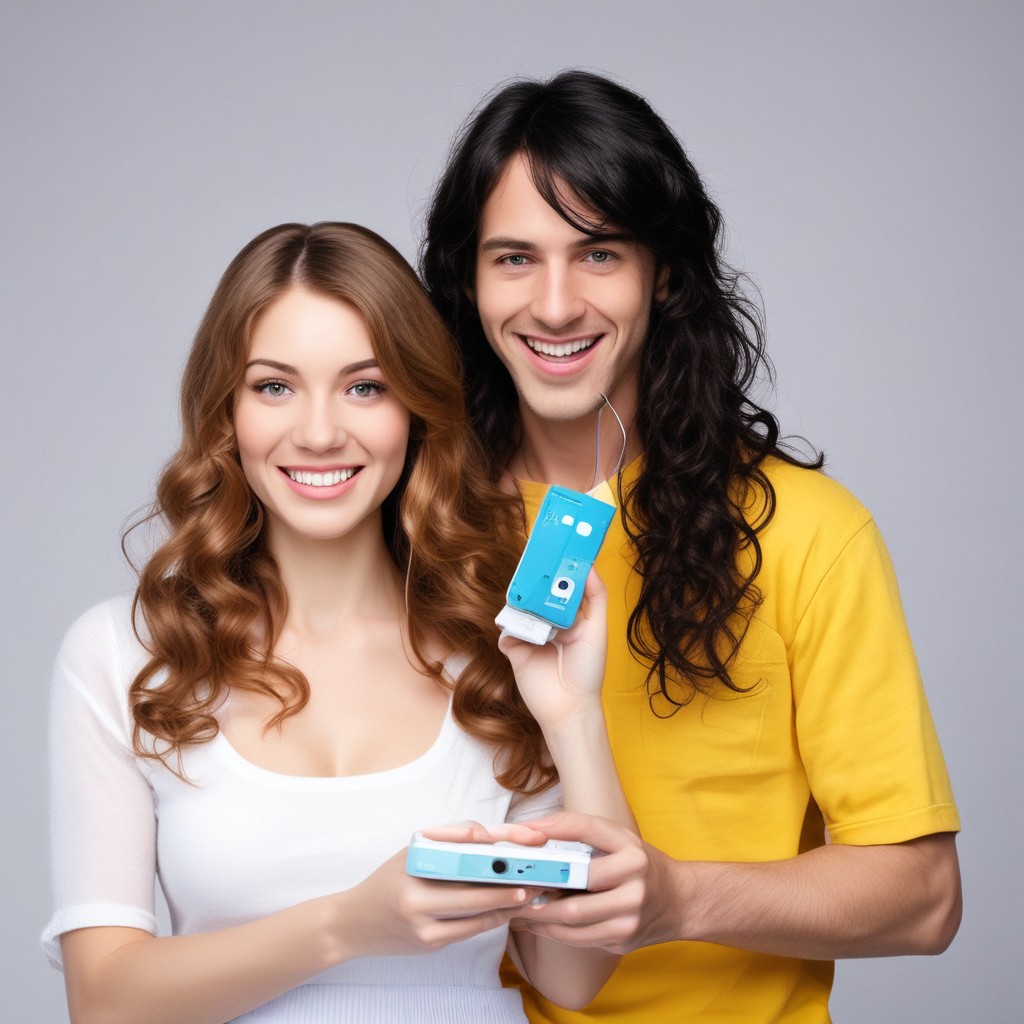} \\
        \end{tabular}
        \vspace{-2.5mm}
        \caption*{``A beautiful woman next to a man with a Wii controller''}
    \end{subfigure}
    \hfill
    \begin{subfigure}[t]{0.49\textwidth}
        \centering
        \begin{tabular}{@{}c@{\hspace{1mm}}c@{\hspace{1mm}}c@{}}
            \includegraphics[width=0.32\linewidth]{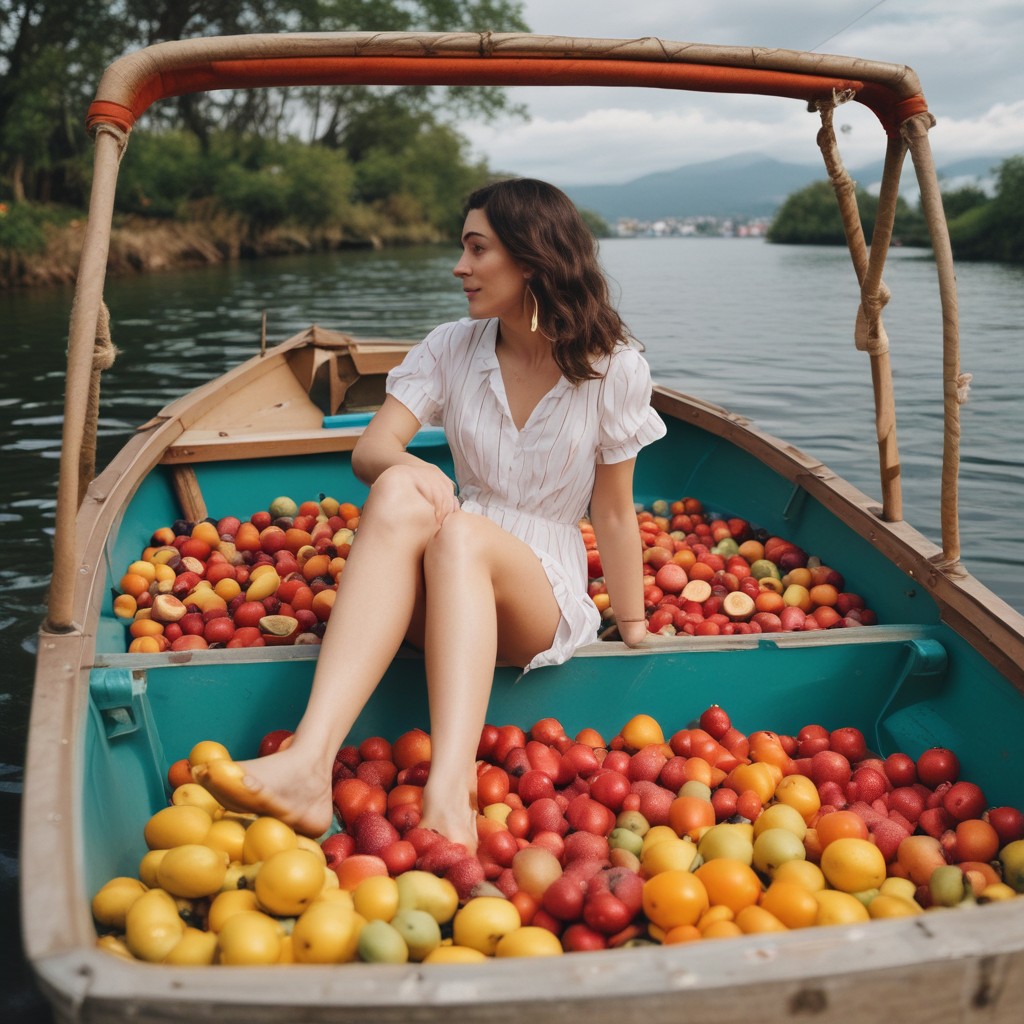} &
            \includegraphics[width=0.32\linewidth]{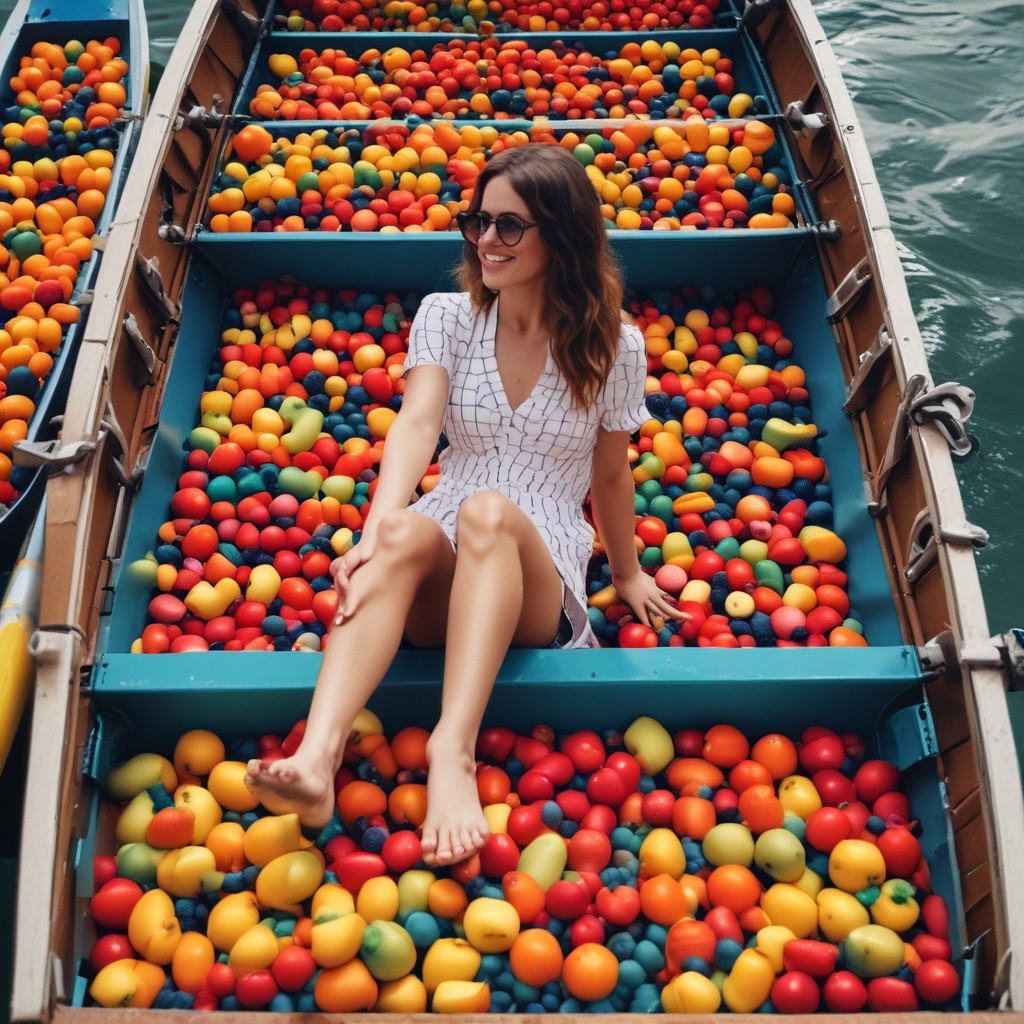} &
            \includegraphics[width=0.32\linewidth]{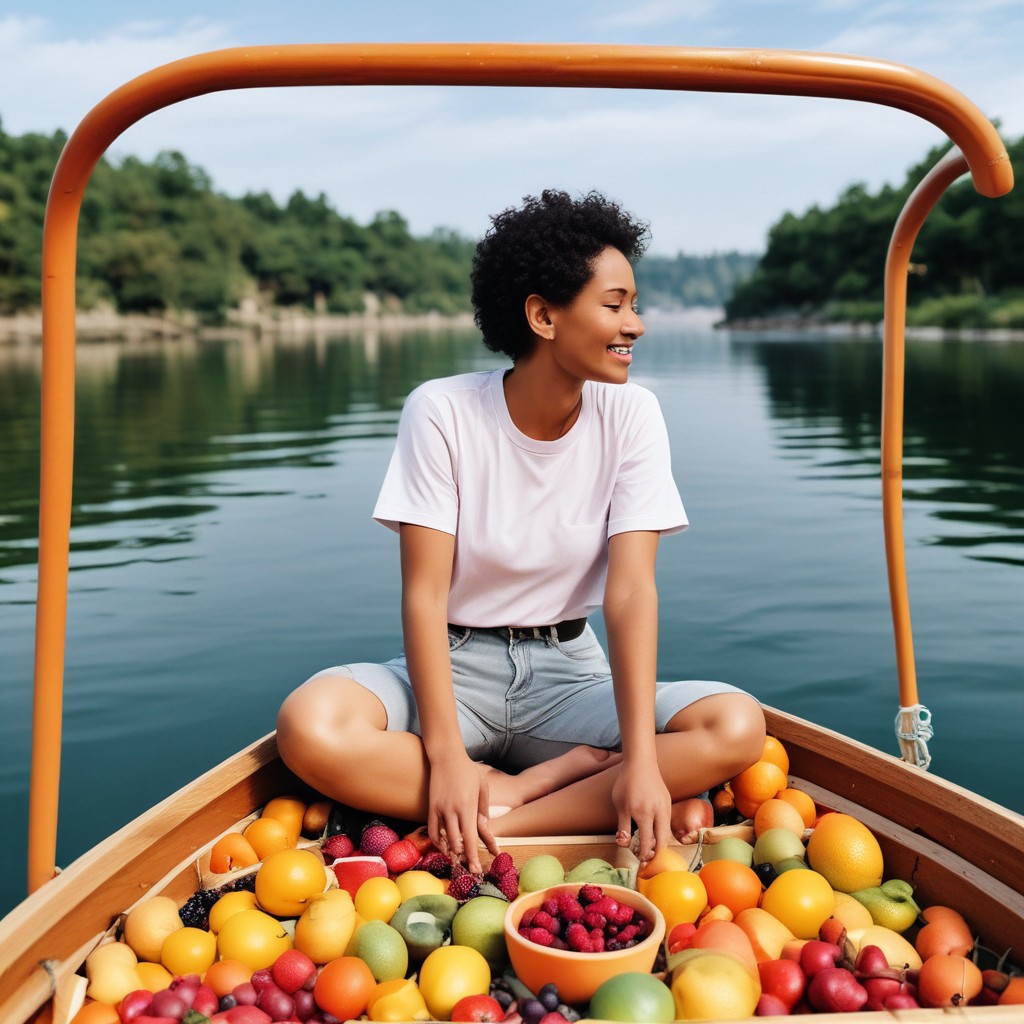} \\
        \end{tabular}
        \vspace{-2.5mm}
        \caption*{``A woman sitting in a boat filled with fruit''}
    \end{subfigure}
    
    \vspace{2mm}
    
    \begin{subfigure}[t]{0.49\textwidth}
        \centering
        \begin{tabular}{@{}c@{\hspace{1mm}}c@{\hspace{1mm}}c@{}}
            \includegraphics[width=0.32\linewidth]{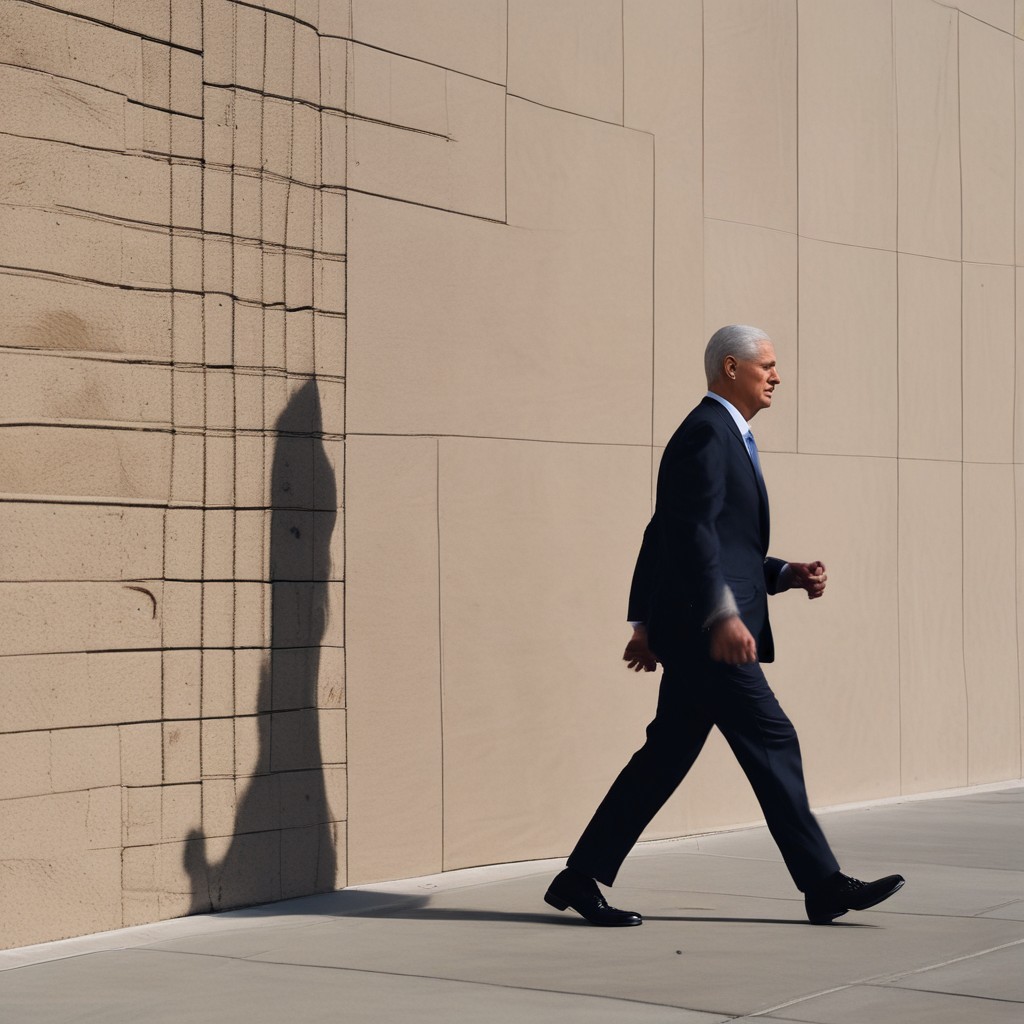} &
            \includegraphics[width=0.32\linewidth]{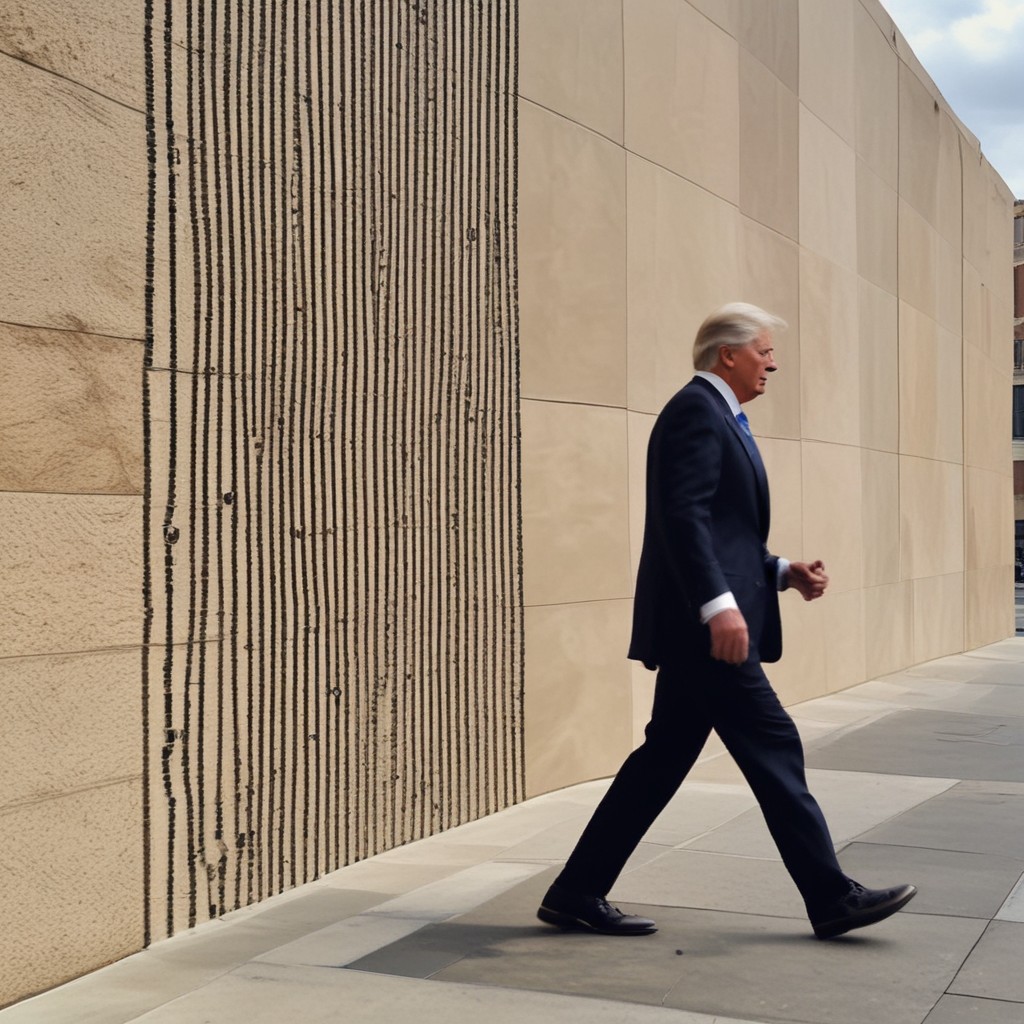} &
            \includegraphics[width=0.32\linewidth]{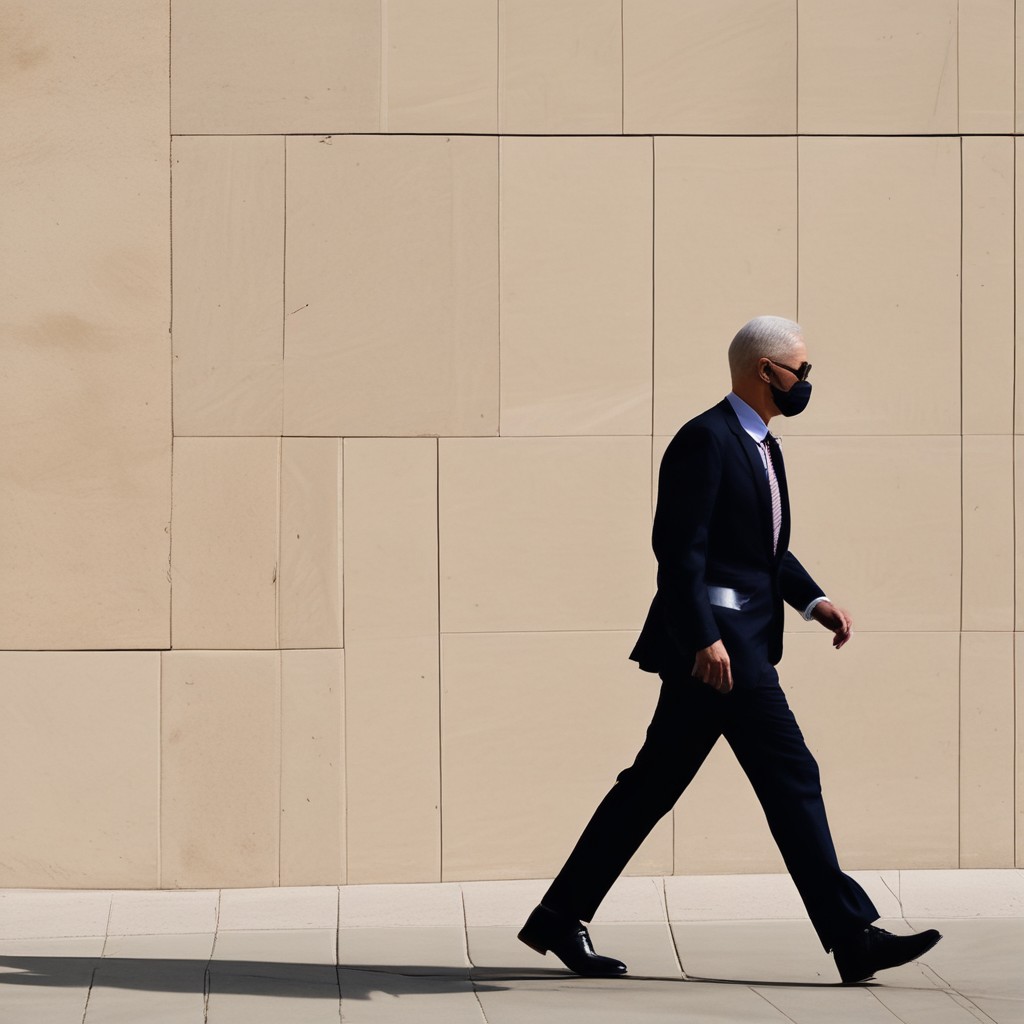} \\
        \end{tabular}
        \vspace{-2.5mm}
        \caption*{``The president of the US walking by a wall''}
    \end{subfigure}
    \hfill
    \begin{subfigure}[t]{0.49\textwidth}
        \centering
        \begin{tabular}{@{}c@{\hspace{1mm}}c@{\hspace{1mm}}c@{}}
            \includegraphics[width=0.32\linewidth]{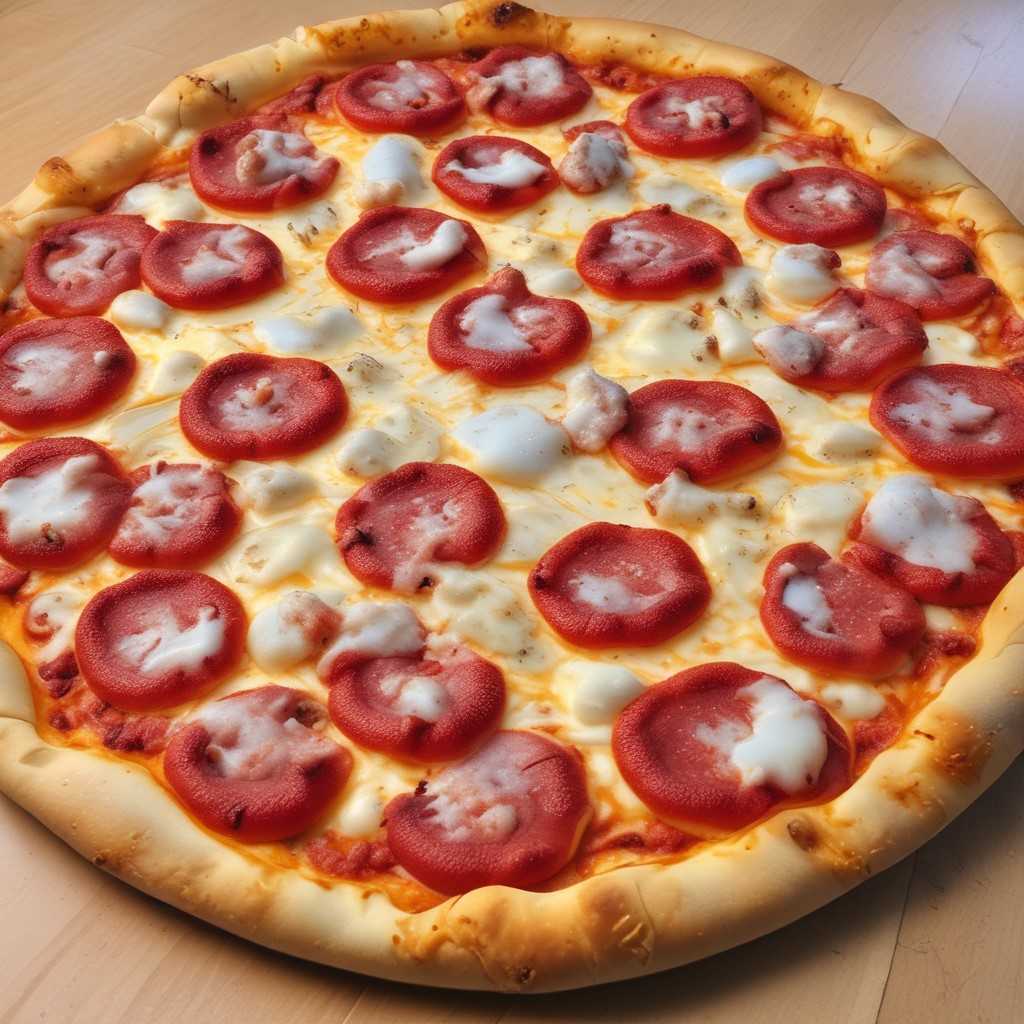} &
            \includegraphics[width=0.32\linewidth]{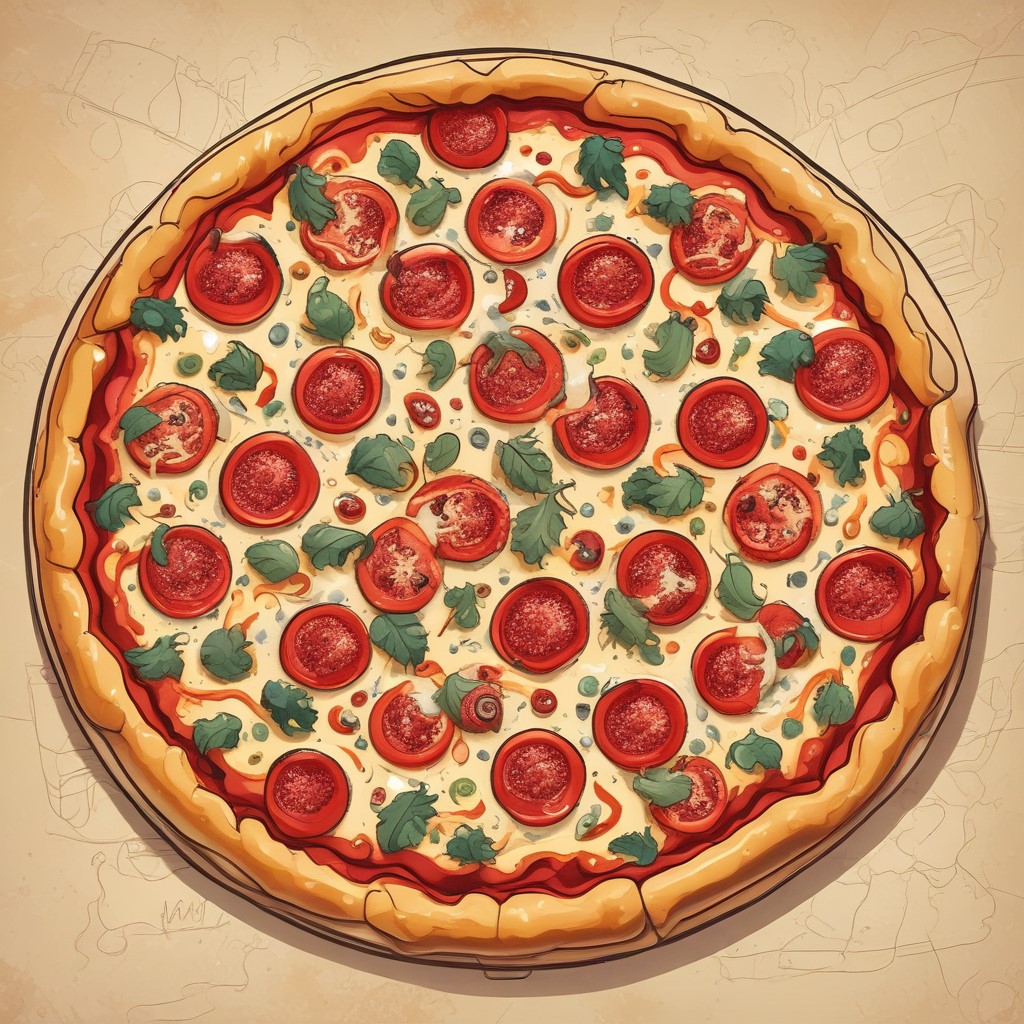} &
            \includegraphics[width=0.32\linewidth]{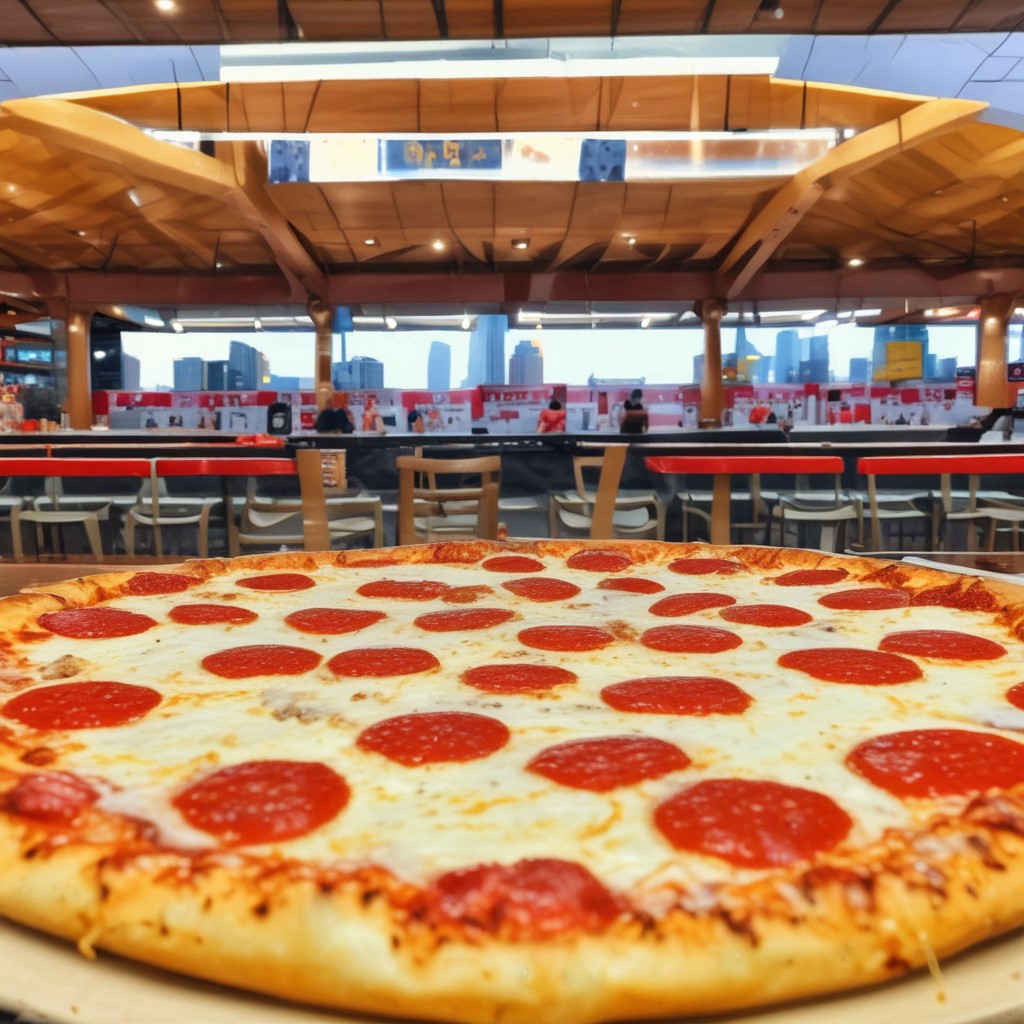} \\
        \end{tabular}
        \vspace{-2.5mm}
        \caption*{``This large pizza has a lot of cheese and tomato sauce''}
    \end{subfigure}
    
    \caption{\textbf{Sample comparison on SDXL-Lightning.} Generated samples from three approaches: (i) DDIM~\citep{song2020denoising}, (ii) MinorityPrompt~\citep{um2025minority}, and (iii) Ours. Six prompts were used, and random seeds were shared across all methods.}
    \label{fig:qualitative_t2i}
    \vspace{-4mm}
\end{figure*}

To address this issue, we invoke the Envelope Theorem~\citep{milgrom2002envelope}. At the optimum of the inner randomized SVD problem, the theorem guarantees that the gradient of the outer objective $\bar{\text{JS}}(\hat{\bx}_{0|t})$ with respect to $\bx_t$ can be computed by treating the optimal projection $\Q^*$ as a constant. As a result, the dependence of $\Q^*$ on $\bx_t$ can be safely ignored when differentiating~\eqref{eq:jepascore_apx}:
\begin{align*}
    \nabla_{\bx_t} \bar{\text{JS}}(\hat{\bx}_{0|t})
    =
    \nabla_{\bx_t} \text{JS}^*(\hat{\bx}_{0|t}),
\end{align*}
where $\text{JS}^*$ denotes the JEPA-SCORE evaluated with the optimal projection treated as a constant:
\begin{align*}
    \text{JS}^*(\hat{\bx}_{0|t})
    \coloneqq
    \sum_{i=1}^{k}
    \log\big(
    \tilde{\sigma}_i
    \big(
    \texttt{sg}(\Q^{*\top})
    J_f(\hat{\bx}_{0|t})
    \big)
    \big),
\end{align*}
and $\texttt{sg}(\cdot)$ denotes the stop-gradient operator~\citep{chen2021exploring}. This formulation eliminates backpropagation through the randomized SVD procedure, substantially reducing memory usage and computational overhead while preserving correct first-order gradients. The proposed JEPA guidance is then defined as:
\begin{align}
\label{eq:jepa_guidance}
    {\bs g}^*(\bx_t, t) \coloneqq - \nabla_{\bx_t} \text{JS}^*(\hat{\bx}_{0|t}).
\end{align}
From an optimization perspective, JEPA guidance can be interpreted as a bilevel problem: the inner level estimates a representation-level density via randomized SVD, while the outer level guides diffusion sampling toward low-density regions.

\noindent \textbf{Practical Considerations.} To further reduce computational overhead, we apply JEPA guidance intermittently, \eg, every $N$ sampling steps, following~\citet{um2024self}. For the guidance step size, we use either a variance-scaled schedule $\eta_t = \eta \, {\bs \Sigma}_\bth(\bx_t, t)$ or a constant $\eta_t = \eta$. Implementation details are provided in~\cref{sec:exp,sec:imp_details}, and a computational analysis in~\cref{sec:computation}.

\subsection{Addressing the Domain Gap in JEPA Guidance}

Although JEPA guidance uses denoised estimates $\hat{\bx}_{0|t}$ to bridge the gap between the diffusion model and the JEPA encoder, we found this alone to be insufficient. Applying~\eqref{eq:jepa_guidance} throughout the entire sampling process yields limited performance gains, which we attribute to the domain gap between the clean inputs $\bx_0$ expected by $f_\bphi$ and the denoised estimates $\hat{\bx}_{0|t}$---particularly in early timesteps where $\hat{\bx}_{0|t}$ remains noisy and blurry.

To address this, we propose \emph{deferred guidance}, which delays the start of JEPA guidance until an intermediate timestep $\tau T$. For instance, with $\tau = 0.8$, we begin JEPA guidance at $t = 0.8T$ rather than $t = T$, avoiding blurry denoised images during early sampling steps. We found this simple strategy to be highly effective; see~\cref{tab:ablation_tau} for empirical evidence.

\begin{table*}[t]
\hfill
\begin{subtable}[t]{0.49\textwidth}
    \centering
    
    \scalebox{0.7}{
    \begin{tabular}{lccccccc}
    \toprule
    \multicolumn{8}{l}{\textbf{CelebA 64$\times$64}} \\
     & cFID $\downarrow$ & sFID $\downarrow$ & Prec $\uparrow$ & Rec $\uparrow$ & Den $\uparrow$ & Cov $\uparrow$ & JEPA $\downarrow$ \\
    \midrule
    ADM & \udl{12.11} & \udl{6.35} & \textbf{0.85} & 0.57 & \textbf{1.28} & \underline{0.97} & \udl{-221.67} \\
    Sehwag et al. & 61.61 & 18.21 & 0.63 & 0.70 & 0.40 & 0.49 & -138.71 \\
    MG & 58.27 & 16.56 & 0.71 & 0.67 & 0.49 & 0.37 & -145.90 \\
    SGMS & 61.76 & 20.42 & 0.62 & \textbf{0.84} & 0.38 & 0.49 & -171.85 \\
    BnS & 67.10 & 15.65 & 0.55 & \udl{0.83} & 0.35 & 0.72 & -202.89 \\
    \tbcl Ours & \tbcl \textbf{8.50} & \tbcl \textbf{4.94} & \tbcl \underline{0.82} & \tbcl 0.65 & \tbcl \udl{1.09} & \tbcl \textbf{0.98} & \tbcl \textbf{-300.79} \\
    \bottomrule
    \end{tabular}
    }
    
\end{subtable}
\begin{subtable}[t]{0.49\textwidth}
    \centering

    \scalebox{0.7}{
    \begin{tabular}{lccccccc}
    \toprule
    \multicolumn{8}{l}{\textbf{ImageNet 256$\times$256}} \\
     & cFID $\downarrow$ & sFID $\downarrow$ & Prec $\uparrow$ & Rec $\uparrow$ & Den $\uparrow$ & Cov $\uparrow$ & JEPA $\downarrow$ \\
    \midrule
    ADM & \udl{26.44} & \udl{9.70} & \textbf{0.95} & 0.51 & \textbf{1.52} & \udl{0.96} & -102.01 \\
    Sehwag et al. & 42.33 & 10.39 & \udl{0.93} & 0.48 & \udl{1.42} & 0.92 & -71.82 \\
    MG & 34.84 & 10.42 & \udl{0.93} & 0.53 & 1.32 & 0.95 & -118.76 \\
    SGMS & 37.90 & 10.76 & 0.91 & \udl{0.58} & 1.15 & 0.94 & -114.94 \\
    BnS & 32.01 & 10.61 & 0.92 & 0.56 & 1.22 & \udl{0.96} & \underline{-125.77} \\
    \tbcl Ours & \tbcl \textbf{18.33} & \tbcl \textbf{7.62} & \tbcl 0.92 & \tbcl \textbf{0.68} & \tbcl 1.15 & \tbcl \textbf{0.99} & \tbcl \textbf{-241.62} \\
    \bottomrule
    \end{tabular}
    }
\end{subtable}
\hfill
\caption{\textbf{Quantitative comparison on unconditional and class-conditional generation.} Prec/Rec denote Improved Precision/Recall~\citep{kynkaanniemi2019improved}; Den/Cov denote Density/Coverage~\citep{naeem2020reliable}. JEPA denotes JEPA-SCORE (lower = more atypical). Best in \textbf{bold}, second best \underline{underlined}.}
\label{tab:main_canonical}
\vspace{-4mm}
\end{table*}

\begin{table*}[t]
\begin{subtable}[t]{0.5\textwidth}
    \centering
    
    \resizebox{\linewidth}{!}{
    \begin{tabular}{lcccccccc}
    \toprule
    \multicolumn{9}{l}{\textbf{SDv1.5}} \\
     & CLIP $\uparrow$ & Pick $\uparrow$ & ImgR $\uparrow$ & Prec $\uparrow$ & Rec $\uparrow$ & Den $\uparrow$ & Cov $\uparrow$ & JEPA $\downarrow$ \\
    \midrule
    DDIM & \underline{31.52} & \underline{21.49} & 0.21 & \textbf{0.71} & 0.72 & \textbf{0.64} & \textbf{0.77} & -292.27 \\
    CADS & 31.46 & 21.28 & 0.09 & \underline{0.68} & \underline{0.74} & \underline{0.56} & \underline{0.74} & -311.23 \\
    SGMS & 31.23 & 21.32 & 0.12 & 0.65 & 0.70 & 0.48 & 0.69 & -311.14 \\
    MinorityPrompt & \textbf{31.56} & 21.32 & \textbf{0.24} & 0.67 & 0.73 & 0.52 & 0.73 & \underline{-322.33} \\
    \tbcl Ours & \tbcl 31.46 & \tbcl \textbf{21.50} & \tbcl \underline{0.22} & \tbcl 0.67 & \tbcl \textbf{0.75} & \tbcl 0.53 & \tbcl 0.72 & \tbcl \textbf{-355.40} \\
    \bottomrule
    \end{tabular}
    }
    
\end{subtable}
\begin{subtable}[t]{0.5\textwidth}
    \centering
    \resizebox{\linewidth}{!}{
    \begin{tabular}{lcccccccc}
    \toprule
    \multicolumn{9}{l}{\textbf{SDXL-Lightning}} \\
     & CLIP $\uparrow$ & Pick $\uparrow$ & ImgR $\uparrow$ & Prec $\uparrow$ & Rec $\uparrow$ & Den $\uparrow$ & Cov $\uparrow$ & JEPA $\downarrow$ \\
    \midrule
    DDIM & \textbf{31.57} & \textbf{22.68} & \textbf{0.73} & \textbf{0.67} & 0.67 & \underline{0.51} & \textbf{0.66} & -283.04 \\
    CADS & 31.08 & 22.37 & 0.49 & \textbf{0.67} & 0.67 & \textbf{0.53} & \underline{0.65} & -276.60 \\
    SGMS & 31.36 & 22.58 & 0.68 & 0.58 & \textbf{0.77} & 0.38 & 0.58 & \underline{-318.03} \\
    MinorityPrompt & 31.36 & 22.62 & \underline{0.71} & 0.59 & \underline{0.72} & 0.39 & 0.57 & -302.17 \\
    \tbcl Ours & \tbcl \underline{31.52} & \tbcl \underline{22.63} & \tbcl \textbf{0.73} & \tbcl \underline{0.62} & \tbcl \underline{0.72} & \tbcl 0.45 & \tbcl 0.63 & \tbcl \textbf{-337.88} \\
    \bottomrule
    \end{tabular}
    }
    
\end{subtable}
\caption{\textbf{Quantitative comparison on text-to-image generation.} CLIP, Pick, ImgR denote CLIPScore, PickScore, ImageReward. Prec/Rec denote Improved Precision/Recall~\citep{kynkaanniemi2019improved}; Den/Cov denote Density/Coverage~\citep{naeem2020reliable}. JEPA denotes JEPA-SCORE (lower = more atypical). Best in \textbf{bold}, second best \underline{underlined}.}
\label{tab:main_t2i}
\vspace{-6mm}
\end{table*}

\subsection{Extension to Conditional Diffusion Models}

Deferred guidance offers an additional benefit: it enables JEPA guidance to extend to conditional generation, one of the most successful applications of modern diffusion models~\citep{rombach2022high}. The JEPA encoder $f_\bphi$ is inherently condition-agnostic, \ie, it cannot incorporate the conditioning information used by conditional diffusion models. As such, it is naturally unable to guide toward low-density regions while respecting a given condition, \eg, minimizing $p_\bphi(\bx_0 | {\bs c})$. 

However, deferred guidance naturally addresses this limitation by allowing the conditional diffusion model to sample unguided from $t=T$ to an intermediate timestep (\eg, $t=0.8T$). By this point, the samples have already incorporated substantial conditioning information. Consequently, JEPA guidance in the later stages steers samples toward low-density regions while preserving the conditional structure established earlier. This enables world-centric minority sampling in various conditional settings, including class-conditional~\citep{dhariwal2021diffusion} and text-to-image generation~\citep{rombach2022high}, without modifying the JEPA encoder; see~\cref{tab:main_canonical,tab:main_t2i}. We summarize JEPA guidance in~\cref{alg:jg}.

\begin{table*}[t]
\centering
\fontsize{8}{9}\selectfont
\begin{subtable}[t]{0.32\textwidth}
    \centering
    \scalebox{0.9}{
    \begin{tabular}{ccccc}
    \toprule
    $\eta$ & CLIP$\uparrow$ & Pick$\uparrow$ & ImgR$\uparrow$ & JEPA$\downarrow$ \\
    \midrule
    0.00 & \textbf{31.44} & 21.46 & 0.19 & -289.79 \\
    0.02 & 31.43 & \textbf{21.47} & 0.20 & -307.33 \\
    \tbcl 0.06 & \tbcl 31.36 & \tbcl \textbf{21.47} & \tbcl \textbf{0.20} & \tbcl \textbf{-344.76} \\
    \bottomrule
    \end{tabular}
    }
    \caption{Impact of guidance strength $\eta$}
    \label{tab:ablation_eta}
\end{subtable}
\hspace{1mm}
\begin{subtable}[t]{0.32\textwidth}
    \centering
    \scalebox{0.9}{
    \begin{tabular}{ccccc}
    \toprule
    $\tau$ & CLIP$\uparrow$ & Pick$\uparrow$ & ImgR$\uparrow$ & JEPA$\downarrow$ \\
    \midrule
    1.0 & 31.26 & 21.33 & 0.11 & -356.22 \\
    0.9 & 31.31 & 21.42 & 0.20 & -356.72 \\
     \tbcl  0.8 &  \tbcl  \textbf{31.40} & \tbcl   \textbf{21.46} & \tbcl   \textbf{0.23} &  \tbcl  \textbf{-360.82} \\
    \bottomrule
    \end{tabular}
    }
    \caption{Influence of deferring ratio $\tau$}
    \label{tab:ablation_tau}
\end{subtable}
\hspace{1mm}
\begin{subtable}[t]{0.32\textwidth}
    \centering
    \scalebox{0.9}{
    \begin{tabular}{ccccc}
    \toprule
    $k$ & CLIP$\uparrow$ & Pick$\uparrow$ & ImgR$\uparrow$ & JEPA$\downarrow$ \\
    \midrule
    3 & \textbf{31.56} & \textbf{22.59} & 0.72 & -325.35 \\
    \tbcl  9 & \tbcl  31.52 & \tbcl  \textbf{22.59} & \tbcl  0.70 & \tbcl  \textbf{-344.85} \\
      15 &   31.53 &   22.58 &   \textbf{0.73} &   -335.28 \\
    \bottomrule
    \end{tabular}
    }
    \caption{Effect of RSVD rank $k$}
    \label{tab:ablation_k}
\end{subtable}

\vspace{-0.5mm}

\begin{subtable}[t]{\textwidth}

\begin{subtable}[t]{0.32\textwidth}
    \centering
    \scalebox{0.9}{
    \begin{tabular}{ccccc}
    \toprule
    \multicolumn{5}{l}{\textbf{DINOv2-ViT-S/14}} \\
    & CLIP$\uparrow$ & Pick$\uparrow$ & ImgR$\uparrow$ & JEPA$\downarrow$ \\
    \midrule
    0.00 & \textbf{31.44} & 21.46 & 0.19 & -289.79 \\
    0.04 & 31.43 & \textbf{21.47} & 0.21 & -324.59 \\
      0.08 &   31.37 &   \textbf{21.47} &   \textbf{0.22} &   \textbf{-361.52} \\
    \bottomrule
    \end{tabular}
    }
\end{subtable}
\hspace{1mm}
\begin{subtable}[t]{0.32\textwidth}
    \centering
    \scalebox{0.9}{
    \begin{tabular}{ccccc}
    \toprule
    \multicolumn{5}{l}{\textbf{DINOv2-ViT-L/14}} \\
    & CLIP$\uparrow$ & Pick$\uparrow$ & ImgR$\uparrow$ & JEPA$\downarrow$ \\
    \midrule
    0.00 & 31.44 & 21.46 & 0.19 & -1650.16 \\
    0.04 & \textbf{31.45} & \textbf{21.47} & \textbf{0.21} & -1687.41 \\
      0.08 &   31.41 &   21.46 &   \textbf{0.21} &   \textbf{-1739.36} \\
    \bottomrule
    \end{tabular}
    }
\end{subtable}
\hspace{1mm}
\begin{subtable}[t]{0.32\textwidth}
    \centering
    \scalebox{0.9}{
    \begin{tabular}{ccccc}
    \toprule
    \multicolumn{5}{l}{\textbf{MetaCLIP}} \\
    & CLIP$\uparrow$ & Pick$\uparrow$ & ImgR$\uparrow$ & JEPA$\downarrow$ \\
    \midrule
    0.00 & \textbf{31.44} & \textbf{21.46} & \textbf{0.19} & -722.89 \\
    0.04 & 31.40 & \textbf{21.46} & 0.18 & -749.07 \\
      0.08 &   31.37 &   21.45 &   0.17 &   \textbf{-776.52} \\
    \bottomrule
    \end{tabular}
    }
\end{subtable}
\caption{Impact of JEPA encoder $f_\bphi$}
\label{tab:ablation_jepa}
\end{subtable}
\vspace{-1mm}
\caption{\textbf{Exploring the design space of JEPA guidance.} We study the impact of (a) guidance step size $\eta$, (b) deferral ratio $\tau$, (c) rank $k$ for randomized SVD, and (d) choice of JEPA encoder. CLIP, Pick, ImgR denote CLIPScore, PickScore, ImageReward. JEPA denotes JEPA-SCORE (lower = more atypical).}
\vspace{-5mm}
\label{tab:ablation}
\end{table*}

\section{Experiments}
\label{sec:exp}

\noindent \textbf{Datasets and Pretrained Models.}
We conduct experiments on three generative tasks: (i) unconditional, (ii) class-conditional, and (iii) text-conditional generation. For unconditional generation, we use CelebA $64 \times 64$~\citep{liu2015faceattributes} with the checkpoint from~\citet{um2024self}. For class-conditional generation, we use ImageNet $256 \times 256$~\citep{deng2009imagenet} with pretrained models from~\citet{dhariwal2021diffusion}. For text-conditional generation, we use Stable Diffusion 1.5~\citep{rombach2022high} and SDXL-Lightning (4-step)~\citep{lin2024sdxl}, evaluated on MS-COCO prompts~\citep{lin2014microsoft}.

\noindent \textbf{Baselines.}
We compare against various samplers, with a focus on minority sampling approaches. For CelebA and ImageNet, we consider four minority samplers: (i)~\citet{sehwag2022generating}, (ii) MG~\citep{um2024dont}, (iii) SGMS~\citep{um2024self}, and (iv) BnS~\citep{um2025boostandskip}. For text-conditional generation, we consider three minority (or diversity) methods: (i) CADS~\citep{sadat2023cads}, (ii) SGMS~\citep{um2024self}, and (iii) MinorityPrompt~\citep{um2025minority}. We also include standard samplers such as ADM~\citep{dhariwal2021diffusion} and DDIM~\citep{song2020denoising}.

\noindent \textbf{Evaluation Metrics.} 
We employ various metrics to evaluate quality and diversity of generated samples. For CelebA and ImageNet, we use: (i) Clean Fréchet Inception Distance (cFID)~\citep{parmar2022aliased}, (ii) Spatial FID (sFID)~\citep{nash2021generating}, (iii) Improved Precision \& Recall~\citep{kynkaanniemi2019improved}, and (iv) Density \& Coverage~\citep{naeem2020reliable}. To evaluate closeness to world-centric minorities, we use samples with the lowest JEPA-SCORE---the most atypical under the world prior---as reference data for computing these metrics, similar to~\citet{um2024dont, um2024self, um2025boostandskip}. For text-conditional generation, we use text-alignment metrics\footnote{We do not include FID for text-conditional generation, as it evaluates reproduction of MS-COCO images, which diverges from our focus on low-density generation.}: (i) CLIPScore~\citep{hessel2021clipscore}, (ii) PickScore~\citep{Kirstain2023PickaPicAO}, and (iii) ImageReward~\citep{xu2023imagereward}. For all settings, we use JEPA-SCORE (in~\eqref{eq:jepascore}) to measure the degree of atypicality. We further conduct a human preference study for text-conditional generation; see~\cref{sec:user_study}.

\subsection{Results}

\noindent \textbf{Qualitative Comparisons.}
\cref{fig:qualitative_t2i} compares generated samples from our approach against two baselines on SDXL-Lightning. While MinorityPrompt~\citep{um2025minority} generates samples with low-density features (\eg, complex textures or patterns) compared to standard samplers, JEPA guidance produces more globally atypical semantics, such as elderly female military personnel, African-American baseball players, and long-haired males (see the first and second rows). Additional samples are provided in~\cref{sec:additional_samples}; see~\cref{fig:celeba_comparison,fig:imgnet_comparison,fig:apdx_qualitative_t2i_sd15,fig:apdx_qualitative_t2i} therein.

\noindent \textbf{Quantitative Results.}
\cref{tab:main_canonical} shows the performance of JEPA guidance on unconditional and class-conditional generation. Our approach outperforms all baselines in generating high-fidelity world-centric minority samples across both tasks. Notably, existing minority samplers such as SGMS~\citep{um2024self} struggle to produce world-centric minorities with low JEPA-SCORE values, even underperforming standard sampling methods like ADM~\citep{dhariwal2021diffusion}. This suggests that world-centric minority sampling cannot be achieved by simply applying existing generator-centric approaches. \cref{tab:main_t2i} presents results on text-to-image generation. JEPA guidance achieves the lowest JEPA-SCORE while maintaining competitive text-alignment scores, demonstrating its effectiveness in this challenging setting. On SDv1.5, MinorityPrompt exhibits higher CLIPScore and ImageReward but lower PickScore. We find that PickScore better captures quality issues such as noise and artifacts that CLIPScore and ImageReward often miss (see~\cref{fig:apdx_qualitative_t2i_sd15}). Our approach remains effective with few-step models such as SDXL-Lightning, highlighting its practical applicability.

\noindent \textbf{Ablation Studies.}
\cref{tab:ablation} investigates the impact of key design choices in JEPA guidance. As expected, increasing the step size $\eta$ encourages greater atypicality in generated samples, with some trade-off in text-alignment. We also observe in~\cref{tab:ablation_tau} that deferred guidance is crucial: without it ($\tau=1.0$), both quality and text-alignment degrade significantly. \cref{tab:ablation_k} shows that increasing the rank $k$ of randomized SVD is generally beneficial, but with diminishing returns---improvements between $k=9$ and $k=15$ are marginal. We provide an analysis to support this phenomenon; see~\cref{sec:appendix_experiment}.~\cref{tab:ablation_jepa} demonstrates the performance across different JEPA encoders. JEPA guidance consistently improves the generation of world-centric atypical samples regardless of encoder choice, demonstrating its robustness.

\begin{table}[t]
\centering
\fontsize{8}{9}\selectfont
\scalebox{1.0}{
\begin{tabular}{lccccc}
\toprule
Training data & Acc $\uparrow$ & F1 $\uparrow$ & Prec $\uparrow$ & Rec $\uparrow$ & \#Aug $\downarrow$ \\
\midrule
CelebA trainset & 0.898 & 0.746 & 0.815 & 0.710 & -- \\
+ ADM samples & 0.897 & 0.742 & 0.808 & 0.711 & 50K \\
+ SGMS samples  & \textbf{0.903} & \underline{0.757} & \underline{0.822} & \underline{0.724} & 50K \\
+ BnS samples  & \underline{0.902} & 0.755 & 0.819 & 0.723 & 50K \\
\tbcl + Ours samples  & \tbcl \underline{0.902} & \tbcl \textbf{0.775} & \tbcl \textbf{0.824} & \tbcl \textbf{0.731} & \tbcl \textbf{30K} \\
\bottomrule
\end{tabular}
}
\caption{\textbf{Data augmentation for downstream classification.} ``Trainset'' denotes training on the CelebA training set only. Other rows indicate augmentation with samples from each method. ``\#Aug'' denotes the number of augmented samples. All results are evaluated on the CelebA testset and averaged over three runs.}
\label{tab:downstream}
\vspace{-7.5mm}
\end{table}

\noindent \textbf{Downstream Application.} To demonstrate the practical utility of JEPA guidance, we investigate its application in classifier training with synthetic data augmentation. Following~\citet{um2024self, um2025boostandskip}, we train ResNet-18 models to predict 40 attributes in CelebA using five configurations: (i) CelebA training set only; (ii)--(iv) augmented with 50K samples from ADM~\citep{dhariwal2021diffusion}, SGMS~\citep{um2024self}, and BnS~\citep{um2025boostandskip}, respectively; and (v) augmented with 30K samples from ours. Generated samples are labeled using an off-the-shelf classifier. As shown in~\cref{tab:downstream}, augmenting with our world-centric minority samples yields the best or comparable performance despite using fewer samples, suggesting that atypical samples under the world prior provide more informative training signals than generator-centric minorities.

\section{Conclusion}
We introduced a new perspective on minority sampling by redefining minorities with respect to world priors rather than generator-centric priors. To realize this, we proposed JEPA guidance, which steers diffusion sampling toward low-density regions under JEPA-induced priors. Our approach leverages randomized SVD and the envelope theorem for efficient computation, and employs deferred guidance to bridge the domain gap and enable conditional generation. Experiments demonstrate that JEPA guidance consistently generates high-fidelity world-centric minority samples across various settings.

\noindent \textbf{Limitations and Future Work.} While JEPA guidance is effective, it introduces additional computational overhead from Jacobian computation and randomized SVD at each guidance step. Exploring more efficient approximations or amortized approaches could further improve scalability. Additionally, investigating other world models beyond JEPA encoders, or combining multiple priors, may lead to richer notions of atypicality.

\section*{Impact Statement}
This paper presents a framework for generating atypical samples guided by world priors. We believe this work can benefit applications such as data augmentation, robustness testing, and creative content generation. It may also help improve fairness and inclusivity in downstream tasks that rely on generated samples.

However, we note a potential dual-use concern: by reversing the guidance direction (\ie, using $-\eta$ instead of $\eta$), our framework could be repurposed to generate high-density samples that reinforce existing biases in the world prior, potentially amplifying demographic or stylistic homogeneity. We encourage practitioners to use JEPA guidance responsibly and to consider the societal implications of generated content.

\section*{Acknowledgments}

This work was supported by the Agency for Defense Development grant funded by the Korean Government (No. 912A45701), the National Research Foundation of Korea (NRF) grant funded by the Korean Government (MSIT) (No. RS-2026-25541956), and the Institute of Information \& Communications Technology Planning \& Evaluation (IITP) grant funded by the Korean Government (MSIT) (No. RS-2025-02219317, AI Star Fellowship (Kookmin University)).


\bibliography{references}

@String(ICCV= {Int. Conf. Comput. Vis.})

@String(ICCV  = {ICCV})

@article{dhariwal2021diffusion,
  title={Diffusion models beat gans on image synthesis},
  author={Dhariwal, Prafulla and Nichol, Alexander},
  journal={Advances in neural information processing systems},
  volume={34},
  pages={8780--8794},
  year={2021}
}

@inproceedings{du2023dream,
  title={Dream the Impossible: Outlier Imagination with Diffusion Models}, 
  author={Xuefeng Du and Yiyou Sun and Xiaojin Zhu and Yixuan Li },
  booktitle={Advances in Neural Information Processing Systems},
  year = {2023}
}

@inproceedings{rombach2022high,
  title={High-resolution image synthesis with latent diffusion models},
  author={Rombach, Robin and Blattmann, Andreas and Lorenz, Dominik and Esser, Patrick and Ommer, Bj{\"o}rn},
  booktitle={Proceedings of the IEEE/CVF conference on computer vision and pattern recognition},
  pages={10684--10695},
  year={2022}
}

@article{han2022rarity,
  title={Rarity Score: A New Metric to Evaluate the Uncommonness of Synthesized Images},
  author={Han, Jiyeon and Choi, Hwanil and Choi, Yunjey and Kim, Junho and Ha, Jung-Woo and Choi, Jaesik},
  journal={arXiv preprint arXiv:2206.08549},
  year={2022}
}

@inproceedings{sehwag2022generating,
  title={Generating high fidelity data from low-density regions using diffusion models},
  author={Sehwag, Vikash and Hazirbas, Caner and Gordo, Albert and Ozgenel, Firat and Canton, Cristian},
  booktitle={Proceedings of the IEEE/CVF Conference on Computer Vision and Pattern Recognition},
  pages={11492--11501},
  year={2022}
}

@article{ho2020denoising,
  title={Denoising diffusion probabilistic models},
  author={Ho, Jonathan and Jain, Ajay and Abbeel, Pieter},
  journal={Advances in neural information processing systems},
  volume={33},
  pages={6840--6851},
  year={2020}
}

@inproceedings{yu2020inclusive,
  title={Inclusive gan: Improving data and minority coverage in generative models},
  author={Yu, Ning and Li, Ke and Zhou, Peng and Malik, Jitendra and Davis, Larry and Fritz, Mario},
  booktitle={European Conference on Computer Vision},
  pages={377--393},
  year={2020},
  organization={Springer}
}

@article{song2019generative,
  title={Generative modeling by estimating gradients of the data distribution},
  author={Song, Yang and Ermon, Stefano},
  journal={Advances in Neural Information Processing Systems},
  volume={32},
  year={2019}
}

@inproceedings{chen2021exploring,
  title={Exploring simple siamese representation learning},
  author={Chen, Xinlei and He, Kaiming},
  booktitle={Proceedings of the IEEE/CVF conference on computer vision and pattern recognition},
  pages={15750--15758},
  year={2021}
}

@article{song2020score,
  title={Score-based generative modeling through stochastic differential equations},
  author={Song, Yang and Sohl-Dickstein, Jascha and Kingma, Diederik P and Kumar, Abhishek and Ermon, Stefano and Poole, Ben},
  journal={arXiv preprint arXiv:2011.13456},
  year={2020}
}

@article{krizhevsky2009learning,
  title={Learning multiple layers of features from tiny images},
  author={Krizhevsky, Alex and Hinton, Geoffrey and others},
  journal={Master's thesis, University of Tront},
  year={2009},
  publisher={Toronto, ON, Canada}
}

@inproceedings{
chung2023diffusion,
title={Diffusion Posterior Sampling for General Noisy Inverse Problems},
author={Hyungjin Chung and Jeongsol Kim and Michael Thompson Mccann and Marc Louis Klasky and Jong Chul Ye},
booktitle={The Eleventh International Conference on Learning Representations },
year={2023},
url={https://openreview.net/forum?id=OnD9zGAGT0k}
}

@inproceedings{liu2015faceattributes,
  title = {Deep Learning Face Attributes in the Wild},
  author = {Liu, Ziwei and Luo, Ping and Wang, Xiaogang and Tang, Xiaoou},
  booktitle = {Proceedings of International Conference on Computer Vision (ICCV)},
  month = {December},
  year = {2015} 
}

@inproceedings{deng2009imagenet,
  title={Imagenet: A large-scale hierarchical image database},
  author={Deng, Jia and Dong, Wei and Socher, Richard and Li, Li-Jia and Li, Kai and Fei-Fei, Li},
  booktitle={2009 IEEE conference on computer vision and pattern recognition},
  pages={248--255},
  year={2009},
  organization={Ieee}
}

@inproceedings{parmar2022aliased,
  title={On aliased resizing and surprising subtleties in gan evaluation},
  author={Parmar, Gaurav and Zhang, Richard and Zhu, Jun-Yan},
  booktitle={Proceedings of the IEEE/CVF Conference on Computer Vision and Pattern Recognition},
  pages={11410--11420},
  year={2022}
}

@article{nash2021generating,
  title={Generating images with sparse representations},
  author={Nash, Charlie and Menick, Jacob and Dieleman, Sander and Battaglia, Peter W},
  journal={arXiv preprint arXiv:2103.03841},
  year={2021}
}

@article{kynkaanniemi2019improved,
  title={Improved precision and recall metric for assessing generative models},
  author={Kynk{\"a}{\"a}nniemi, Tuomas and Karras, Tero and Laine, Samuli and Lehtinen, Jaakko and Aila, Timo},
  journal={Advances in Neural Information Processing Systems},
  volume={32},
  year={2019}
}

@article{hessel2021clipscore,
  title={Clipscore: A reference-free evaluation metric for image captioning},
  author={Hessel, Jack and Holtzman, Ari and Forbes, Maxwell and Bras, Ronan Le and Choi, Yejin},
  journal={arXiv preprint arXiv:2104.08718},
  year={2021}
}

@article{paszke2019pytorch,
  title={Pytorch: An imperative style, high-performance deep learning library},
  author={Paszke, Adam and Gross, Sam and Massa, Francisco and Lerer, Adam and Bradbury, James and Chanan, Gregory and Killeen, Trevor and Lin, Zeming and Gimelshein, Natalia and Antiga, Luca and others},
  journal={Advances in neural information processing systems},
  volume={32},
  year={2019}
}

@inproceedings{lin2014microsoft,
  title={Microsoft coco: Common objects in context},
  author={Lin, Tsung-Yi and Maire, Michael and Belongie, Serge and Hays, James and Perona, Pietro and Ramanan, Deva and Doll{\'a}r, Piotr and Zitnick, C Lawrence},
  booktitle={Computer Vision--ECCV 2014: 13th European Conference, Zurich, Switzerland, September 6-12, 2014, Proceedings, Part V 13},
  pages={740--755},
  year={2014},
  organization={Springer}
}

@article{sadat2023cads,
  title={CADS: Unleashing the Diversity of Diffusion Models through Condition-Annealed Sampling},
  author={Sadat, Seyedmorteza and Buhmann, Jakob and Bradely, Derek and Hilliges, Otmar and Weber, Romann M},
  journal={arXiv preprint arXiv:2310.17347},
  year={2023}
}

@article{song2020denoising,
  title={Denoising diffusion implicit models},
  author={Song, Jiaming and Meng, Chenlin and Ermon, Stefano},
  journal={arXiv preprint arXiv:2010.02502},
  year={2020}
}

@inproceedings{um2024self,
  title={Self-guided generation of minority samples using diffusion models},
  author={Um, Soobin and Ye, Jong Chul},
  booktitle={European Conference on Computer Vision},
  pages={414--430},
  year={2024},
  organization={Springer}
}

@article{lin2024sdxl,
  title={Sdxl-lightning: Progressive adversarial diffusion distillation},
  author={Lin, Shanchuan and Wang, Anran and Yang, Xiao},
  journal={arXiv preprint arXiv:2402.13929},
  year={2024}
}

@inproceedings{Kirstain2023PickaPicAO,
  title={Pick-a-Pic: An Open Dataset of User Preferences for Text-to-Image Generation},
  author={Kirstain, Yuval and Polyak, Adam and Singer, Uriel and Matiana, Shahbuland and Penna, Joe and Levy, Omer},
  booktitle={Advances in Neural Information Processing Systems},
  volume={36},
  pages={36652--36663},
  year={2023}
}

@misc{xu2023imagereward,
      title={ImageReward: Learning and Evaluating Human Preferences for Text-to-Image Generation},
      author={Jiazheng Xu and Xiao Liu and Yuchen Wu and Yuxuan Tong and Qinkai Li and Ming Ding and Jie Tang and Yuxiao Dong},
      year={2023},
      eprint={2304.05977},
      archivePrefix={arXiv},
      primaryClass={cs.CV}
}

@inproceedings{naeem2020reliable,
  title={Reliable fidelity and diversity metrics for generative models},
  author={Naeem, Muhammad Ferjad and Oh, Seong Joon and Uh, Youngjung and Choi, Yunjey and Yoo, Jaejun},
  booktitle={International Conference on Machine Learning},
  pages={7176--7185},
  year={2020},
  organization={PMLR}
}

@inproceedings{
um2025boostandskip,
title={Boost-and-Skip: A Simple Guidance-Free Diffusion for Minority Generation},
author={Soobin Um and Beomsu Kim and Jong Chul Ye},
booktitle={Forty-second International Conference on Machine Learning},
year={2025},
url={https://openreview.net/forum?id=IH8OwjOGzM}
}

@article{balestriero2025gaussian,
  title={Gaussian Embeddings: How JEPAs Secretly Learn Your Data Density},
  author={Balestriero, Randall and Ballas, Nicolas and Rabbat, Mike and LeCun, Yann},
  journal={arXiv preprint arXiv:2510.05949},
  year={2025}
}

@misc{lecun2022path,
  author = {LeCun, Yann},
  title = {A Path Towards Autonomous Machine Intelligence},
  year = {2022},
  month = jun,
  url = {https://openreview.net/forum?id=BZ5a1r-kVsf}
}

@article{assran2025v,
  title={V-jepa 2: Self-supervised video models enable understanding, prediction and planning},
  author={Assran, Mido and Bardes, Adrien and Fan, David and Garrido, Quentin and Howes, Russell and Muckley, Matthew and Rizvi, Ammar and Roberts, Claire and Sinha, Koustuv and Zholus, Artem and others},
  journal={arXiv preprint arXiv:2506.09985},
  year={2025}
}

@inproceedings{assran2023self,
  title={Self-supervised learning from images with a joint-embedding predictive architecture},
  author={Assran, Mahmoud and Duval, Quentin and Misra, Ishan and Bojanowski, Piotr and Vincent, Pascal and Rabbat, Michael and LeCun, Yann and Ballas, Nicolas},
  booktitle={Proceedings of the IEEE/CVF Conference on Computer Vision and Pattern Recognition},
  pages={15619--15629},
  year={2023}
}

@inproceedings{um2024dont,
  author = {Um, Soobin and Lee, Suhyeon and Ye, Jong Chul},
  title = {Don't Play Favorites: Minority Guidance for Diffusion Models},
  booktitle = {The Twelfth International Conference on Learning Representations},
  year = {2024},
  url = {https://openreview.net/forum?id=3NmO9lY4Jn}
}

@inproceedings{um2025minority,
  title={Minority-Focused Text-to-Image Generation via Prompt Optimization},
  author={Um, Soobin and Ye, Jong Chul},
  booktitle={Proceedings of the Computer Vision and Pattern Recognition Conference},
  pages={20926--20936},
  year={2025}
}

@article{ha2018world,
  title={World models},
  author={Ha, David and Schmidhuber, J{\"u}rgen},
  journal={arXiv preprint arXiv:1803.10122},
  volume={2},
  number={3},
  year={2018}
}

@article{oquab2023dinov2,
  title={Dinov2: Learning robust visual features without supervision},
  author={Oquab, Maxime and Darcet, Timoth{\'e}e and Moutakanni, Th{\'e}o and Vo, Huy and Szafraniec, Marc and Khalidov, Vasil and Fernandez, Pierre and Haziza, Daniel and Massa, Francisco and El-Nouby, Alaaeldin and others},
  journal={arXiv preprint arXiv:2304.07193},
  year={2023}
}

@article{halko2011finding,
  title={Finding structure with randomness: Probabilistic algorithms for constructing approximate matrix decompositions},
  author={Halko, Nathan and Martinsson, Per-Gunnar and Tropp, Joel A},
  journal={SIAM review},
  volume={53},
  number={2},
  pages={217--288},
  year={2011},
  publisher={SIAM}
}

@inproceedings{
epstein2023diffusion,
title={Diffusion Self-Guidance for Controllable Image Generation},
author={Dave Epstein and Allan Jabri and Ben Poole and Alexei A Efros and Aleksander Holynski},
booktitle={Thirty-seventh Conference on Neural Information Processing Systems},
year={2023},
url={https://openreview.net/forum?id=qgv56R2YJ7}
}

@article{simeoni2025dinov3,
  title={Dinov3},
  author={Sim{\'e}oni, Oriane and Vo, Huy V and Seitzer, Maximilian and Baldassarre, Federico and Oquab, Maxime and Jose, Cijo and Khalidov, Vasil and Szafraniec, Marc and Yi, Seungeun and Ramamonjisoa, Micha{\"e}l and others},
  journal={arXiv preprint arXiv:2508.10104},
  year={2025}
}

@article{milgrom2002envelope,
  title={Envelope theorems for arbitrary choice sets},
  author={Milgrom, Paul and Segal, Ilya},
  journal={Econometrica},
  volume={70},
  number={2},
  pages={583--601},
  year={2002},
  publisher={Wiley Online Library}
}

@inproceedings{weyls_inequality, 
author={Lewis, A. W.}, 
title={Weyl's Inequality},
howpublished = {Lecture notes for BST 235, Harvard University},
booktitle={BST 235, Fall 2019 Advanced Regression and Statistical Learning}, year={2019},
url={https://www.awlevis.com/pdfs/teaching/Weyl_Inequality.pdf}
}

@inproceedings{svd_eckart_young_mirsky,
  author       = {Damle, Anil},
  title        = {CS 3220: The Singular Value Decomposition},
  howpublished = {Lecture notes for CS 3220, Cornell University},
  booktitle    = {CS 3220 (Fall 2019) Lecture Notes},
  year         = {2019},
  url          = {https://www.cs.cornell.edu/courses/cs3220/2019fa/SVD.pdf},
}
\bibliographystyle{icml2026}

\newpage
\appendix
\onecolumn

\section{Proof of Proposition~\ref{prop:main}}
\label{sec:proofs}

\begin{manualproposition}{\ref{prop:main}}
The approximation error of~\eqref{eq:jepascore_apx} against~\eqref{eq:jepascore} is upper bounded as:
\begin{align}
\label{eq:proof_app_err}
    \text{JS}(\bx) - \bar{\text{JS}}(\bx) \le \mathcal{E}_{\text{RSVD}}(\bx) + \mathcal{E}_{\text{Trunc}}(\bx),
\end{align}
where $\mathcal{E}_{\text{RSVD}}(\bx)$ denotes the error bound due to randomized SVD:
\begin{align}
\label{eq:proof_method_bound}
    \mathcal{E}_{\text{RSVD}}(\bx)
    \coloneqq 
    \sum_{i=1}^{k}
    \log\left(
    1+C_{k,q}\frac{\sigma_{k+1}\big(J_{f}(\bx)\big)}
    {\tilde{\sigma}_{i}\big(\bar{J}_{f,k}(\bx)\big)}
    \right),
\end{align}
and $\mathcal{E}_{\text{Trunc}}(\bx)$ is the truncation error bound from disregarding $\{ \sigma_i \}_{i=k+1}^r$:
\begin{align}
\label{eq:proof_trunc_bound}
    \mathcal{E}_{\text{Trunc}}(\bx)
    \coloneqq
    \frac{r-k}{2}\log\left(
    \frac{\big\lVert J_{f}(\bx)-\bar{J}_{f,k}(\bx) \big\rVert_{F}^2 }{r-k}
    \right).
\end{align}
Here $\bar{J}_{f,k}(\bx)$ is the rank-$k$ approximation of $J_f(\bx)$ via randomized SVD, and $C_{k,q} \coloneqq 1 + ( 1 + 4 \sqrt{2r / (k-1)} )^{1/(2q+1)}$ is a constant depending on rank $k$ and power iterations $q$~\citep{halko2011finding}.
\end{manualproposition}
\begin{proof}
We start by manipulating the LHS of~\eqref{eq:proof_app_err}:
\begin{align}
    \text{JS}(\bx) - \bar{\text{JS}}(\bx)
    & =
    \sum_{i=1}^{r} \log\left(\sigma_i\!\left(J_{f}(\bx)\right)\right)
    -
    \sum_{i=1}^{k} \log\big( \tilde{\sigma}_i \big( \bar{J}_f(\bx) \big)  \big) \nonumber \\
    & =
    \underbrace{
        \sum_{i=1}^{k}
        \log\left(
        \frac{\sigma_i\!\left(J_{f}(\bx)\right)}
        {\tilde{\sigma}_{i}\big(\bar{J}_{f,k}(\bx)\big)}
        \right)
    }_{\text{Randomized SVD error}}
    +
    \underbrace{
        \sum_{i=k+1}^{r}
        \log\left(\sigma_i\left(J_{f}(\bx)\right)\right)
    }_{\text{Truncation error}}.
    \label{eq:proof_method_trunc}
\end{align}
We first show that the error term for randomized SVD is upper bounded by $\mathcal{E}_{\text{RSVD}}(\bx)$ in~\eqref{eq:proof_method_bound}. 
To this end, we start from the spectral-norm error bound of the randomized SVD approximation~\citep{halko2011finding}:
\begin{align}
\label{eq:new_proj_bound}
    \big\lVert J_f-\bar{J}_{f,k} \big\rVert
    \le
    C_{k,q}\,\sigma_{k+1}\big(J_f\big),
\end{align}
where $C_{k,q} \coloneqq 1 + ( 1 + 4 \sqrt{2r / (k-1)} )^{1/(2q+1)}$. We drop the notation on $\bx$ for simplicity.

For each $i\le k$, we have:
\begin{align*}
    \frac{\sigma_i(J_f)}{\tilde{\sigma}_i(\bar{J}_{f,k})}
    =
    1+\frac{\sigma_i(J_f)-\tilde{\sigma}_i(\bar{J}_{f,k})}{\tilde{\sigma}_i(\bar{J}_{f,k})}
    \le
    1+\frac{\big|\sigma_i(J_f)-\tilde{\sigma}_i(\bar{J}_{f,k})\big|}{\tilde{\sigma}_i(\bar{J}_{f,k})}.
\end{align*}
Taking the log of both sides gives:
\begin{align}
\label{eq:new_log_ratio_step}
    \log\left(\frac{\sigma_i(J_f)}{\tilde{\sigma}_i(\bar{J}_{f,k})}\right)
    \le
    \log\left(
    1+\frac{\big|\sigma_i(J_f)-\tilde{\sigma}_i(\bar{J}_{f,k})\big|}{\tilde{\sigma}_i(\bar{J}_{f,k})}
    \right).
\end{align}
Here we use the singular value perturbation inequality (\ie, Weyl`s inequality \citep{weyls_inequality}):
\begin{align*}
    \big|\sigma_i(J_f)-\tilde{\sigma}_i(\bar{J}_{f,k})\big|
    \le
    \big\lVert {J_f-\bar{J}_{f,k}} \big\rVert.
\end{align*}
Combining this inequality with~\eqref{eq:new_proj_bound} yields:
\begin{align*}
    \big|\sigma_i(J)-\tilde{\sigma}_i(\bar{J}_{f,k})\big|
    \le
    C_{k,q}\,\sigma_{k+1}(J_f).
\end{align*}
Now plugging this into \eqref{eq:new_log_ratio_step} and taking the summation over $i=1,\dots,k$ gives: 
\begin{align}
\label{eq:1st_final_key_eq}
    \underbrace{
            \sum_{i=1}^{k}
            \log\left(
            \frac{\sigma_i(J_{f})}
            {\tilde{\sigma}_{i}(\bar{J}_{f,k})}
            \right)
        }_{\text{Randomized SVD error}}
    \le
    \underbrace{
    \sum_{i=1}^{k} \log\left(
    1+C_{k,q}\frac{\,\sigma_{k+1}(J_f)}{\tilde{\sigma}_i(\bar{J}_{f,k})}
    \right).
    }_{\mathcal{E}_{\text{RSVD}}(\bx)}
\end{align}

Now we prove that the truncation error term is bounded by $\mathcal{E}_{\text{Trunc}}(\bx)$ in~\eqref{eq:proof_trunc_bound}. We first obtain a simple upper bound for the truncation error via the Jensen's inequality:
\begin{align}
    \sum_{i=k+1}^{r}\log\big(\sigma_i(J_f)\big)
    &=
    \frac12\sum_{i=k+1}^{r}\log\big(\sigma_i^2(J_f)\big)
    \le
    \frac{t}{2}\log\left(\frac{1}{t}\sum_{i=k+1}^{r}\sigma_i^2(J_f)\right).
\label{eq:jensen_tail}
\end{align}
By the Eckart–Young–Mirsky theorem \citep{svd_eckart_young_mirsky}
, we have
\begin{align}
\label{eq:eckart_young}
    \sum_{i=k+1}^{r}\sigma_i^2(J_f)
    =
    \big\lVert J_f-J_{f,k} \big\rVert_{F}^2
\end{align}
where $J_{f,k}$ denotes the rank-$k$ truncated version of $J_f$ using the \emph{full} SVD.
Since $\bar{J}_{f,k}$ is the approximation via randomized SVD, we have:
\begin{align}
\label{eq:trival_frobius_norm}
    \big\lVert  J_f-J_{f,k} \big\rVert_{F}^2
    \le
    \big\lVert J_f - \bar{J}_{f,k} \big\rVert_{F}^2.
\end{align}
Plugging~\eqref{eq:trival_frobius_norm} into~\eqref{eq:eckart_young} yields:
\begin{align*}
\sum_{i=k+1}^{r}\sigma_i^2(J_f)
\le
\big\lVert J_f-\bar{J}_{f,k} \big\rVert_{F}^2.
\end{align*}
Substituting into \eqref{eq:jensen_tail} gives:
\begin{align}
\label{eq:trunc_upper_bound_eq}
    \underbrace{
        \sum_{i=k+1}^{r}\log\big(\sigma_i(J_f)\big)
    }_{\text{Truncation error}}
    \le
    \underbrace{
        \frac{r-k}{2}\log\left(
        \frac{\big\lVert J_f-\bar{J}_{f,k}\big\rVert_{F}^2}{r-k}
        \right)
    }_{\mathcal{E}_{\text{Trunc}}(\bx)}.
\end{align}
Plugging~\eqref{eq:1st_final_key_eq} and~\eqref{eq:trunc_upper_bound_eq} into~\eqref{eq:proof_method_trunc} and restoring the dependence on $\bx$ yields:
\begin{align*}
    \text{JS}(\bx) - \bar{\text{JS}}(\bx)
    & =
    \sum_{i=1}^{k}
    \log\left(
    \frac{\sigma_i\!\left(J_{f}(\bx)\right)}
    {\tilde{\sigma}_{i}\big(\bar{J}_{f,k}(\bx)\big)}
    \right)
    +
    \sum_{i=k+1}^{r}
    \log\left(\sigma_i\left(J_{f}(\bx)\right)\right) \\
    & \le
    \sum_{i=1}^{k} \log\left(
    1+C_{k,q}\frac{\,\sigma_{k+1}(J_f(\bx))}{\tilde{\sigma}_i(\bar{J}_{f,k}(\bx))}
    \right)
    +
    \frac{r-k}{2}\log\left(
    \frac{\big\lVert J_f(\bx)-\bar{J}_{f,k}(\bx)\big\rVert_{F}^2}{r-k}
    \right) \\
    & =
    \mathcal{E}_{\text{RSVD}}(\bx) + \mathcal{E}_{\text{Trunc}}(\bx).
\end{align*}
This completes the proof.
\end{proof}

\section{Additional Analyses and Discussions}

\subsection{Extended Comparison of Generator-Centric and World-Centric Minorities}
\label{sec:comparison_definitions_in256}

\cref{fig:comparison_definitions_in256} provides a qualitative comparison of minority samples identified under the generator-centric and world-centric definitions on ImageNet $256 \times 256$. We visualize minority samples from four representative classes: bald eagle, white wolf, lemon, and volcano. Since ImageNet is a large-scale dataset, there exists some correlation between the two definitions, resulting in partially overlapping sets of minority samples~\citep{balestriero2025gaussian}. However, a closer examination reveals a qualitative distinction in what each definition captures.

The generator-centric definition tends to identify samples with atypical \emph{context}---for instance, an eagle flying near a construction crane, wolves in unusual environmental settings, or lemons with cluttered backgrounds---highlighting outliers that lie far from the majority of training samples in pixel or feature space. In contrast, the world-centric definition more frequently identifies samples exhibiting atypical \emph{behavior or appearance} of the object itself, such as mating eagles, wolves with distinctive facial expressions, unusually shaped lemons, or volcanoes captured from rare perspectives. This distinction highlights that, grounded in real-world priors, the world-centric definition captures semantically meaningful atypicality by focusing on the rarity of the object itself, rather than mere distributional outliers tied to the training data.

\begin{figure}[t]
    \centering
    \includegraphics[width=0.49\linewidth]{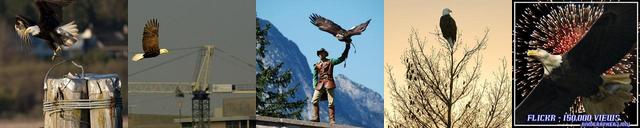}\hfill
    \includegraphics[width=0.49\linewidth]{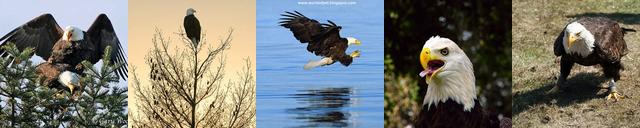}\\[2mm]
    \includegraphics[width=0.49\linewidth]{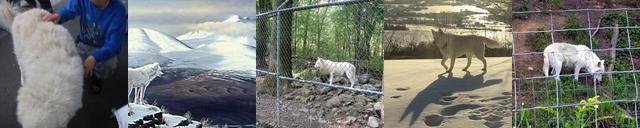}\hfill
    \includegraphics[width=0.49\linewidth]{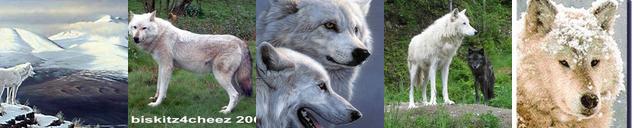}\\[2mm]
    \includegraphics[width=0.49\linewidth]{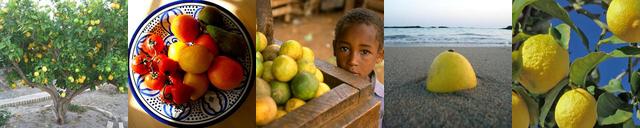}\hfill
    \includegraphics[width=0.49\linewidth]{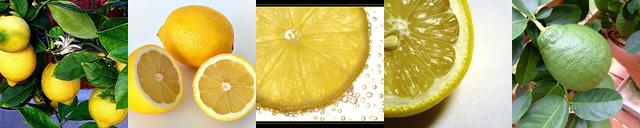}\\[2mm]
    \subcaptionbox{Generator-centric minorities}[0.49\linewidth]{
        \includegraphics[width=\linewidth]{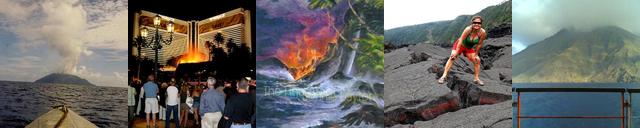}
    }
    \hfill
    \subcaptionbox{World-centric minorities}[0.49\linewidth]{
        \includegraphics[width=\linewidth]{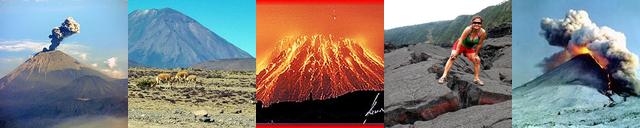}
    }
    \\[1mm]
    \vspace{-1mm}
    \caption{
        \textbf{Comparison of minority samples under generator-centric~(a) and world-centric~(b) definitions on ImageNet $256 \times 256$.} 
        We visualize minority samples across four ImageNet classes: bald eagle (top row), white wolf (second row), lemon (third row), and volcano (bottom row). 
        Generator-centric minorities are defined by AvgkNN distance in the training set, while world-centric minorities are determined by JEPA-SCORE with DINOv2~\citep{oquab2023dinov2}. 
        Due to the large scale of the dataset, the two definitions exhibit partial overlap~\citep{balestriero2025gaussian}. 
        However, a key qualitative difference emerges: generator-centric minorities tend to capture atypical \emph{contexts} (\eg, an eagle near construction equipment), whereas world-centric minorities more often exhibit atypical \emph{behavior or appearance} (\eg, mating eagles, unusually shaped lemons).
    }
    \label{fig:comparison_definitions_in256}
    \vspace{-3mm}
\end{figure}

\subsection{Analysis of Approximation Error}
\label{sec:appendix_experiment}
In this section, we provide an empirical analysis of the JEPA-SCORE approximation. Following ~\cref{prop:main}, we decompose the approximation error $\text{JS}(\bx)-\bar{\text{JS}}(\bx)$ into two terms:
\begin{align*}
    \text{JS}(\bx) - \bar{\text{JS}}(\bx)
    & =
    \sum_{i=1}^{r} \log\big(\sigma_i\big(J_{f}(\bx)\big)\big)
    -
    \sum_{i=1}^{k} \log\big( \tilde{\sigma}_i \big( \bar{J}_f(\bx) \big) \big) \nonumber \\
    & =
    \underbrace{
        \sum_{i=1}^{k}
        \log\left(
        \frac{\sigma_i\big(J_{f}(\bx)\big)}
        {\tilde{\sigma}_{i}\big(\bar{J}_{f,k}(\bx)\big)}
        \right)
    }_{\eqqcolon\, e_{\text{RSVD}}(\bx)}
    +
    \underbrace{
        \sum_{i=k+1}^{r}
        \log\big(\sigma_i\big(J_{f}(\bx)\big)\big)
    }_{\eqqcolon\, e_{\text{Trunc}}(\bx)},
\end{align*}
where $e_{\text{RSVD}}(\bx)$ denotes the estimation error of top-$k$ singular values due to randomized SVD, and $e_{\text{Trunc}}(\bx)$ is the truncation error from discarding $\{ \sigma_i \}_{i=k+1}^r$. As discussed in~\cref{sec:jepa_guidance}, the estimation error $e_{\text{RSVD}}(\bx)$ can be made small with a few power iterations, \eg, $q = 2$ (see~\cref{tab:rsvd_trunc_summary}). While the truncation error $e_{\text{Trunc}}(\bx)$ can be large for small $k$, we empirically find that it is dominated by an image-agnostic \emph{offset} that contributes little to distinguishing atypicality across samples. To illustrate this, we further decompose the truncation error as:
\begin{align}
    e_{\text{Trunc}}(\bx) &= e_{\text{Trunc}}^{\Delta}(\bx) + \Delta_{\text{Offset}},
    \label{eq:trunc_offset_decomp}
\end{align}
where $e_{\text{Trunc}}^{\Delta}(\bx)$ denotes the semantic component that captures image-dependent information, and $\Delta_{\text{Offset}}$ is an image-agnostic offset term.

\begin{figure}[t]
    \centering
    \subcaptionbox{$\mathbb{E}[\sigma_i]$ spectrum\label{fig:sv_spectrum}}{%
        \includegraphics[width=0.24\linewidth]{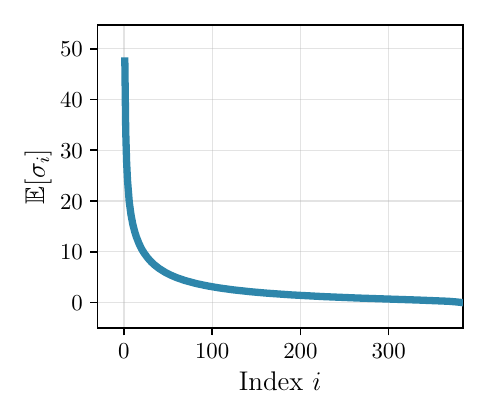}
        \vspace{-2mm}
    }\hfill
    \subcaptionbox{Variance of $\sigma_i$\label{fig:sv_var}}{%
        \includegraphics[width=0.24\linewidth]{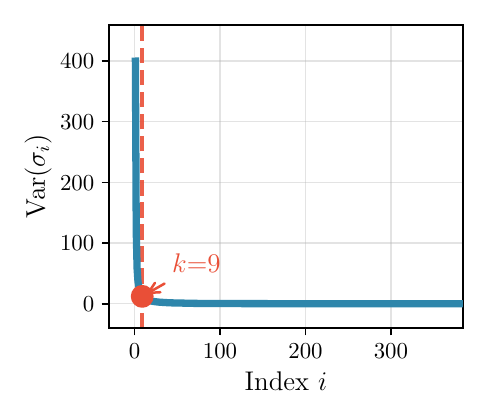}
        \vspace{-2mm}
    }\hfill
    \subcaptionbox{Variance of $\{ \sigma_i \}_{i=1}^9$ \label{fig:sv_var_zoom}}{%
        \includegraphics[width=0.24\linewidth]{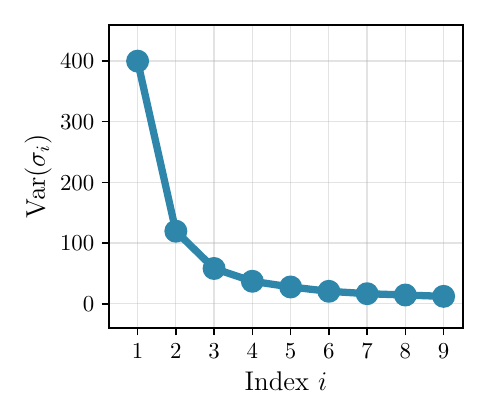}
        \vspace{-2mm}
    }\hfill
    \subcaptionbox{Cumulative variance ratio \label{fig:sv_var_accum}}{%
        \includegraphics[width=0.24\linewidth]{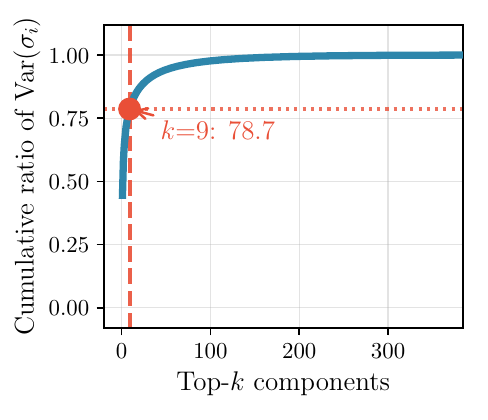}
        \vspace{-2mm}
    }
    \caption{\textbf{Singular value spectrum on CelebA-64 using DINOv2}~\citep{oquab2023dinov2}. 
    \textbf{(a)}~$\mathbb{E}[\sigma_i]$ decays sharply for small $i$ and forms a long tail. 
    \textbf{(b)}~$\mathrm{Var}(\sigma_i)$ drops rapidly and plateaus after the elbow (red marker at $k=9$), indicating that leading components capture image-dependent characteristics while the tail behaves as a near-constant offset. 
    \textbf{(c)}~Enlarged view of \textbf{(b)} for $i \le 9$. 
    \textbf{(d)}~Cumulative variance ratio reaches $78.7\%$ at $k=9$, confirming that the top-$9$ components account for most of the variance.}
    \label{fig:sv_analysis}
    \vspace{2mm}
\end{figure}

\noindent \textbf{Singular Value Spectrum.}
\cref{fig:sv_spectrum} shows the singular-value spectrum of the DINOv2 Jacobian on 1k CelebA-64 real samples. 
Note that the variance of leading singular values is strongly sample-dependent, while the tail becomes nearly constant across images (\cref{fig:sv_var,fig:sv_var_zoom}). 
This is further supported by~\cref{fig:sv_var_accum}: the cumulative variance ratio reaches $78.7\%$ at $k=9$, indicating that the top-$k$ components capture most of the image-dependent variation, while the remaining components contribute little.
The variance exhibits a clear elbow around $k=9$, beyond which additional components contribute negligible variance, suggesting that components beyond a threshold $k_{\mathrm{th}}$ are dominated by noise-like variations (\ie, offset) rather than semantic characteristics. 
On SDXL-Lightning, we observe diminishing returns for $k > 13$ (see~\cref{tab:ablation_k}), motivating us to use $k_{\mathrm{th}} = 13$ for the following analysis. 
Accordingly, we define the offset term in~\eqref{eq:trunc_offset_decomp} as the log-sum of tail singular values:
\begin{align*}
    \Delta_{\mathrm{Offset}} \coloneqq \sum_{i=k_{\mathrm{th}}}^{r} \log\big(\sigma_i(J_f(\bx))\big).
\end{align*}

\begin{table*}[t]
\centering
\begin{subtable}[t]{0.49\textwidth}
\vspace{-2mm}
\centering
\scalebox{0.75}{
\begin{tabular}{cccccccc}
\toprule
$k$ &
$\text{JS}-\bar{\text{JS}}$ &
$e_{\text{RSVD}}$ &
$e_{\text{Trunc}}$ &
$e_{\text{Trunc}}^{\Delta}$ &
$\mathcal{E}_{\text{RSVD}}$ &
$\mathcal{E}_{\text{Trunc}}$ &
$R^{\Delta}_{\text{trunc}}$ \\
\midrule
2  & 159.92 & 0.020 & 159.90 & 31.64 & 2.55 & 581.78 & 80.21\% \\
4  & 153.51 & 0.053 & 153.46 & 25.20 & 4.83 & 547.28 & 83.58\% \\
6  & 147.52 & 0.086 & 147.43 & 19.16 & 7.05 & 520.82 & 87.00\% \\
8  & 141.81 & 0.123 & 141.69 & 13.43 & 9.21 & 498.50 & 90.53\% \\
10 & 136.33 & 0.157 & 136.18 & 7.91 & 11.31 & 478.76 & 94.19\% \\
\bottomrule
\end{tabular}
}
\vspace{+1mm}
\caption{Real samples $\bx_0$}
\label{tab:rsvd_trunc_summary}
\end{subtable}
\begin{subtable}[t]{0.49\textwidth}
\vspace{-2mm}
\centering
\scalebox{0.75}{
\begin{tabular}{cccccccc}
\toprule
$k$ &
$\text{JS}-\bar{\text{JS}}$ &
$e_{\text{RSVD}}$ &
$e_{\text{Trunc}}$ &
$e_{\text{Trunc}}^{\Delta}$ &
$\mathcal{E}_{\text{RSVD}}$ &
$\mathcal{E}_{\text{Trunc}}$ &
$R^{\Delta}_{\text{trunc}}$ \\
\midrule
2  & 38.22 & 0.032 & 38.18 & 27.68 & 2.67 & 441.22 & 27.51\% \\
4  & 32.64 & 0.071 & 32.57 & 22.06 & 5.06 & 412.51 & 32.26\% \\
6  & 27.39 & 0.104 & 27.29 & 16.78 & 7.35 & 388.91 & 38.50\% \\
8  & 22.39 & 0.144 & 22.26 & 11.75 & 9.56 & 368.59 & 47.21\% \\
10 & 17.60 & 0.180 & 17.43 & 6.92 & 11.71 & 350.31 & 60.29\% \\
\bottomrule
\end{tabular}
}
\vspace{+1mm}
\caption{Denoised predictions $\bxhat$}
\label{tab:new_rsvd_trunc_summary}
\end{subtable}
\caption{\textbf{Approximation error decomposition.} \textbf{(a)} Measured on 1k real CelebA-64 samples. \textbf{(b)} Measured on denoised predictions $\bxhat$ during inference. $R^{\Delta}_{\text{trunc}}$ denotes the offset ratio $\Delta_{\text{Offset}} / e_{\text{Trunc}} \times 100\%$.}
\label{tab:approx_error_all}
\vspace{-6mm}
\end{table*}

\noindent \textbf{Error Analysis.}
\cref{tab:rsvd_trunc_summary} shows the approximation error $\text{JS}(\bx) - \bar{\text{JS}}(\bx)$ measured on 1k CelebA-64 real samples. Several trends emerge as $k$ increases: (i) the total error decreases monotonically, confirming that higher-rank approximations better recover JEPA-SCORE; (ii) $e_{\text{RSVD}}$ remains negligible, indicating accurate singular value estimation; (iii) $e_{\text{Trunc}}$ dominates and decreases with $k$; (iv) the theoretical bounds $\mathcal{E}_{\text{RSVD}}$ and $\mathcal{E}_{\text{Trunc}}$ correctly upper-bound their empirical counterparts.
Importantly, $e_{\text{Trunc}}^{\Delta}$ decreases with $k$, indicating that larger $k$ preserves more semantic information. Since $\Delta_{\text{Offset}}$ is constant by definition, the offset ratio $R^{\Delta}_{\text{trunc}}$ increases with $k$. This confirms that as $k$ grows, the truncation error becomes increasingly dominated by the image-agnostic offset, while the semantic component diminishes. We see the same trend of the approximation error measured during inference time (on $\bxhat$); see~\cref{tab:new_rsvd_trunc_summary} for details.

\subsection{Computational Analysis}
\label{sec:computation}

\begin{wraptable}{r}{0.5\columnwidth}
\vspace{-3mm}
\centering
\fontsize{8}{9}\selectfont
\begin{tabular}{lccccc}
\toprule
Method & Time $\downarrow$ & CLIP $\uparrow$ & Pick $\uparrow$ & ImgR $\uparrow$ & JEPA $\downarrow$ \\
\midrule
DDIM & \underline{1.45} & \underline{31.52} & \underline{21.49} & 0.21 & -292.27 \\
CADS & \textbf{1.44} & 31.46 & 21.28 & 0.09 & -311.23 \\
SGMS & 7.24 & 31.23 & 21.32 & 0.12 & -311.14 \\
MinorityPrompt & 12.42 & \textbf{31.56} & 21.32 & \textbf{0.24} & \underline{-322.33} \\
\tbcl Ours & \tbcl 10.44 & \tbcl 31.46 & \tbcl \textbf{21.50} & \tbcl \underline{0.22} & \tbcl \textbf{-355.40} \\
\bottomrule
\end{tabular}
\vspace{1mm}
\caption{\textbf{Computational cost comparison.} Time denotes seconds per sample. Best in \textbf{bold}, second best \underline{underlined}.}
\label{tab:sdv15_time}
\vspace{-4mm}
\end{wraptable}

\cref{tab:sdv15_time} compares the computational cost of JEPA guidance against baselines on SDv1.5. While our method requires additional overhead from Jacobian computation and randomized SVD, it remains faster than MinorityPrompt~\citep{um2025minority} while achieving substantially lower JEPA-SCORE with comparable quality and text-alignment metrics. Compared to SGMS~\citep{um2024self}, our method incurs higher cost due to the JEPA encoder forward pass and SVD operations at each guidance step, which could be justified by the significant improvement in world-centric atypicality. We note that samplers like DDIM and CADS are faster as they do not involve any guidance computation. Overall, JEPA guidance offers a reasonable trade-off between computational cost and the ability to generate world-centric minority samples.

\section{Implementation Details}
\label{sec:imp_details}

\noindent \textbf{Baselines.}
For all baselines, we follow the official implementations (if available) or descriptions provided in existing papers. For~\citet{sehwag2022generating}, we use the same pretrained models as ours (\ie, ADM checkpoints~\citep{dhariwal2021diffusion}). Specifically on CelebA, following previous works~\citep{um2024dont, um2024self, um2025boostandskip}, we train an out-of-distribution (OOD) classifier to distinguish whether an input belongs to CelebA or other datasets (\eg, ImageNet), and incorporate the negative log-likelihood gradient targeting the in-distribution class into DDPM sampling~\eqref{eq:sampling} for low-density guidance. We implement MG~\citep{um2024dont} with the official codebase\footnote{\url{https://github.com/soobin-um/minority-guidance}} by employing U-Net encoders for minority classifiers trained on the entire training set. For ImageNet $256 \times 256$, we use the upscaling model~\citep{dhariwal2021diffusion} as described in~\citet{um2024dont}. We implement SGMS~\citep{um2024self} and BnS~\citep{um2025boostandskip} by following their official implementations\footnote{\url{https://github.com/soobin-um/sg-minority}, \url{https://github.com/soobin-um/BnS}}. For CADS~\citep{sadat2023cads}, we implement based on the pseudocode in the paper. MinorityPrompt~\citep{um2025minority} is implemented using the official codebase\footnote{\url{https://github.com/soobin-um/MinorityPrompt}}.

\noindent \textbf{Evaluation Metrics.}
Our evaluation protocol follows previous works~\citep{um2024dont, um2024self, um2025boostandskip}. Specifically, we evaluate clean Fréchet Inception Distance (cFID)~\citep{parmar2022aliased} and Spatial FID (sFID)~\citep{nash2021generating} using official implementations\footnote{\url{https://github.com/GaParmar/clean-fid}, \url{https://github.com/mseitzer/pytorch-fid}}. For sFID, we use spatial features (\ie, the first 7 channels from \texttt{mixed\_6/conv}) instead of the standard \texttt{pool\_3} inception features. For Improved Precision \& Recall~\citep{kynkaanniemi2019improved}, we follow the implementation in~\citet{han2022rarity} with $k = 5$. Density \& Coverage~\citep{naeem2020reliable} are computed using the official implementation\footnote{\url{https://github.com/clovaai/generative-evaluation-prdc}}. To evaluate closeness to world-centric minorities, we use samples with the lowest JEPA-SCORE---the most atypical under the world prior---as reference data. Specifically, we select the 10K real samples with the lowest JEPA-SCORE from CelebA and ImageNet training sets. CLIPScore is computed using \texttt{torchmetrics}\footnote{\url{https://lightning.ai/docs/torchmetrics/stable/multimodal/clip_score.html}}. For PickScore and ImageReward, we use the official implementations\footnote{\url{https://github.com/yuvalkirstain/PickScore}, \url{https://github.com/THUDM/ImageReward}}. For JEPA-SCORE, we use the JEPA encoder of \texttt{DINOv2-ViT-S/14}~\citep{oquab2023dinov2}. All metrics are computed on 30K generated samples for unconditional and class-conditional generation, and 5K samples for text-conditional generation.

\noindent \textbf{Hyperparameters.} 
We use 250 sampling steps for CelebA and ImageNet, and 50 steps for SDv1.5 and 4 steps for SDXL-Lightning. For randomized SVD, we set $(k,p,q) = (3,2,2)$ for CelebA and ImageNet, and $(k,p,q) = (9,2,2)$ for SDv1.5 and SDXL-Lightning. The deferred guidance ratio is set to $\tau=0.8$ across all settings. The intermittent rate is $N=3$, except for SDXL-Lightning where $N=1$. We use the variance schedule $\eta_t = \eta {\bs \Sigma}_{\bth}(\bx_t, t)$ for CelebA and ImageNet, and the constant schedule $\eta_t = \eta$ for SDv1.5 and SDXL-Lightning. Guidance strength is $\eta = 2.0$ for CelebA and ImageNet, $\eta = 0.06$ for SDv1.5, and $\eta = 0.5$ for SDXL-Lightning. The JEPA encoder $J_f$ is \texttt{DINOv2-ViT-S/14}~\citep{oquab2023dinov2} for all experiments. Our implementation is based on PyTorch~\citep{paszke2019pytorch}. All experiments were conducted on NVIDIA A100 GPUs. Code is available at \url{https://github.com/soobin-um/jepa-guidance}.

\section{Additional Experimental Results}

\subsection{User Study Results}
\label{sec:user_study}

\begin{wraptable}{r}{0.5\columnwidth}
\vspace{-3mm}
\centering
\scalebox{0.93}{
\begin{tabular}{@{}lcccc@{}}
\toprule
& \multicolumn{2}{c}{Alignment \& Quality} & \multicolumn{2}{c}{Atypicality} \\
\cmidrule(lr){2-3} \cmidrule(lr){4-5}
& Baseline & Ours & Baseline & Ours \\
\midrule
vs. DDIM           & \textbf{52.90} & 47.10 & 18.71 & \textbf{81.29} \\
vs. SGMS           & 34.19 & \textbf{65.81} & 21.29 & \textbf{78.71} \\
vs. MinorityPrompt & 33.55 & \textbf{66.45} & 22.58 & \textbf{77.42} \\
\bottomrule
\end{tabular}
}
\vspace{1mm}
\caption{\textbf{Human preference evaluation (\%).} Percentage of user preference for each method.}
\label{tab:user_study}
\vspace{-4mm}
\end{wraptable}
To complement our quantitative evaluation on text-conditional generation, we conducted a human preference study comparing JEPA guidance against three baselines: (i) DDIM~\citep{song2020denoising}, (ii) SGMS~\citep{um2024self}, and (iii) MinorityPrompt~\citep{um2025minority}. A total of 31 participants evaluated randomly ordered image pairs generated with identical prompts and seeds. For each pair, participants selected the preferred sample based on two criteria: (i) \textit{Alignment \& Quality} (text adherence and visual fidelity) and (ii) \textit{Atypicality} (semantic distinctiveness). As shown in~\cref{tab:user_study}, JEPA guidance is consistently preferred over both baselines across both criteria, demonstrating its effectiveness in generating high-quality atypical samples.

\subsection{Additional Generated Samples}
\label{sec:additional_samples}

We present additional generated samples across all datasets for qualitative comparison; see~\cref{fig:celeba_comparison,fig:imgnet_comparison,fig:apdx_qualitative_t2i_sd15,fig:apdx_qualitative_t2i}. Samples generated by our approach capture world-level atypicality compared to previous minority sampling methods based on generative priors. We also observe that MinorityPrompt~\citep{um2025minority} sometimes exhibits quality degradation on SDv1.5, such as noisy artifacts (see the 3rd row, 2nd column in~\cref{fig:apdx_qualitative_t2i_sd15}). CLIPScore and ImageReward often fail to capture such deterioration, while PickScore does, resulting in low PickScore values for MinorityPrompt~(\cref{tab:main_t2i}). In contrast, our JEPA guidance does not exhibit such degradation, producing high-fidelity images with world-level atypical features.

\begin{figure}[t]
    \centering
    \subcaptionbox{ADM~\citep{dhariwal2021diffusion}}[0.32\linewidth]{
    \includegraphics[width=\linewidth]{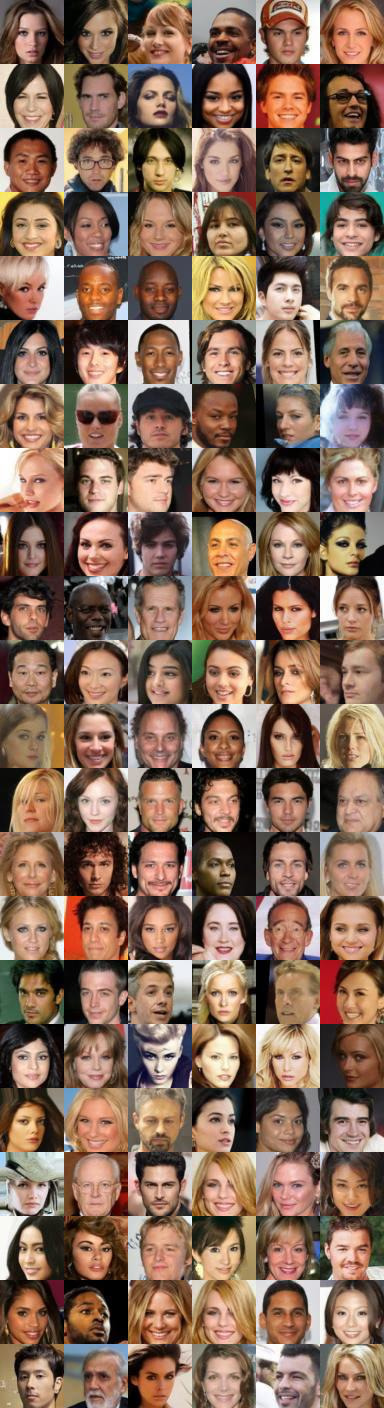}
    }
    \hfill
    \subcaptionbox{MG~\citep{um2024dont}}[0.32\linewidth]{
    \includegraphics[width=\linewidth]{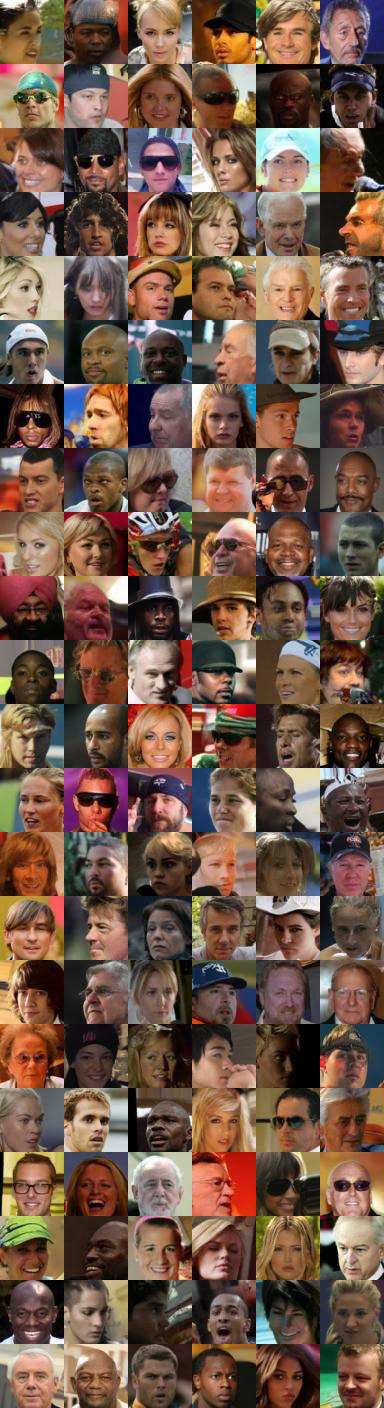}
    }
    \hfill
    \subcaptionbox{Ours}[0.32\linewidth]{
    \includegraphics[width=\linewidth]{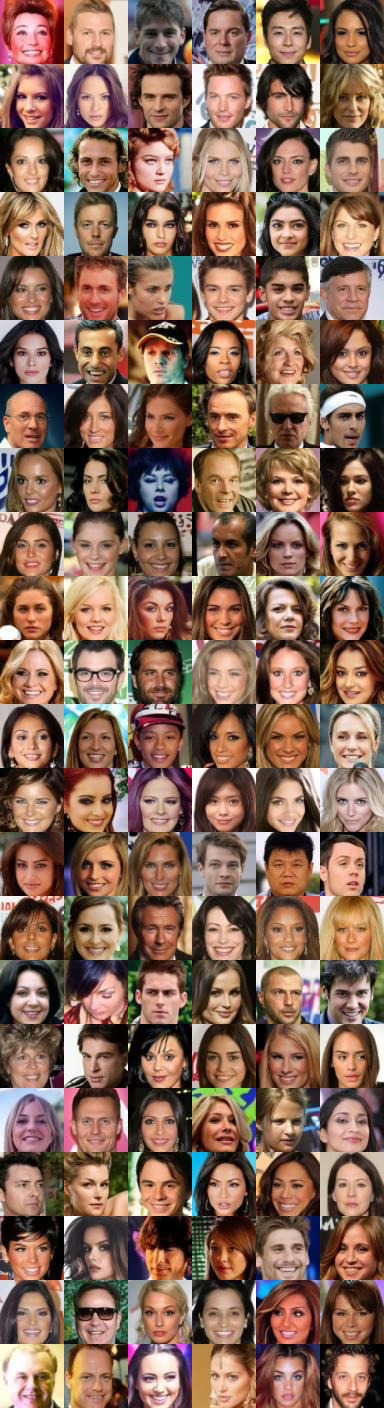}
    }
    \\[1mm]
    \caption{\textbf{Sample comparison on CelebA $64 \times 64$.}}
    \label{fig:celeba_comparison}
\end{figure}

\begin{figure}[t]
    \centering
    \includegraphics[width=0.32\linewidth]{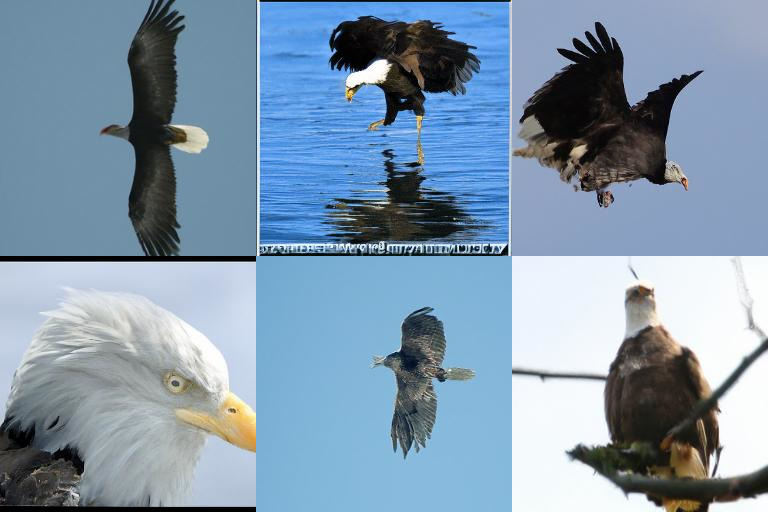}\hfill
    \includegraphics[width=0.32\linewidth]{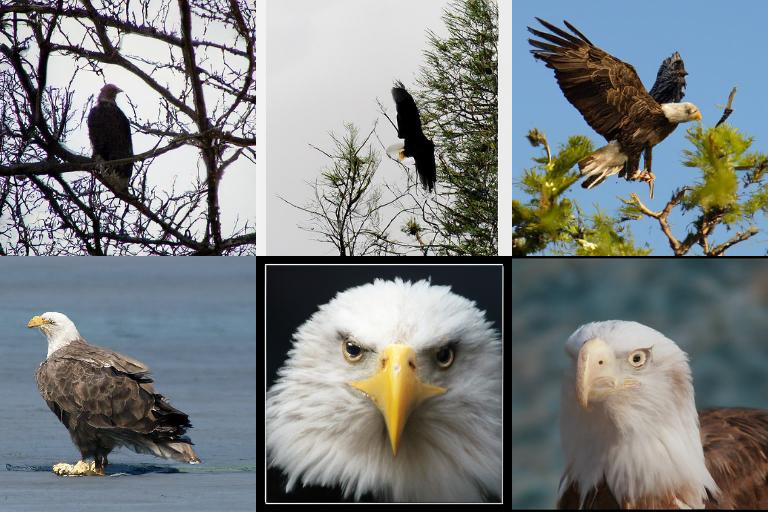}\hfill
    \includegraphics[width=0.32\linewidth]{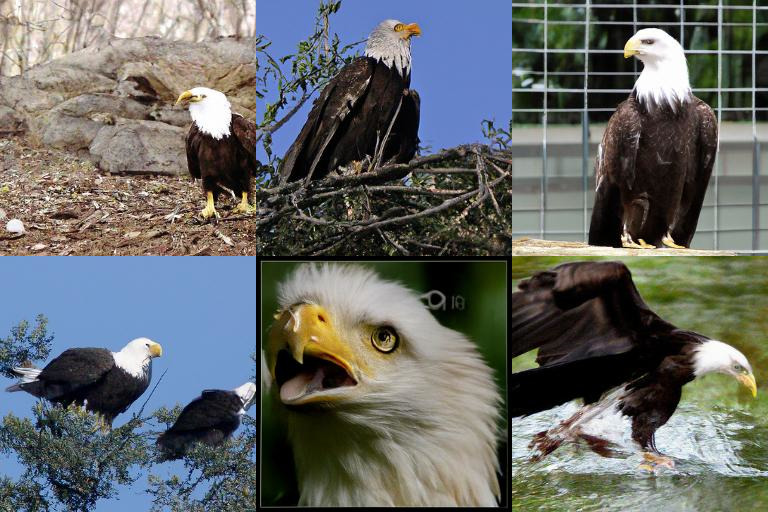}\\[2mm]
    \includegraphics[width=0.32\linewidth]{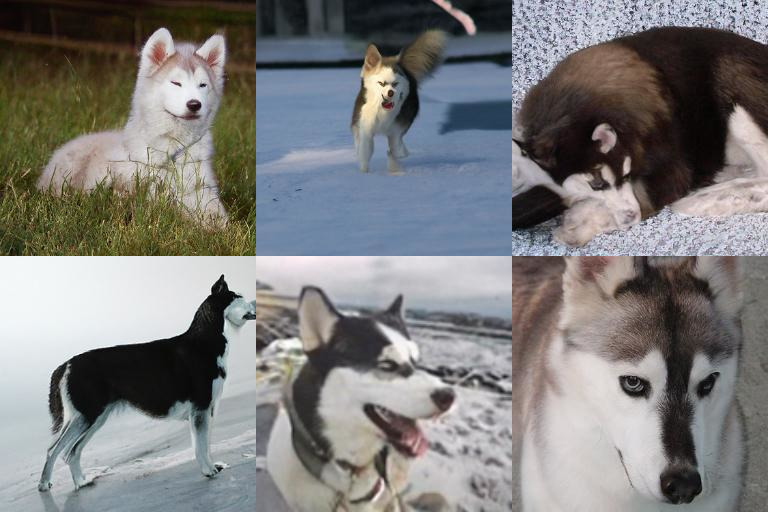}\hfill
    \includegraphics[width=0.32\linewidth]{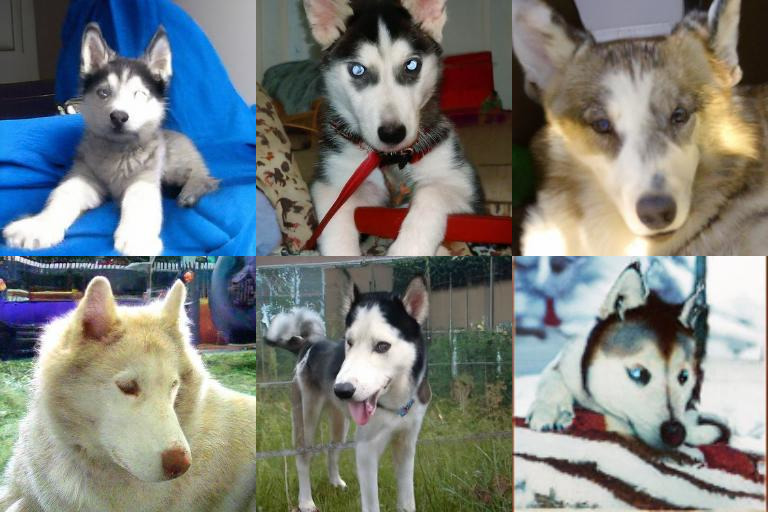}\hfill
    \includegraphics[width=0.32\linewidth]{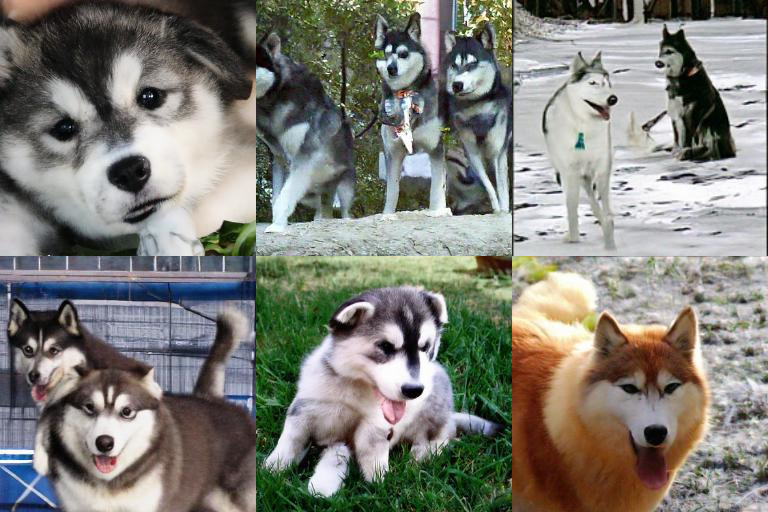}\\[2mm]
    \includegraphics[width=0.32\linewidth]{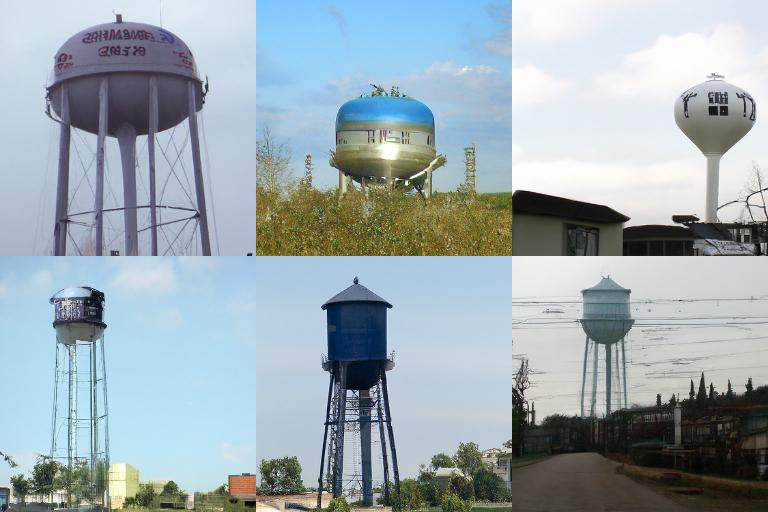}\hfill
    \includegraphics[width=0.32\linewidth]{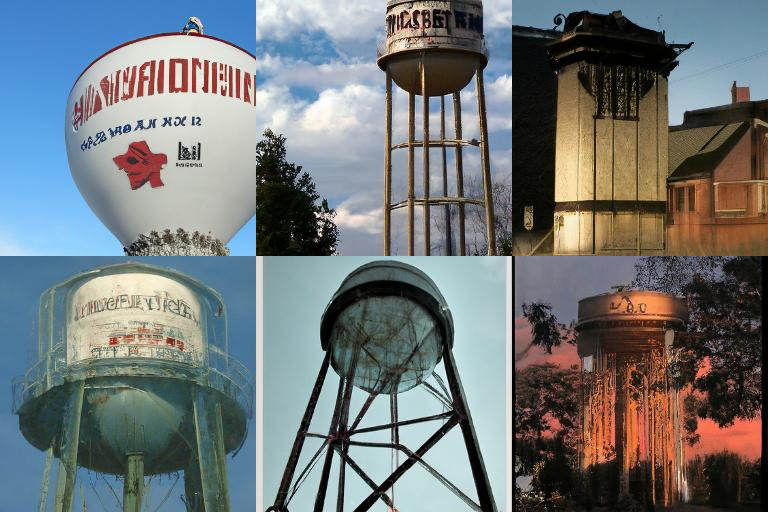}\hfill
    \includegraphics[width=0.32\linewidth]{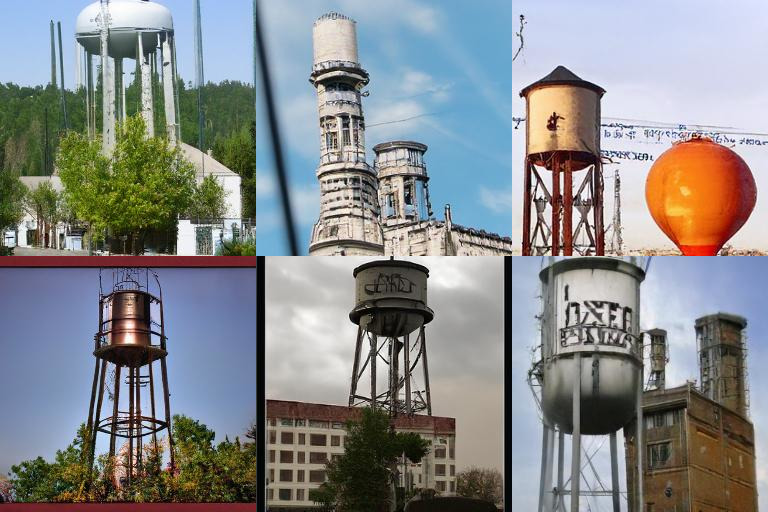}\\[2mm]
    \includegraphics[width=0.32\linewidth]{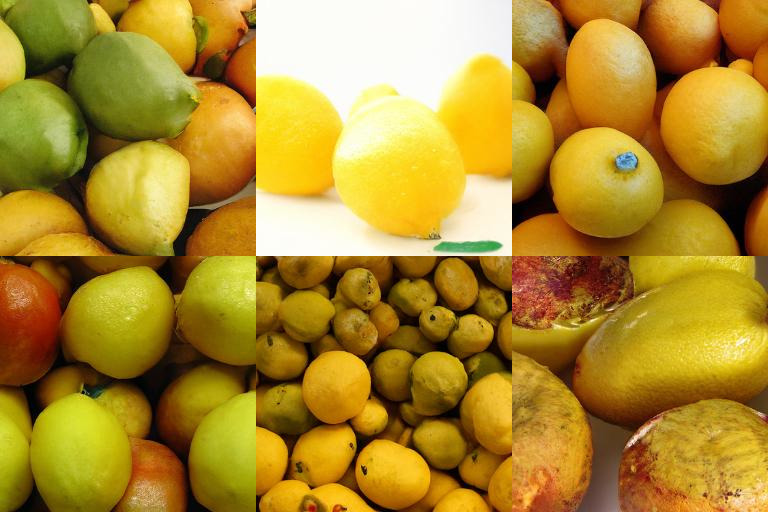}\hfill
    \includegraphics[width=0.32\linewidth]{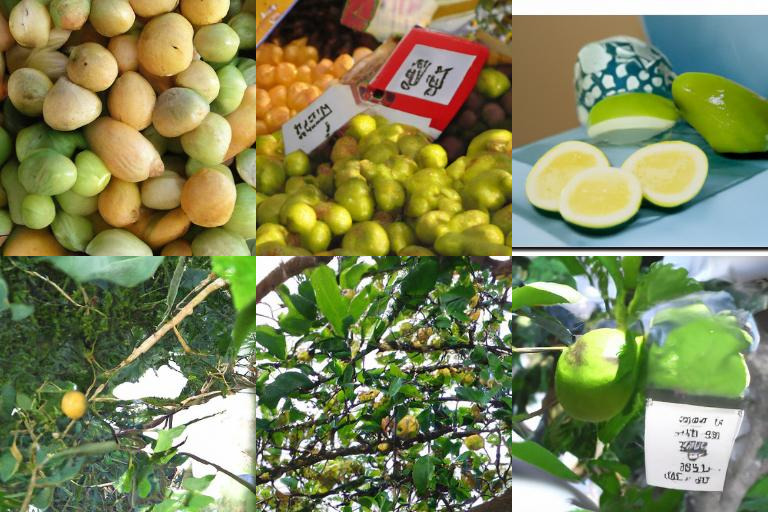}\hfill
    \includegraphics[width=0.32\linewidth]{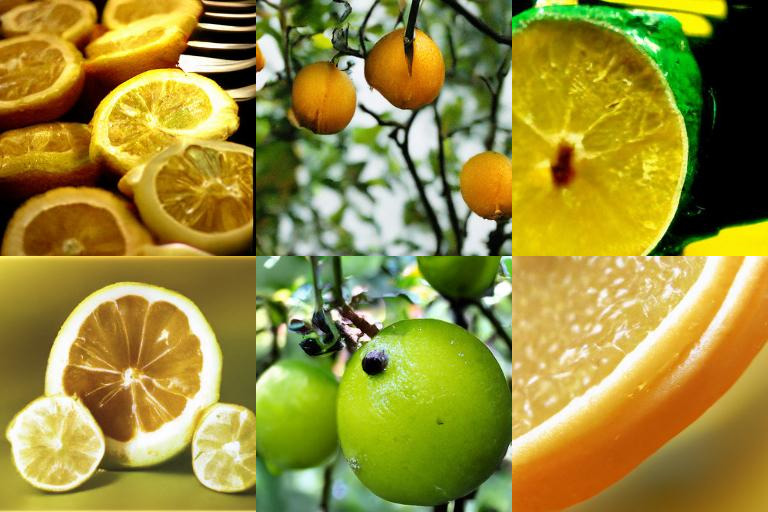}\\[2mm]
    \subcaptionbox{ADM~\citep{dhariwal2021diffusion}}[0.32\linewidth]{
    \includegraphics[width=\linewidth]{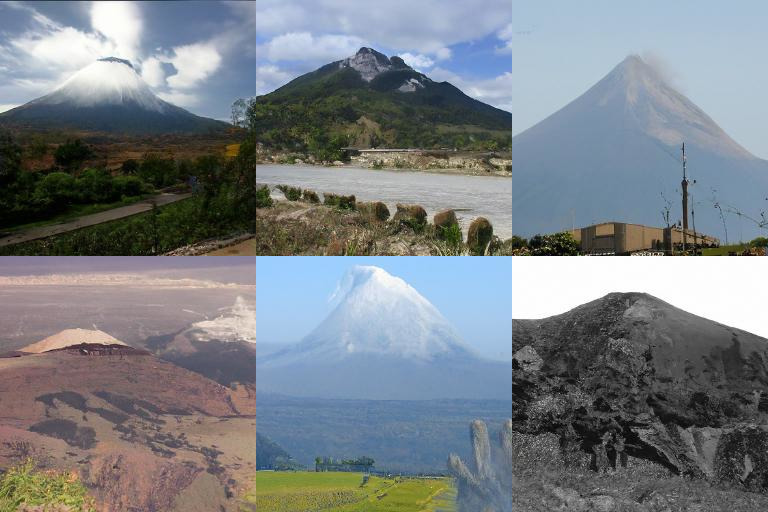}
    }
    \hfill
    \subcaptionbox{BnS~\citep{um2025boostandskip}}[0.32\linewidth]{
    \includegraphics[width=\linewidth]{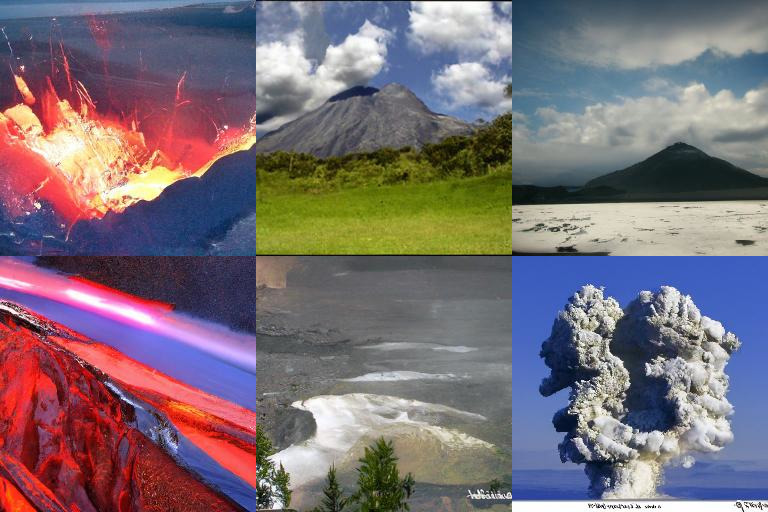}
    }
    \hfill
    \subcaptionbox{Ours}[0.32\linewidth]{
    \includegraphics[width=\linewidth]{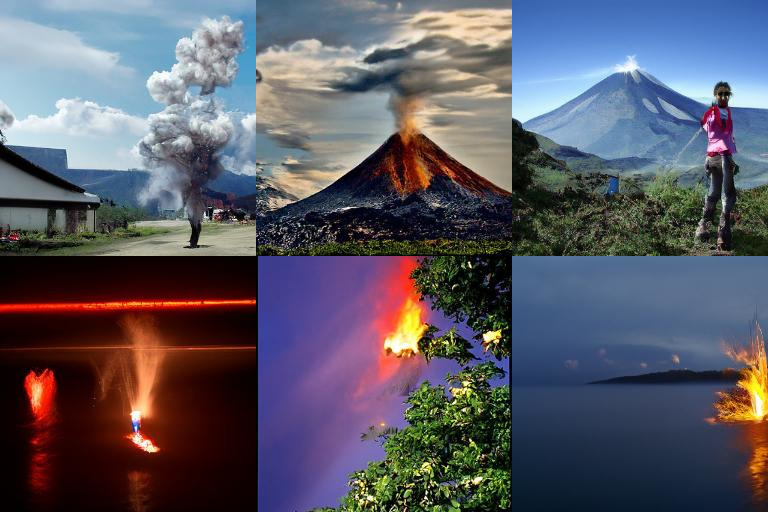}
    }
    \\[1mm]
    \caption{\textbf{Sample comparison on ImageNet $256 \times 256$.} Generated samples from four classes: (i) ``bald eagle'' (top row); (ii) ``Siberian husky'' (second row); (iii) ``water tower'' (third row); (iv) ``lemon'' (bottom row).}
    \label{fig:imgnet_comparison}
\end{figure}

\begin{figure*}[t]
    \centering
    \begin{subfigure}[t]{0.49\textwidth}
        \centering
        \begin{tabular}{@{}c@{\hspace{1mm}}c@{\hspace{1mm}}c@{}}
             DDIM &  MinorityPrompt &  Ours \\
            \includegraphics[width=0.32\linewidth]{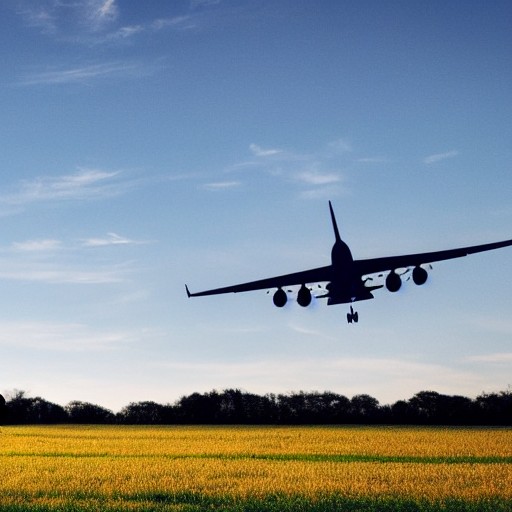} &
            \includegraphics[width=0.32\linewidth]{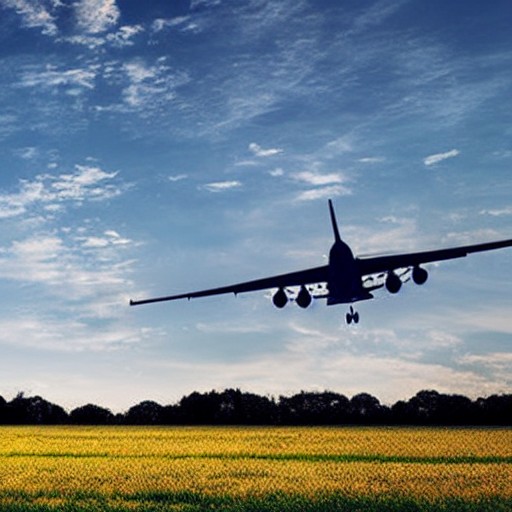} &
            \includegraphics[width=0.32\linewidth]{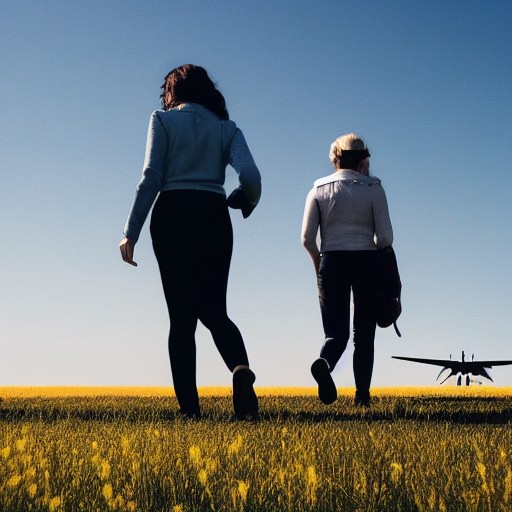} \\
        \end{tabular}
        \vspace{-2.5mm}
        \caption*{``Two people walking through a field as a plane lands''}
    \end{subfigure}
    \hfill
    \begin{subfigure}[t]{0.49\textwidth}
        \centering
        \begin{tabular}{@{}c@{\hspace{1mm}}c@{\hspace{1mm}}c@{}}
             DDIM &  MinorityPrompt &  Ours \\
            \includegraphics[width=0.32\linewidth]{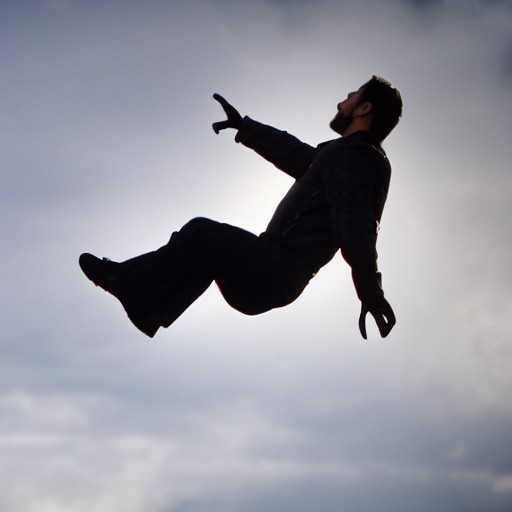} &
            \includegraphics[width=0.32\linewidth]{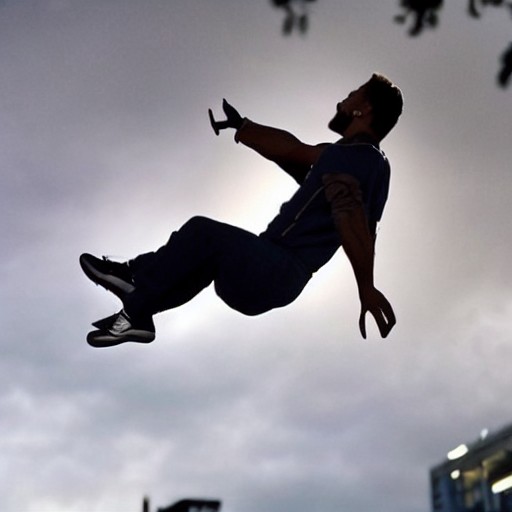} &
            \includegraphics[width=0.32\linewidth]{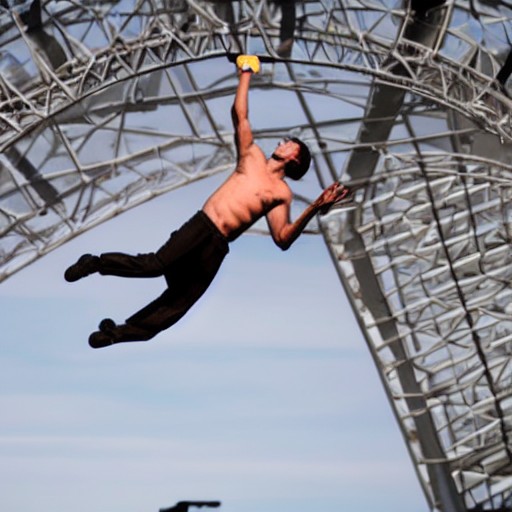} \\
        \end{tabular}
        \vspace{-2.5mm}
        \caption*{``A man is in the air as he performs a stunt''}
    \end{subfigure}
    
    \vspace{2mm}

    \begin{subfigure}[t]{0.49\textwidth}
        \centering
        \begin{tabular}{@{}c@{\hspace{1mm}}c@{\hspace{1mm}}c@{}}
            \includegraphics[width=0.32\linewidth]{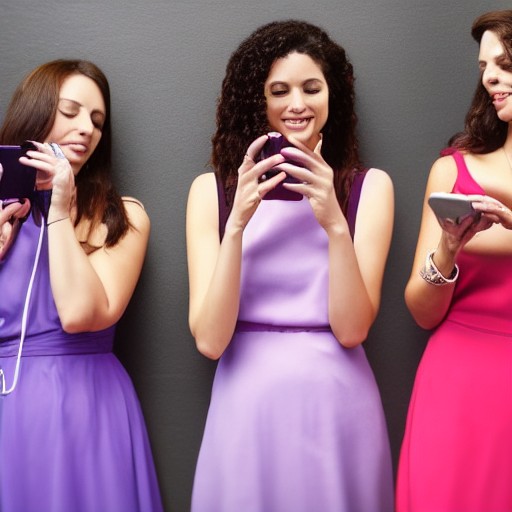} &
            \includegraphics[width=0.32\linewidth]{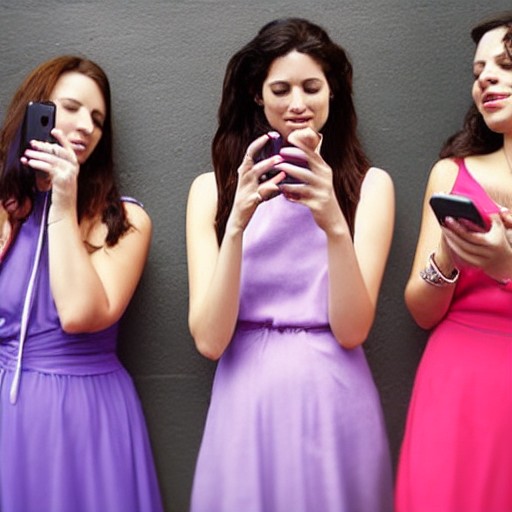} &
            \includegraphics[width=0.32\linewidth]{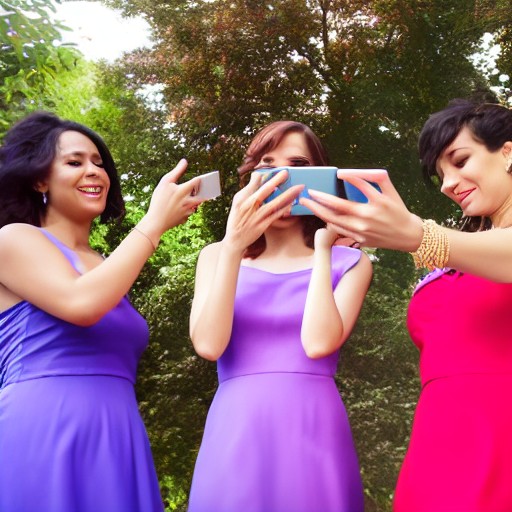} \\
        \end{tabular}
        \vspace{-2.5mm}
        \caption*{``Three women in purple dresses on their cellphones simultaneously''}
    \end{subfigure}
    \hfill
    \begin{subfigure}[t]{0.49\textwidth}
        \centering
        \begin{tabular}{@{}c@{\hspace{1mm}}c@{\hspace{1mm}}c@{}}
            \includegraphics[width=0.32\linewidth]{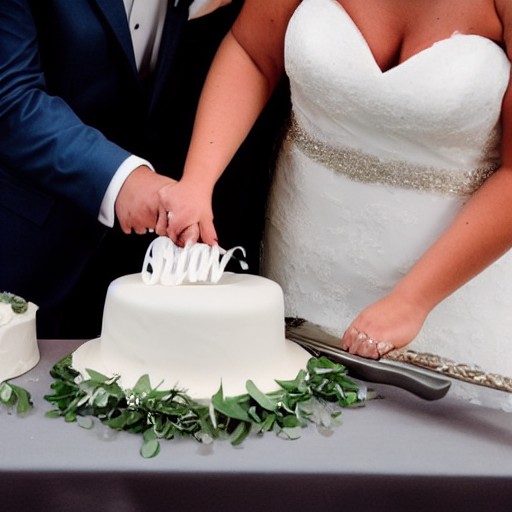} &
            \includegraphics[width=0.32\linewidth]{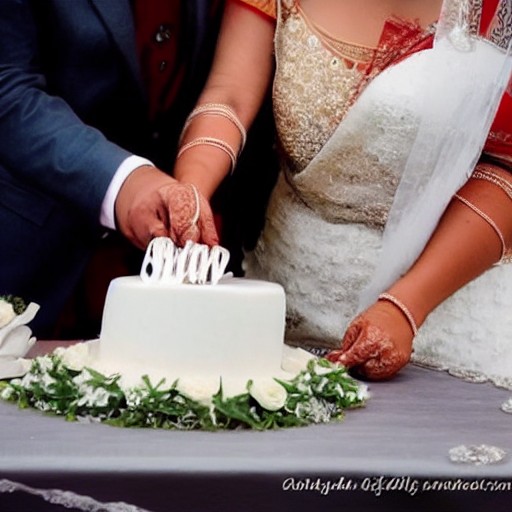} &
            \includegraphics[width=0.32\linewidth]{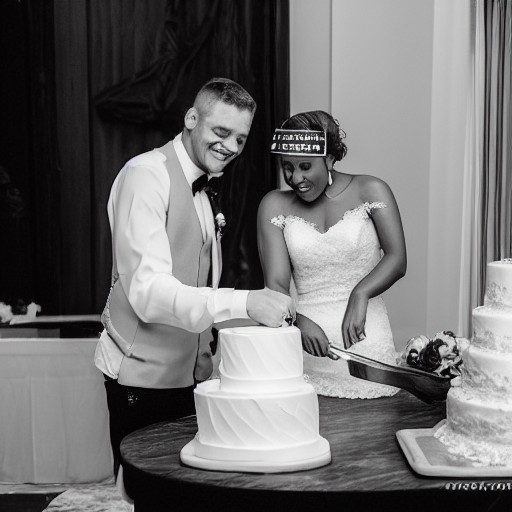} \\
        \end{tabular}
        \vspace{-2.5mm}
        \caption*{``A newly married couple cutting their wedding cake''}
    \end{subfigure}
    
    \vspace{2mm}
    
    \begin{subfigure}[t]{0.49\textwidth}
        \centering
        \begin{tabular}{@{}c@{\hspace{1mm}}c@{\hspace{1mm}}c@{}}
            \includegraphics[width=0.32\linewidth]{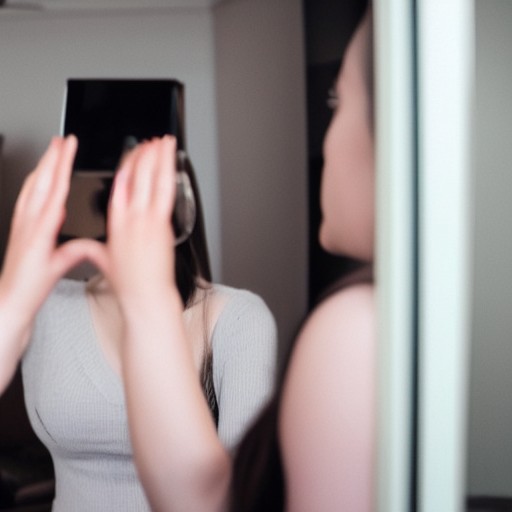} &
            \includegraphics[width=0.32\linewidth]{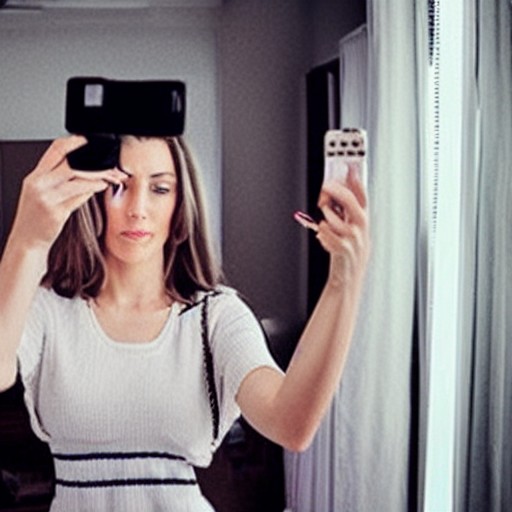} &
            \includegraphics[width=0.32\linewidth]{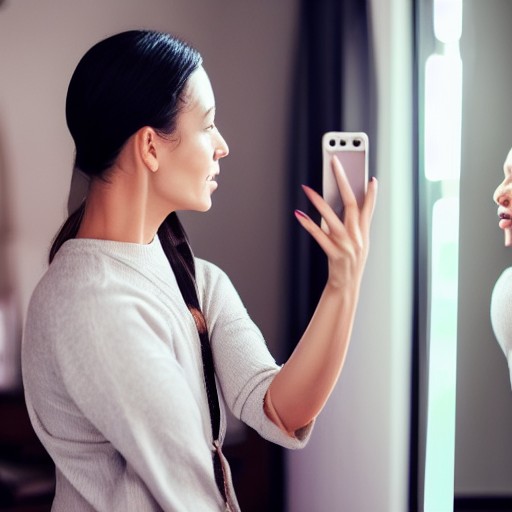} \\
        \end{tabular}
        \vspace{-2.5mm}
        \caption*{``A woman takes a photo of herself in the mirror''}
    \end{subfigure}
    \hfill
    \begin{subfigure}[t]{0.49\textwidth}
        \centering
        \begin{tabular}{@{}c@{\hspace{1mm}}c@{\hspace{1mm}}c@{}}
            \includegraphics[width=0.32\linewidth]{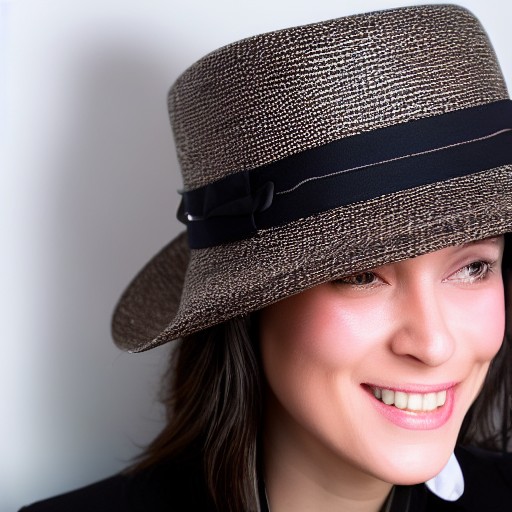} &
            \includegraphics[width=0.32\linewidth]{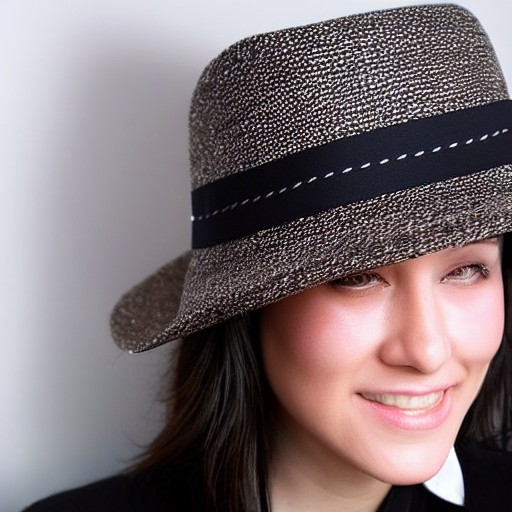} &
            \includegraphics[width=0.32\linewidth]{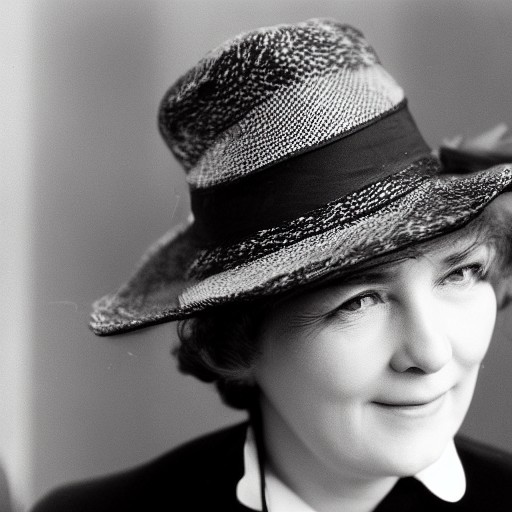} \\
        \end{tabular}
        \vspace{-2.5mm}
        \caption*{``A close-up of a woman wearing a hat and tie''}
    \end{subfigure}

    \vspace{2mm}

    \begin{subfigure}[t]{0.49\textwidth}
        \centering
        \begin{tabular}{@{}c@{\hspace{1mm}}c@{\hspace{1mm}}c@{}}
            \includegraphics[width=0.32\linewidth]{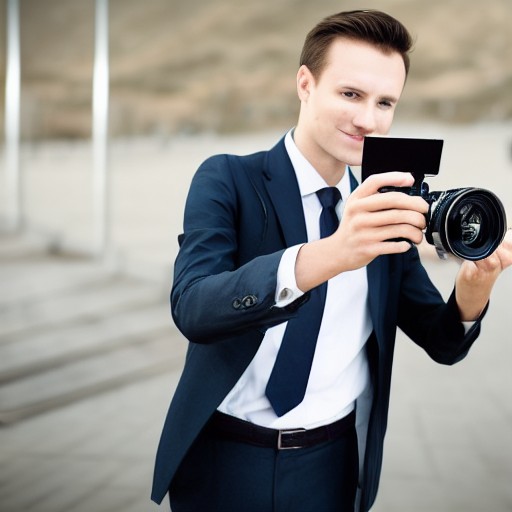} &
            \includegraphics[width=0.32\linewidth]{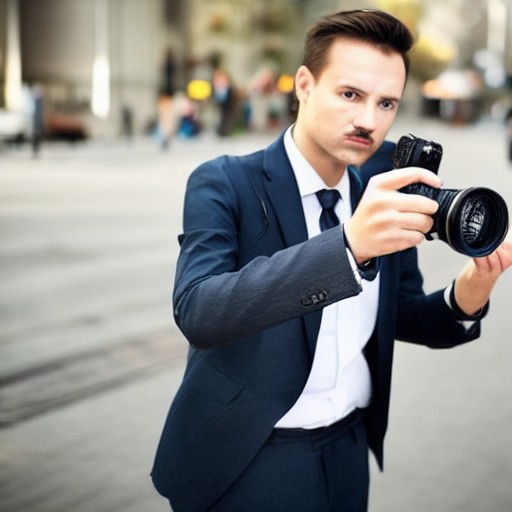} &
            \includegraphics[width=0.32\linewidth]{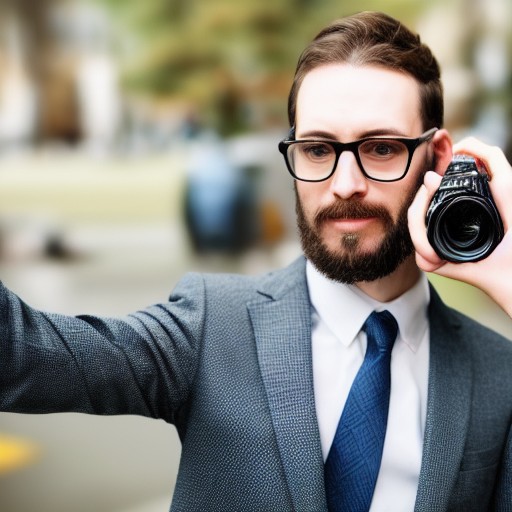} \\
        \end{tabular}
        \vspace{-2.5mm}
        \caption*{``A man wearing a suit is taking a self portrait with a camera''}
    \end{subfigure}
    \hfill
    \begin{subfigure}[t]{0.49\textwidth}
        \centering
        \begin{tabular}{@{}c@{\hspace{1mm}}c@{\hspace{1mm}}c@{}}
            \includegraphics[width=0.32\linewidth]{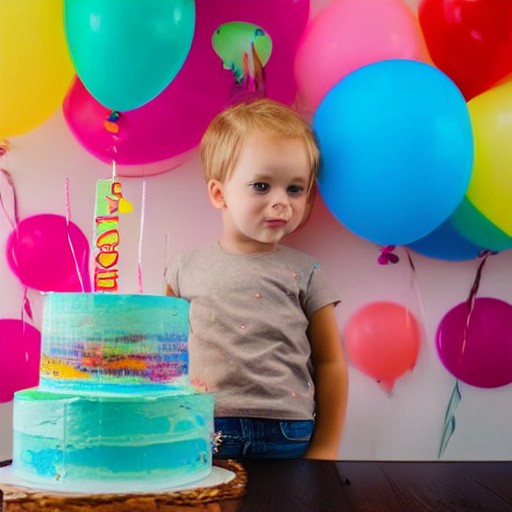} &
            \includegraphics[width=0.32\linewidth]{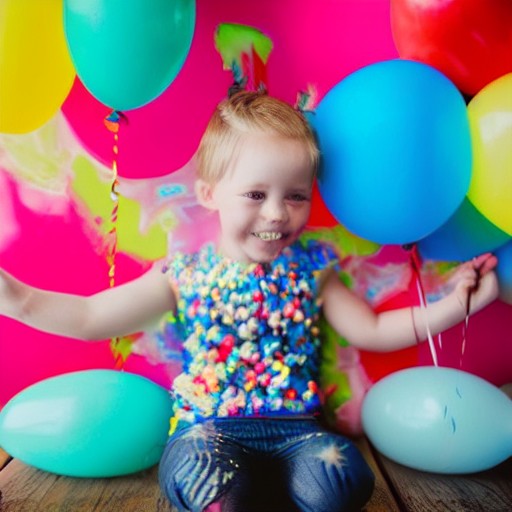} &
            \includegraphics[width=0.32\linewidth]{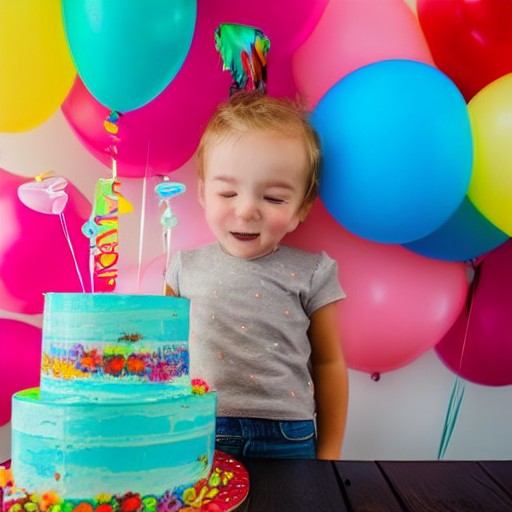} \\
        \end{tabular}
        \vspace{-2.5mm}
        \caption*{``A close up of a child next to a cake with balloons''}
    \end{subfigure}

    \vspace{2mm}

    \begin{subfigure}[t]{0.49\textwidth}
        \centering
        \begin{tabular}{@{}c@{\hspace{1mm}}c@{\hspace{1mm}}c@{}}
            \includegraphics[width=0.32\linewidth]{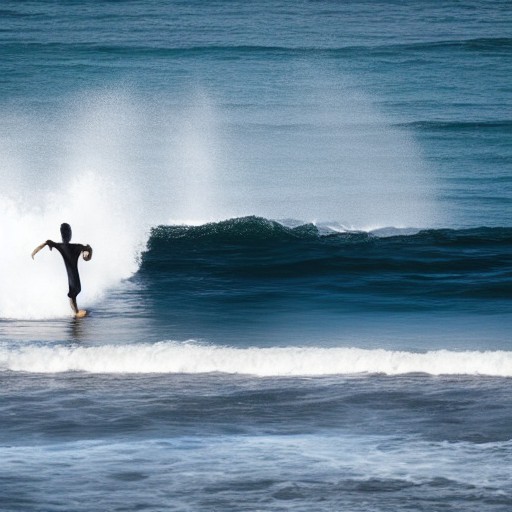} &
            \includegraphics[width=0.32\linewidth]{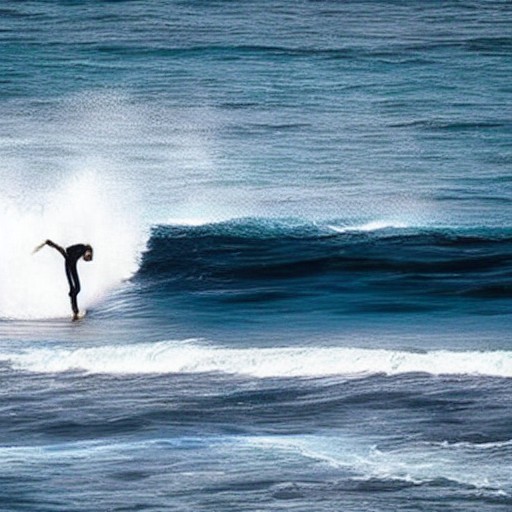} &
            \includegraphics[width=0.32\linewidth]{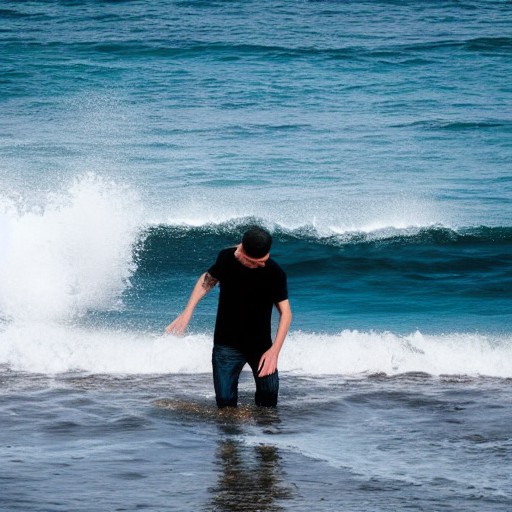} \\
        \end{tabular}
        \vspace{-2.5mm}
        \caption*{``A man in a black shirt plays on an ocean wave''}
    \end{subfigure}
    \hfill
    \begin{subfigure}[t]{0.49\textwidth}
        \centering
        \begin{tabular}{@{}c@{\hspace{1mm}}c@{\hspace{1mm}}c@{}}
            \includegraphics[width=0.32\linewidth]{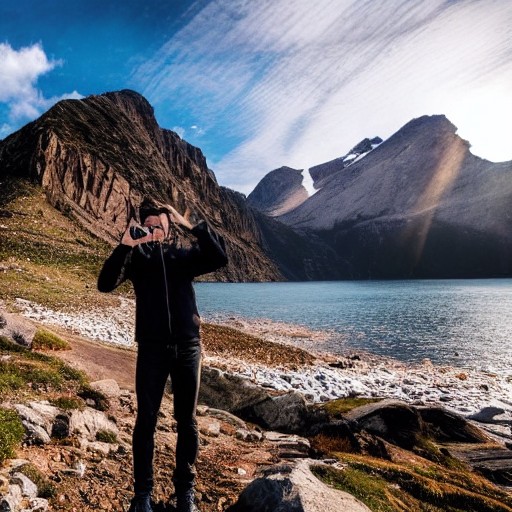} &
            \includegraphics[width=0.32\linewidth]{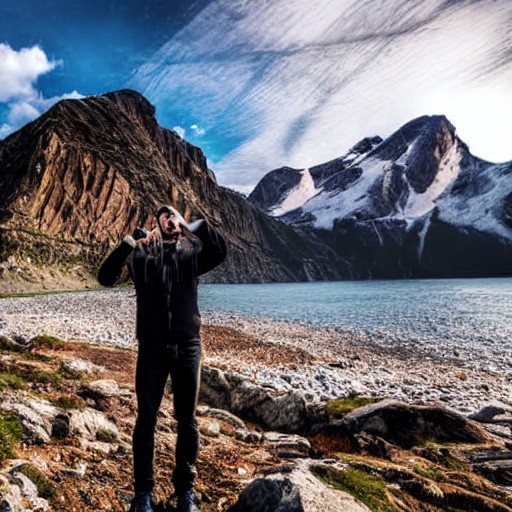} &
            \includegraphics[width=0.32\linewidth]{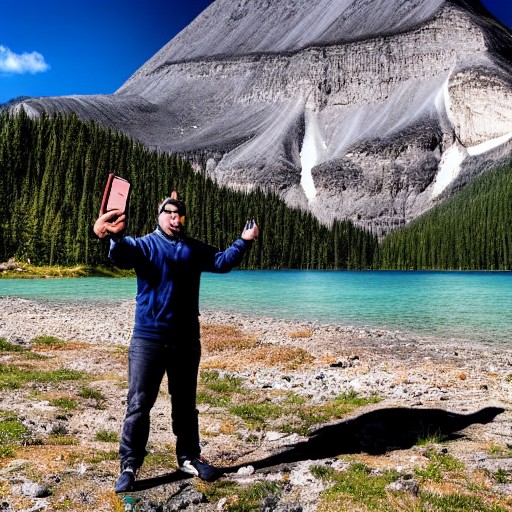} \\
        \end{tabular}
        \vspace{-2.5mm}
        \caption*{``A man is holding a cell phone in front of a mountain''}
    \end{subfigure}

    \vspace{2mm}
    
    \begin{subfigure}[t]{0.49\textwidth}
        \centering
        \begin{tabular}{@{}c@{\hspace{1mm}}c@{\hspace{1mm}}c@{}}
            \includegraphics[width=0.32\linewidth]{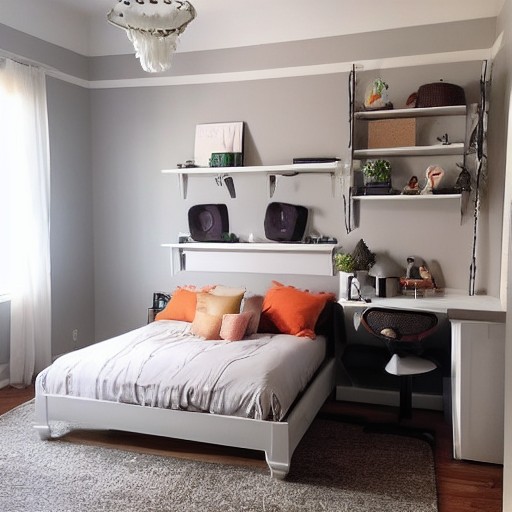} &
            \includegraphics[width=0.32\linewidth]{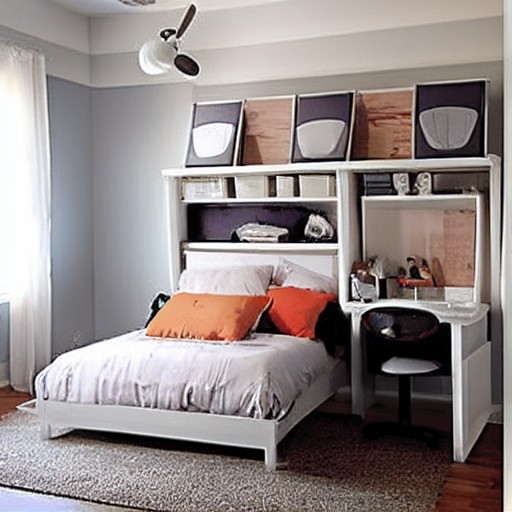} &
            \includegraphics[width=0.32\linewidth]{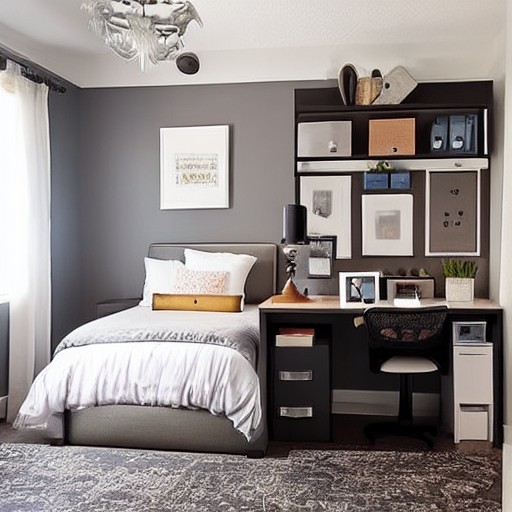} \\
        \end{tabular}
        \vspace{-2.5mm}
        \caption*{``A bed sits on top of a dresser and a desk''}
    \end{subfigure}
    \hfill
    \begin{subfigure}[t]{0.49\textwidth}
        \centering
        \begin{tabular}{@{}c@{\hspace{1mm}}c@{\hspace{1mm}}c@{}}
            \includegraphics[width=0.32\linewidth]{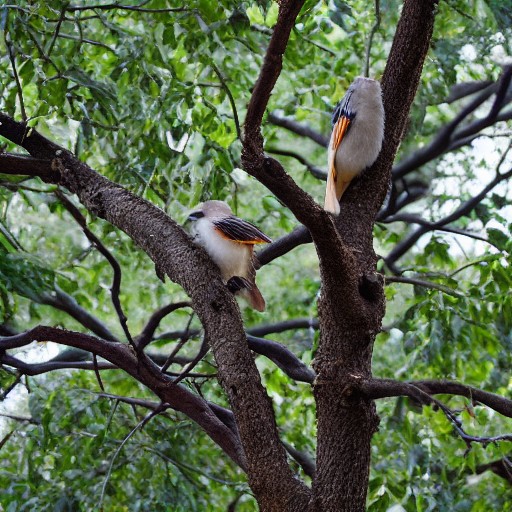} &
            \includegraphics[width=0.32\linewidth]{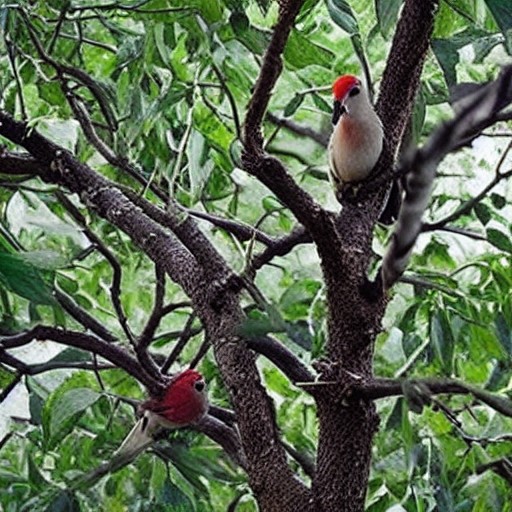} &
            \includegraphics[width=0.32\linewidth]{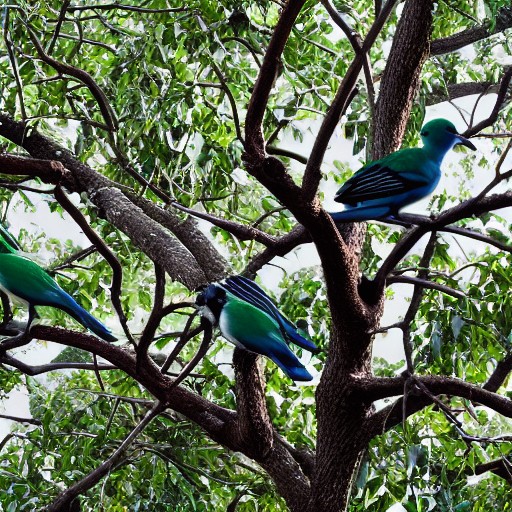} \\
        \end{tabular}
        \vspace{-2.5mm}
        \caption*{``Many birds perched on the limbs of a tree''}
    \end{subfigure}
    
    \caption{\textbf{Sample comparison on SDv1.5.} Generated samples from three approaches: (i) DDIM~\citep{song2020denoising}, (ii) MinorityPrompt~\citep{um2025minority}, and (iii) Ours. Six prompts were used, and random seeds were shared across all methods.}
    \label{fig:apdx_qualitative_t2i_sd15}
\end{figure*}

\begin{figure*}[t]
    \centering
    \begin{subfigure}[t]{0.49\textwidth}
        \centering
        \begin{tabular}{@{}c@{\hspace{1mm}}c@{\hspace{1mm}}c@{}}
             DDIM &  MinorityPrompt &  Ours \\
            \includegraphics[width=0.32\linewidth]{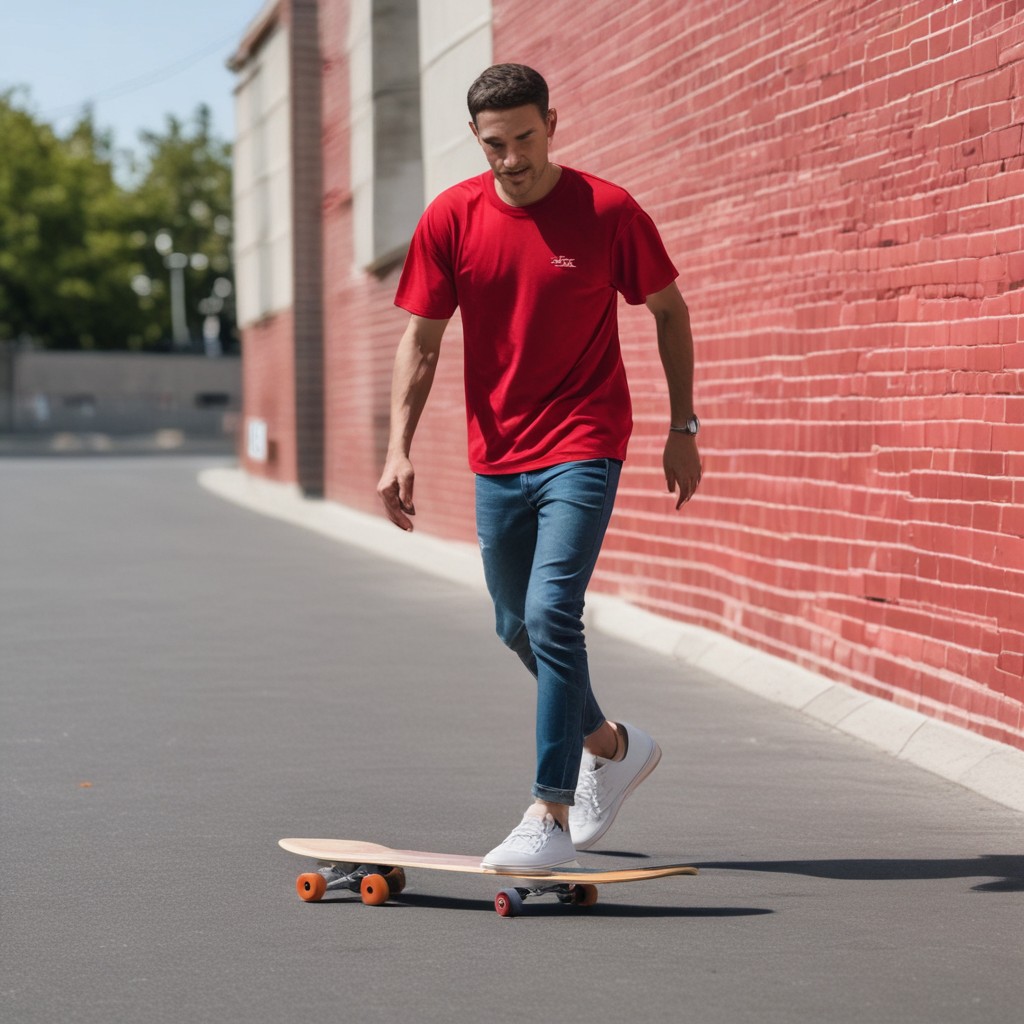} &
            \includegraphics[width=0.32\linewidth]{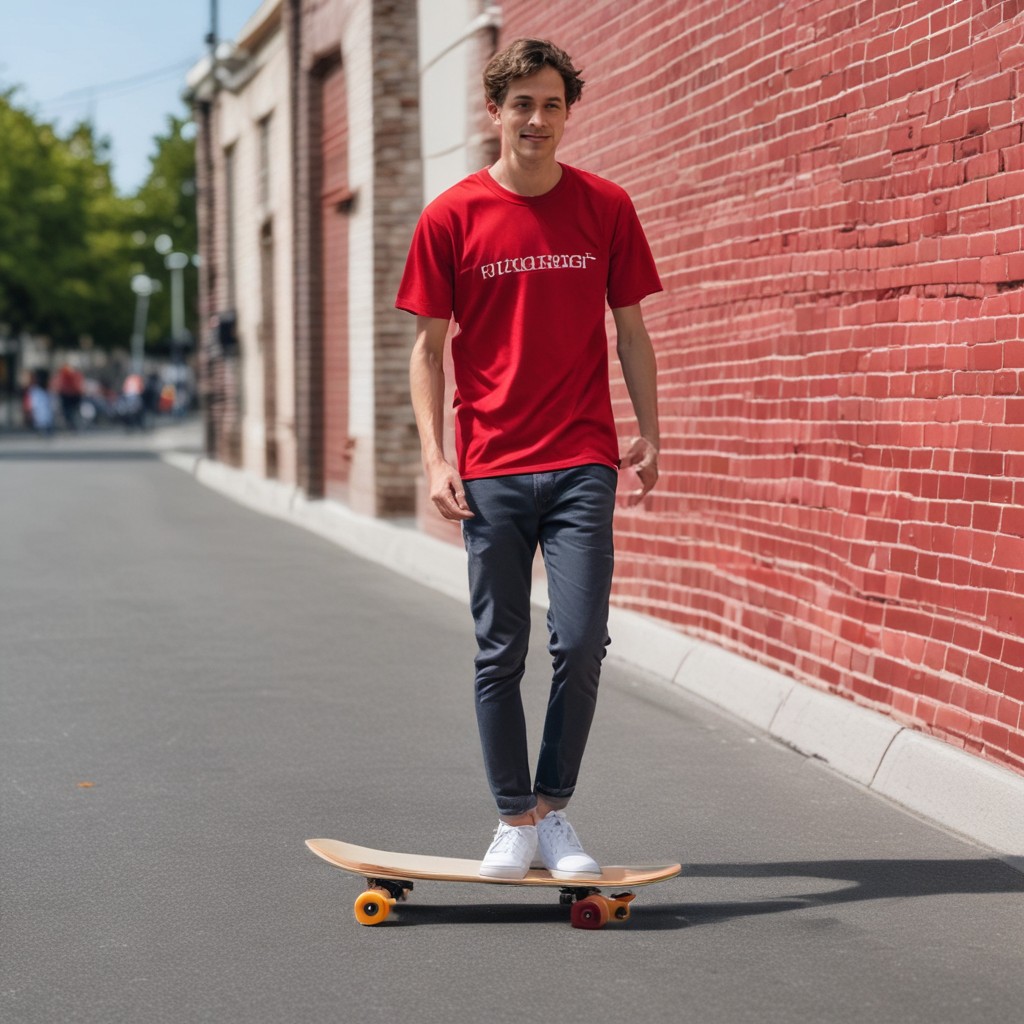} &
            \includegraphics[width=0.32\linewidth]{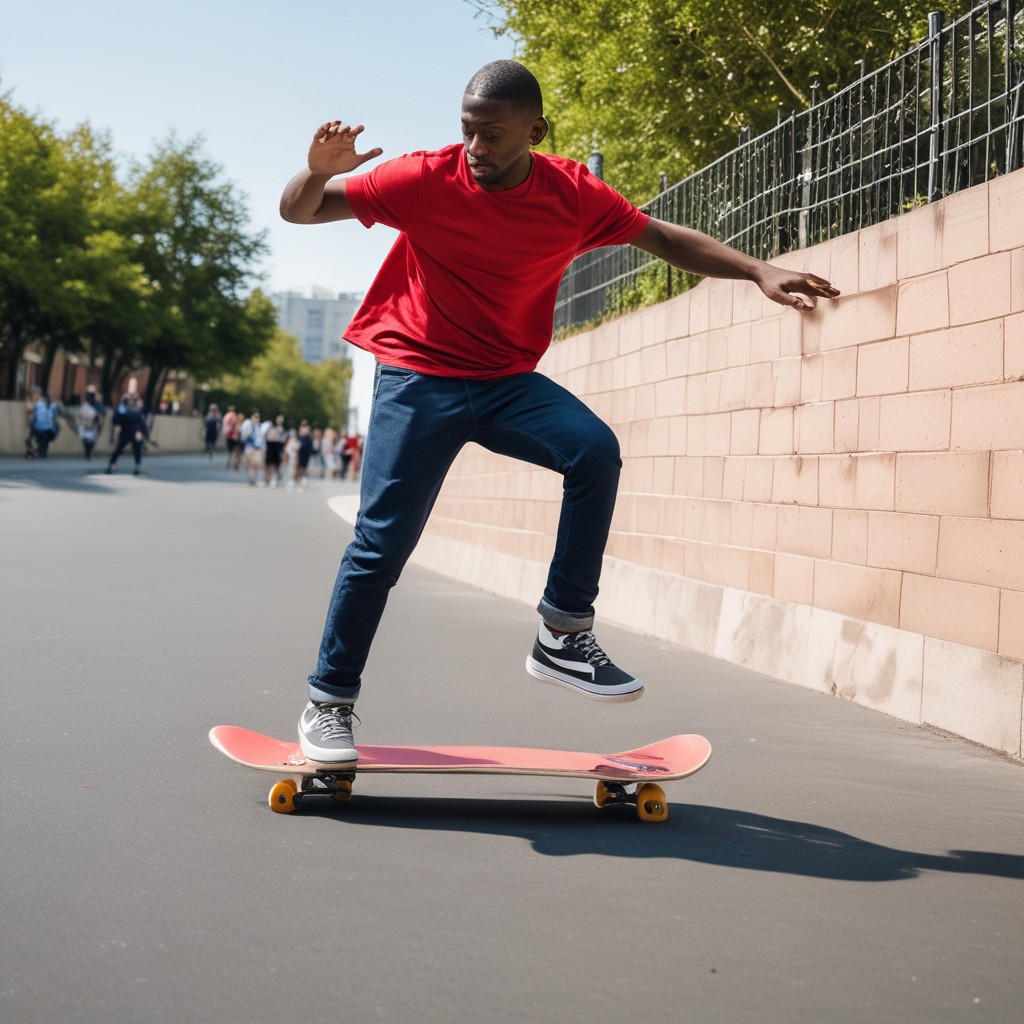} \\
        \end{tabular}
        \vspace{-2.5mm}
        \caption*{``A man wearing a red shirt on a skateboard''}
    \end{subfigure}
    \hfill
    \begin{subfigure}[t]{0.49\textwidth}
        \centering
        \begin{tabular}{@{}c@{\hspace{1mm}}c@{\hspace{1mm}}c@{}}
             DDIM &  MinorityPrompt &  Ours \\
            \includegraphics[width=0.32\linewidth]{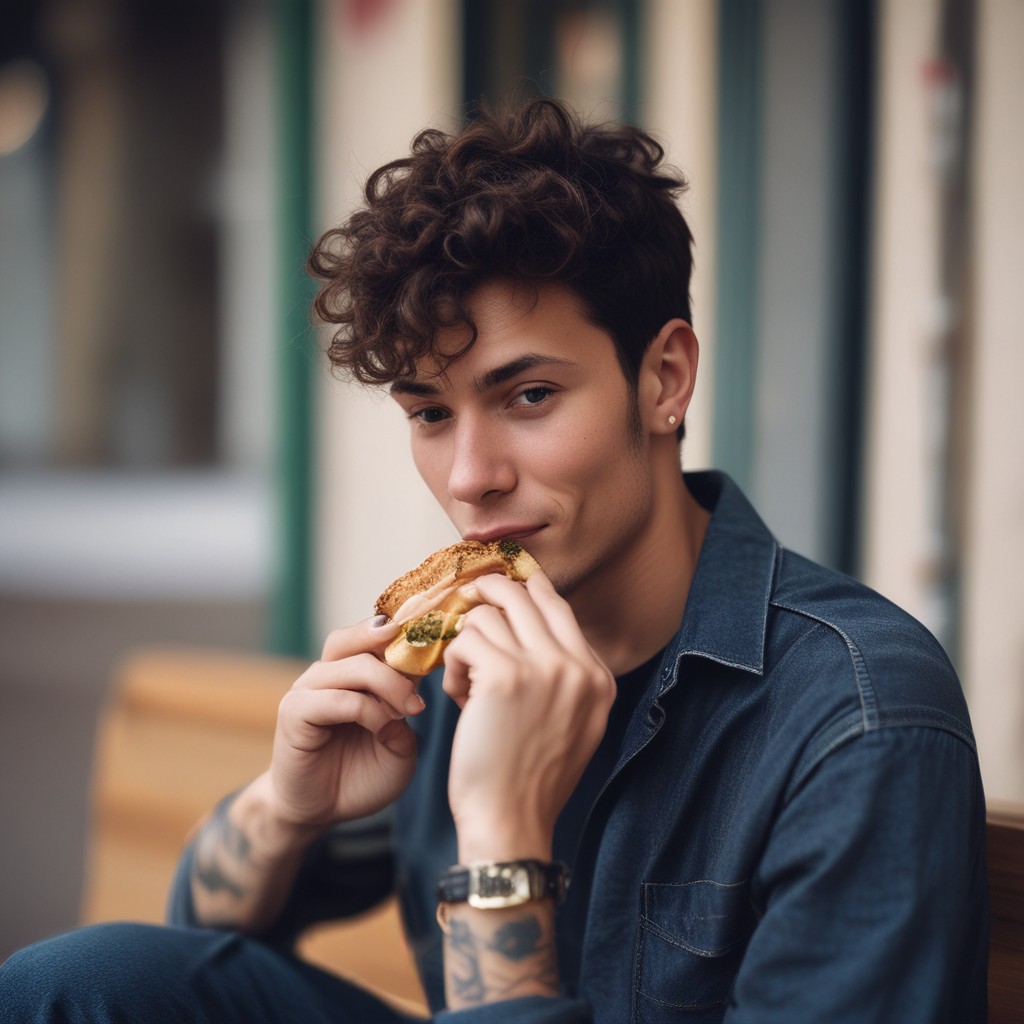} &
            \includegraphics[width=0.32\linewidth]{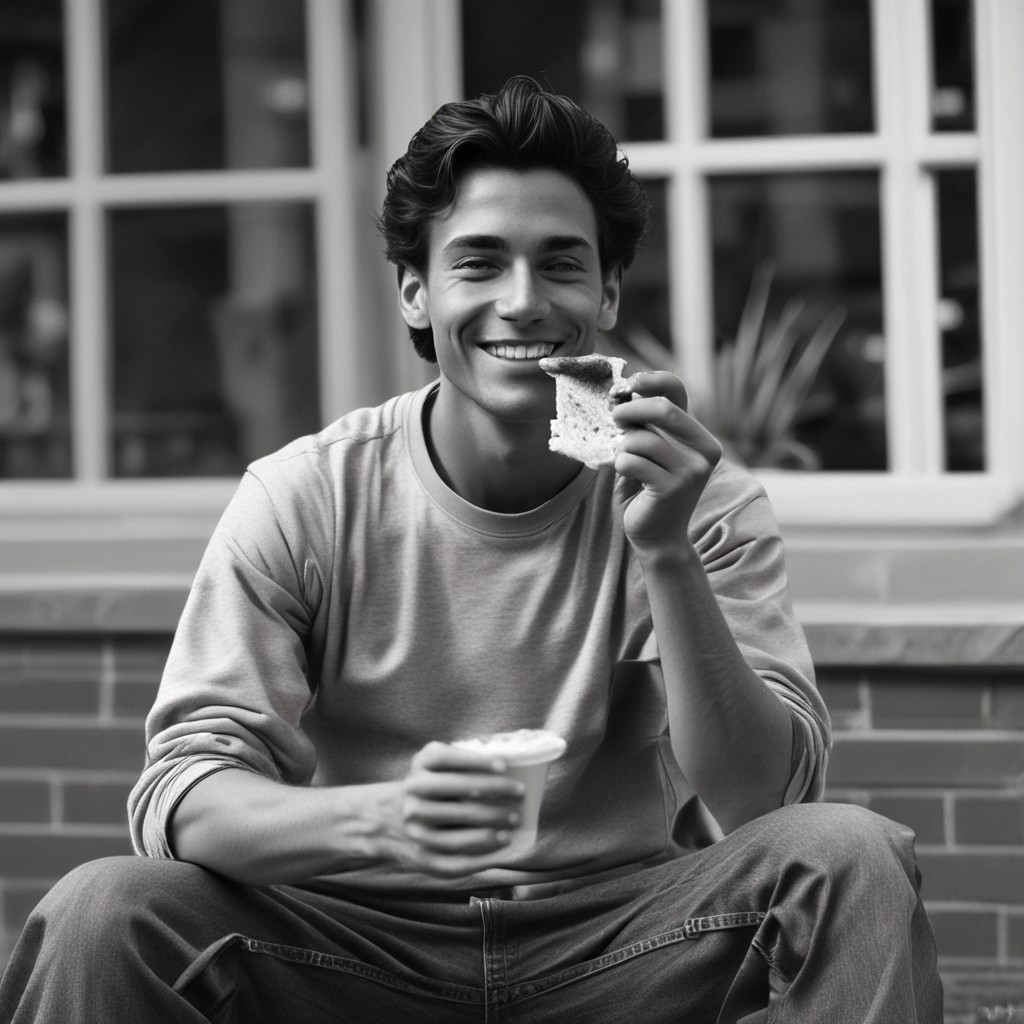} &
            \includegraphics[width=0.32\linewidth]{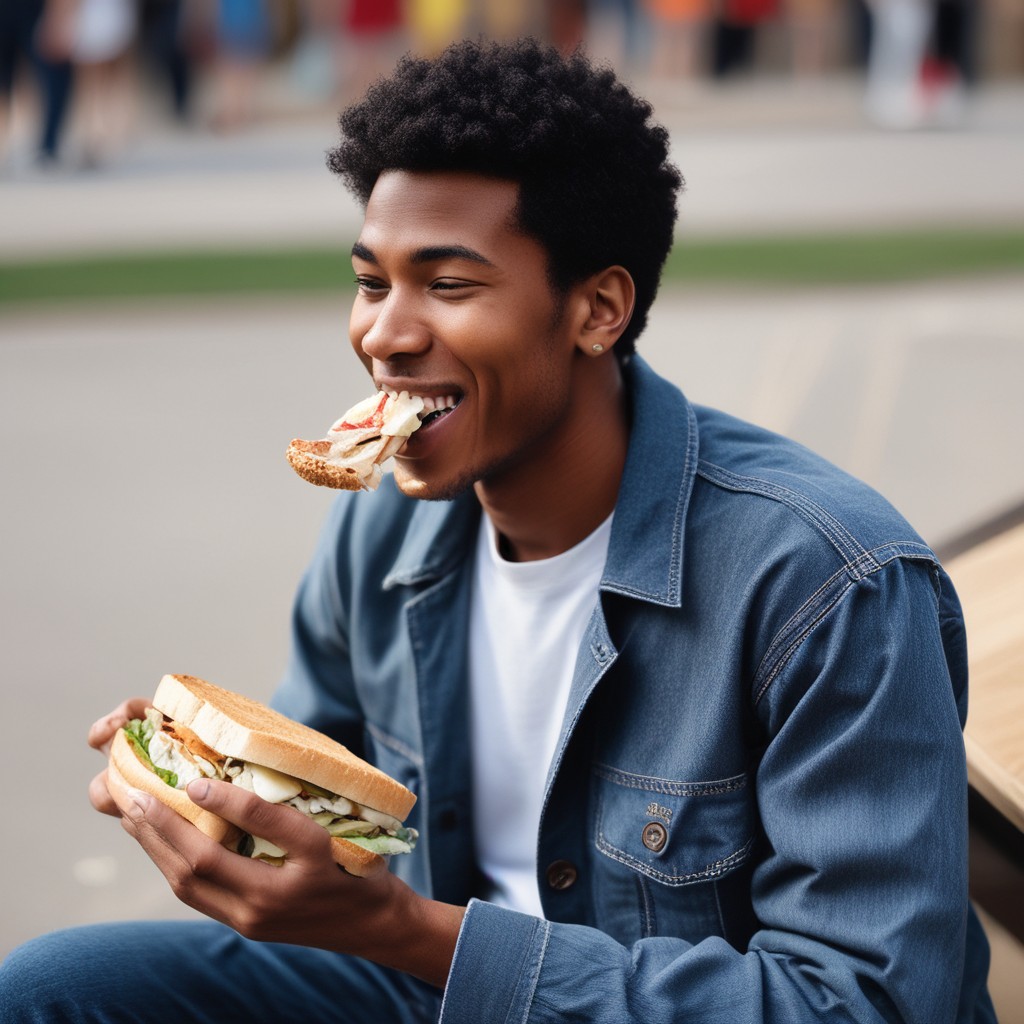} \\
        \end{tabular}
        \vspace{-2.5mm}
        \caption*{``A young man pauses while eating a sandwich''}
    \end{subfigure}
    
    \vspace{2mm}

    \begin{subfigure}[t]{0.49\textwidth}
        \centering
        \begin{tabular}{@{}c@{\hspace{1mm}}c@{\hspace{1mm}}c@{}}
            \includegraphics[width=0.32\linewidth]{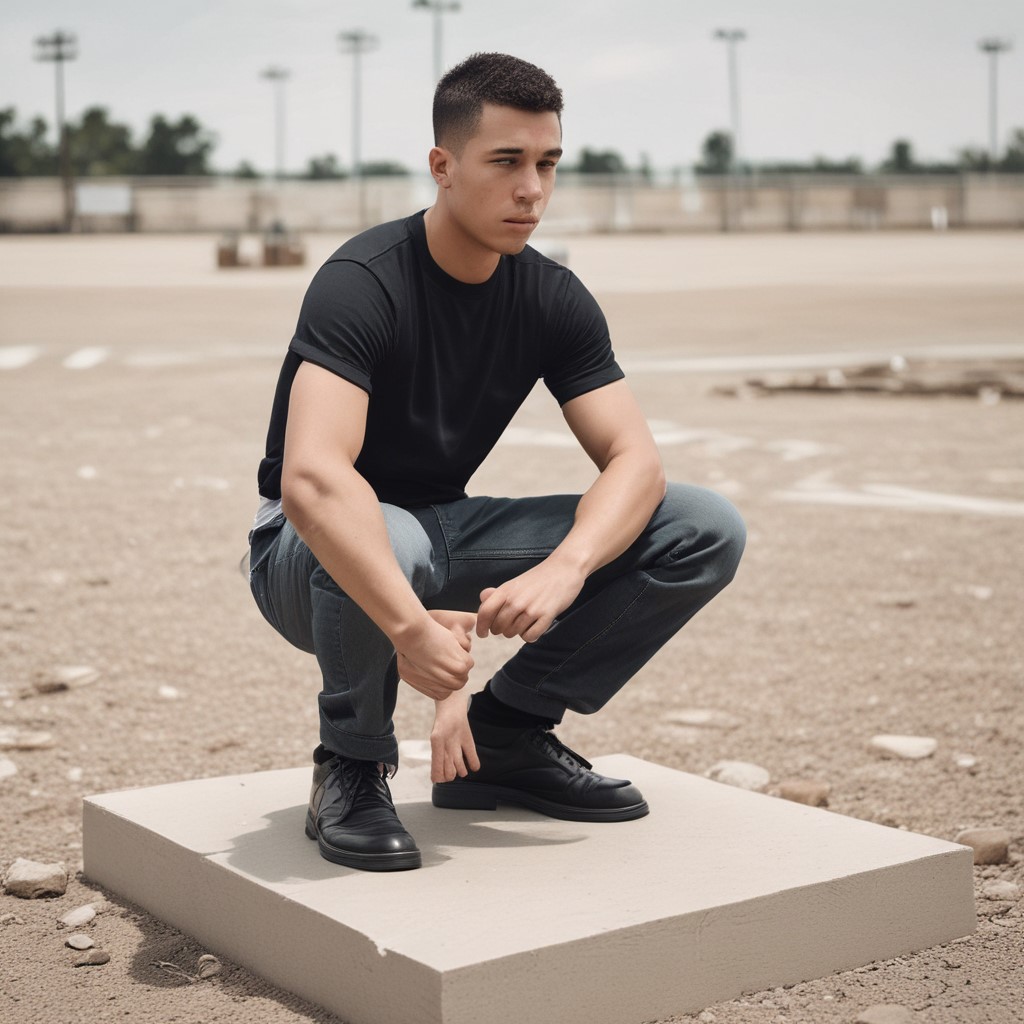} &
            \includegraphics[width=0.32\linewidth]{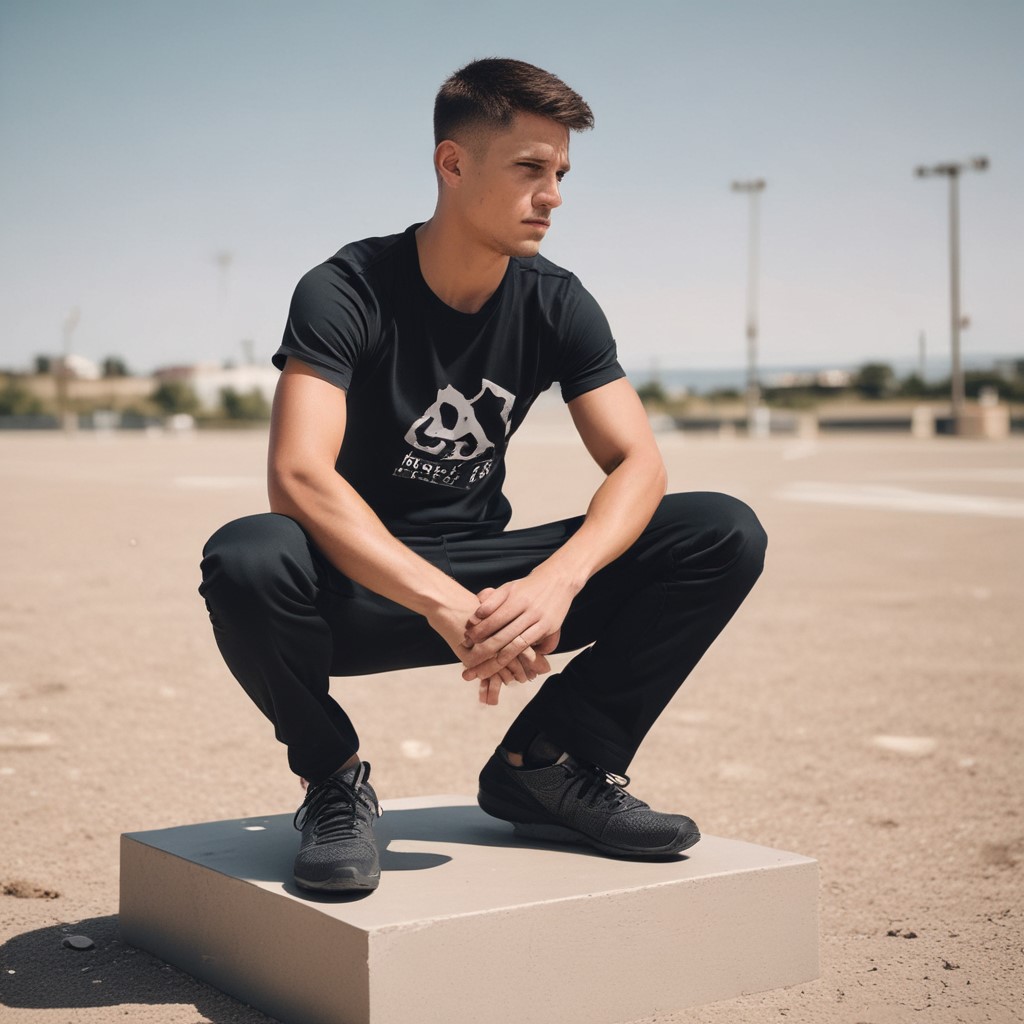} &
            \includegraphics[width=0.32\linewidth]{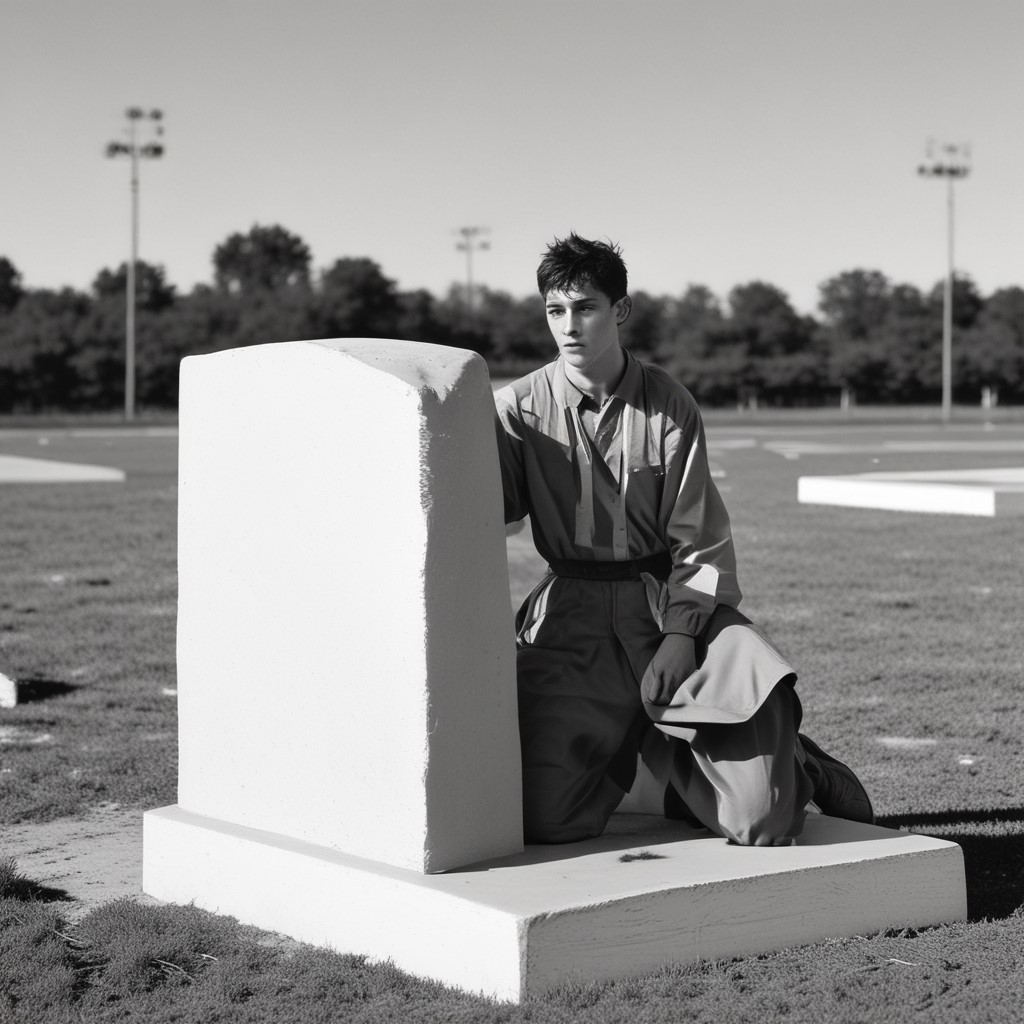} \\
        \end{tabular}
        \vspace{-2.5mm}
        \caption*{``A young man kneeling on top of a base''}
    \end{subfigure}
    \hfill
    \begin{subfigure}[t]{0.49\textwidth}
        \centering
        \begin{tabular}{@{}c@{\hspace{1mm}}c@{\hspace{1mm}}c@{}}
            \includegraphics[width=0.32\linewidth]{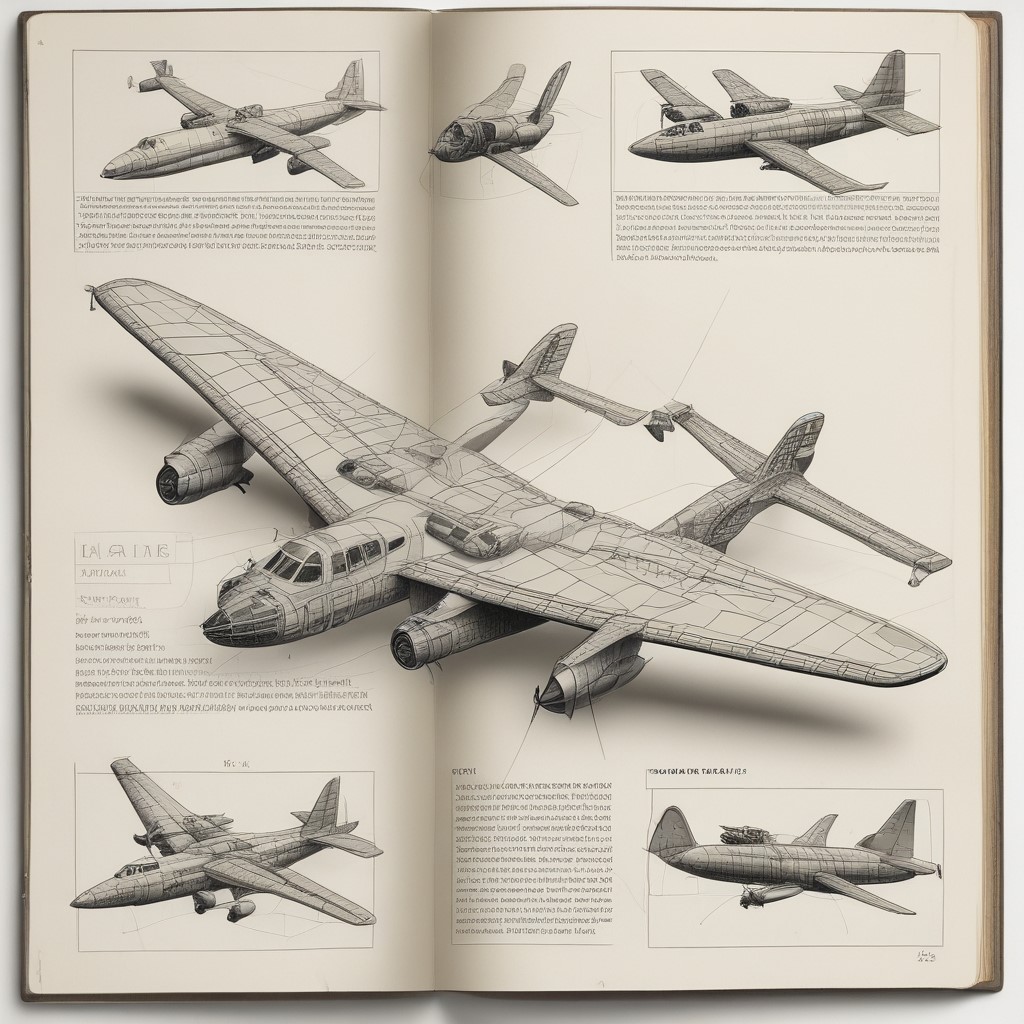} &
            \includegraphics[width=0.32\linewidth]{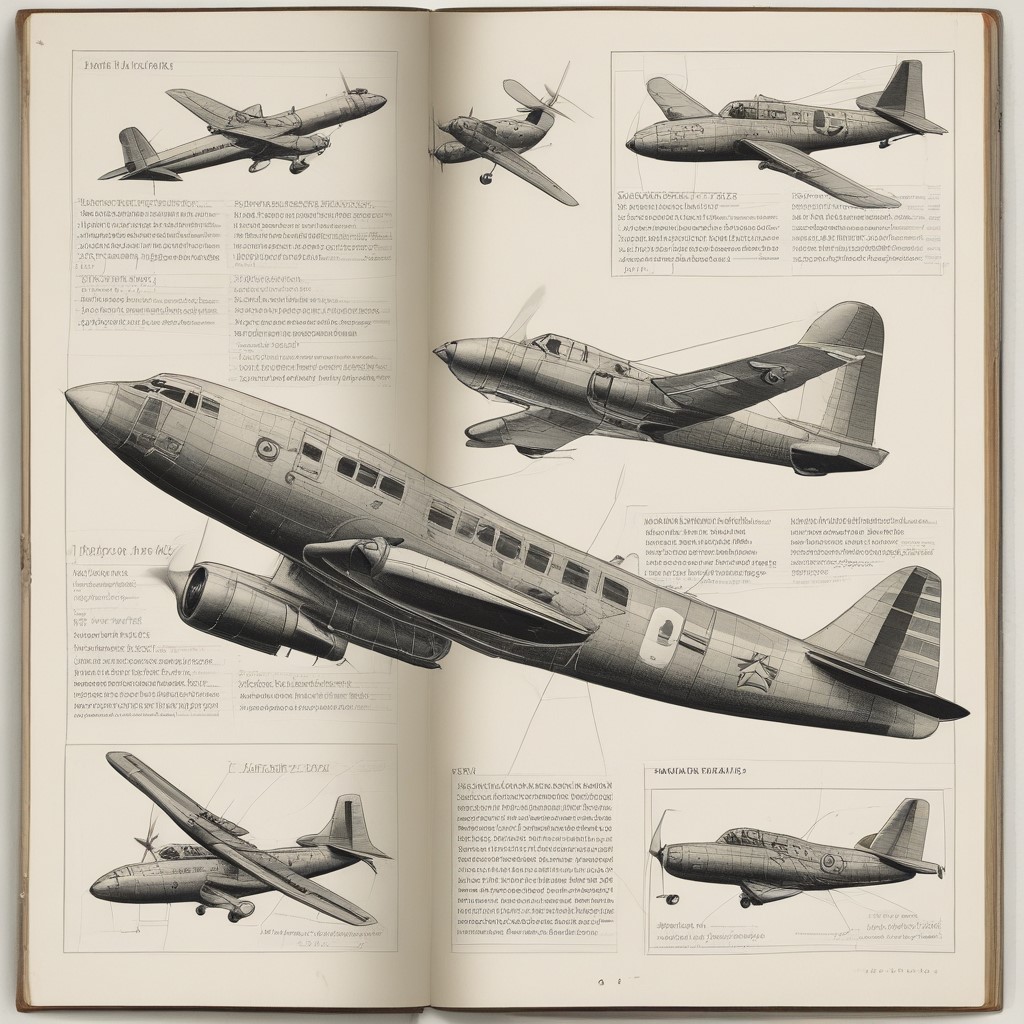} &
            \includegraphics[width=0.32\linewidth]{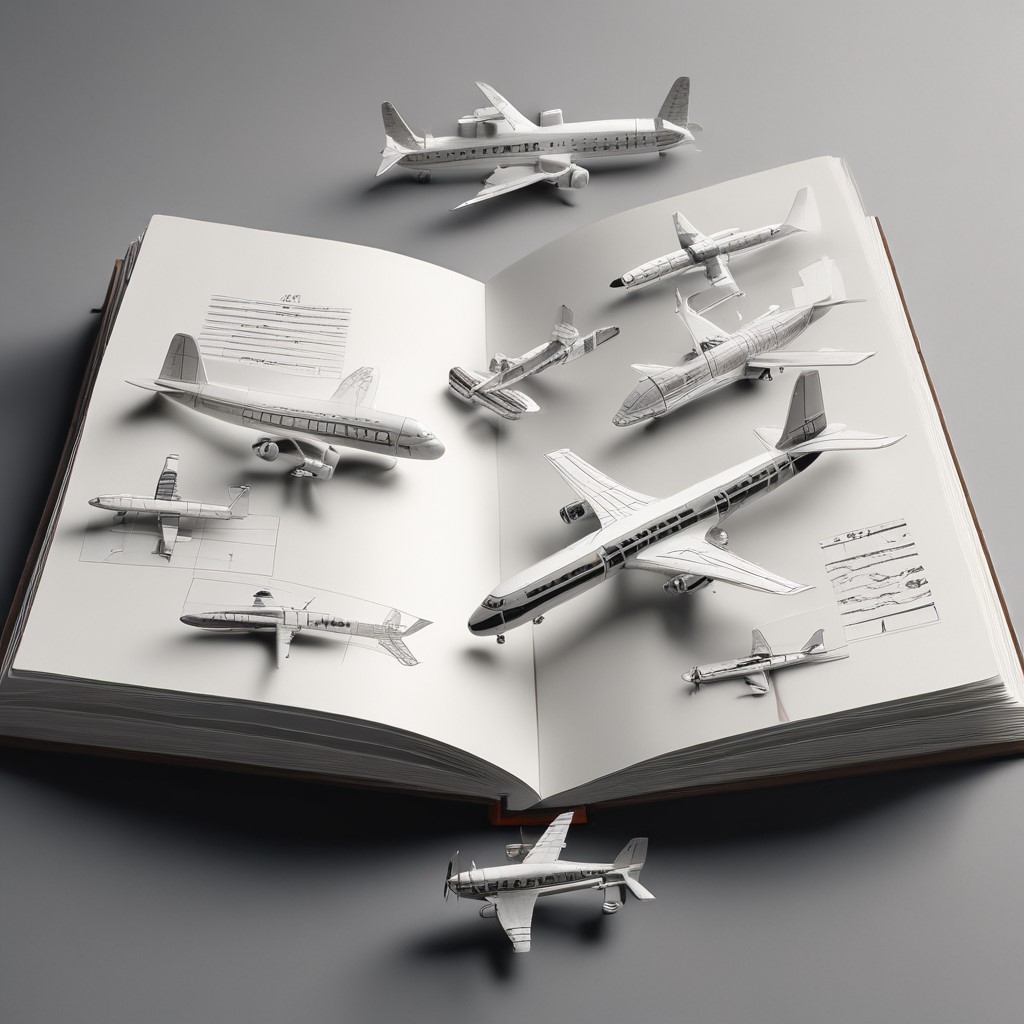} \\
        \end{tabular}
        \vspace{-2.5mm}
        \caption*{``Open book with images of various airplane models and shapes''}
    \end{subfigure}
    
    \vspace{2mm}
    
    \begin{subfigure}[t]{0.49\textwidth}
        \centering
        \begin{tabular}{@{}c@{\hspace{1mm}}c@{\hspace{1mm}}c@{}}
            \includegraphics[width=0.32\linewidth]{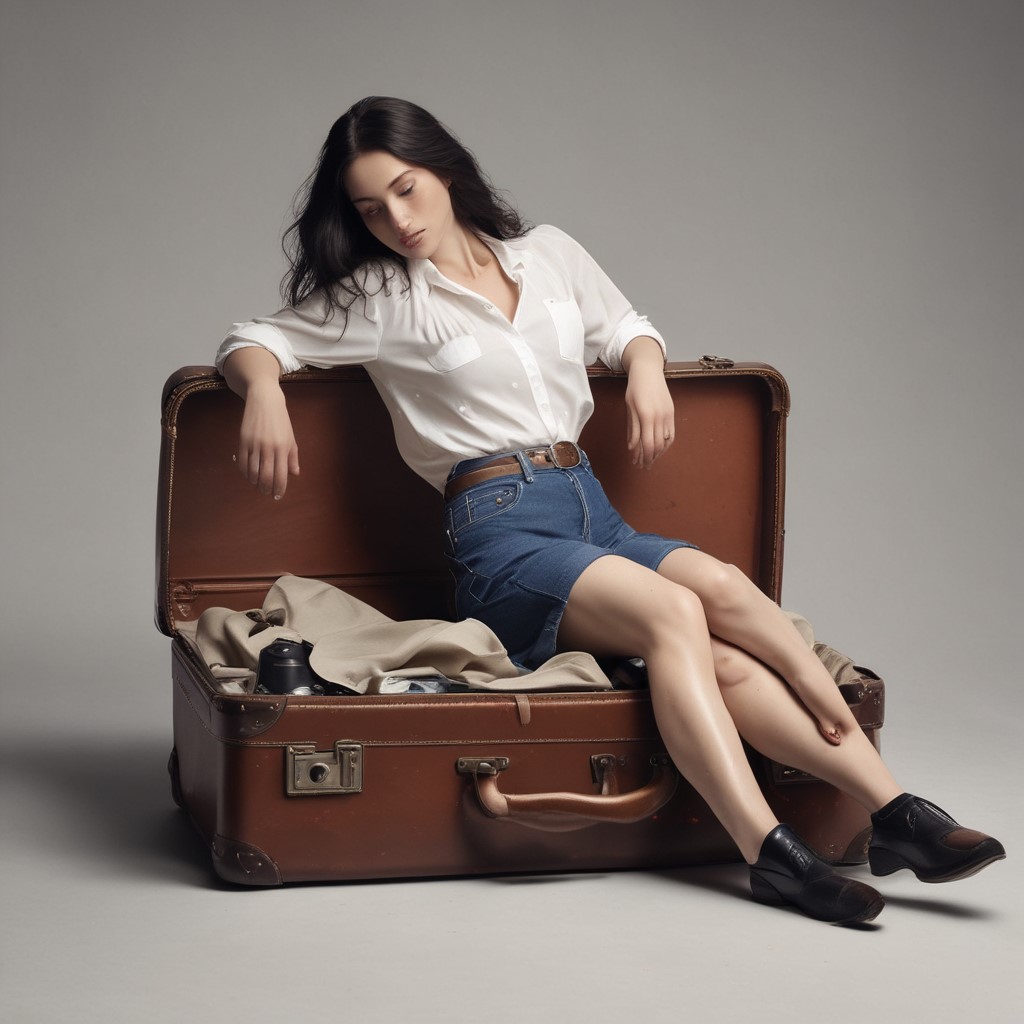} &
            \includegraphics[width=0.32\linewidth]{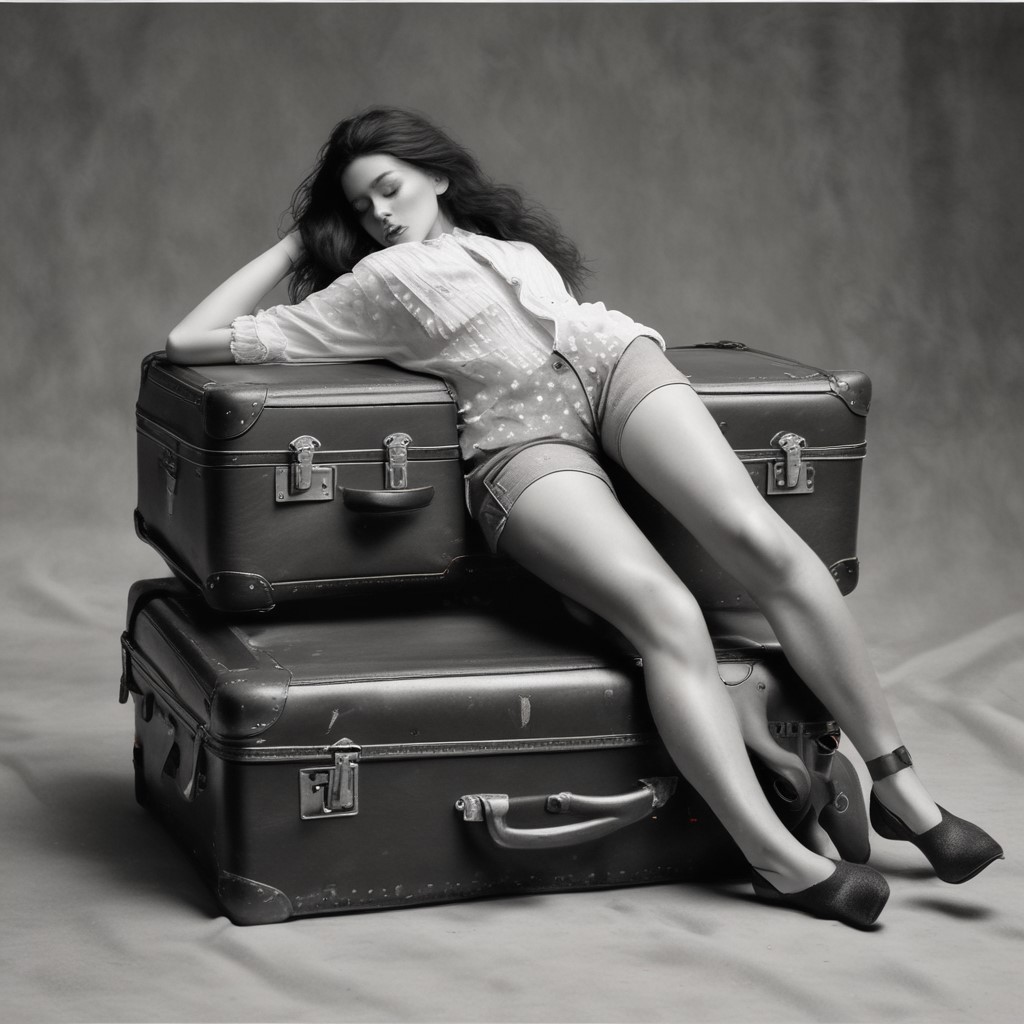} &
            \includegraphics[width=0.32\linewidth]{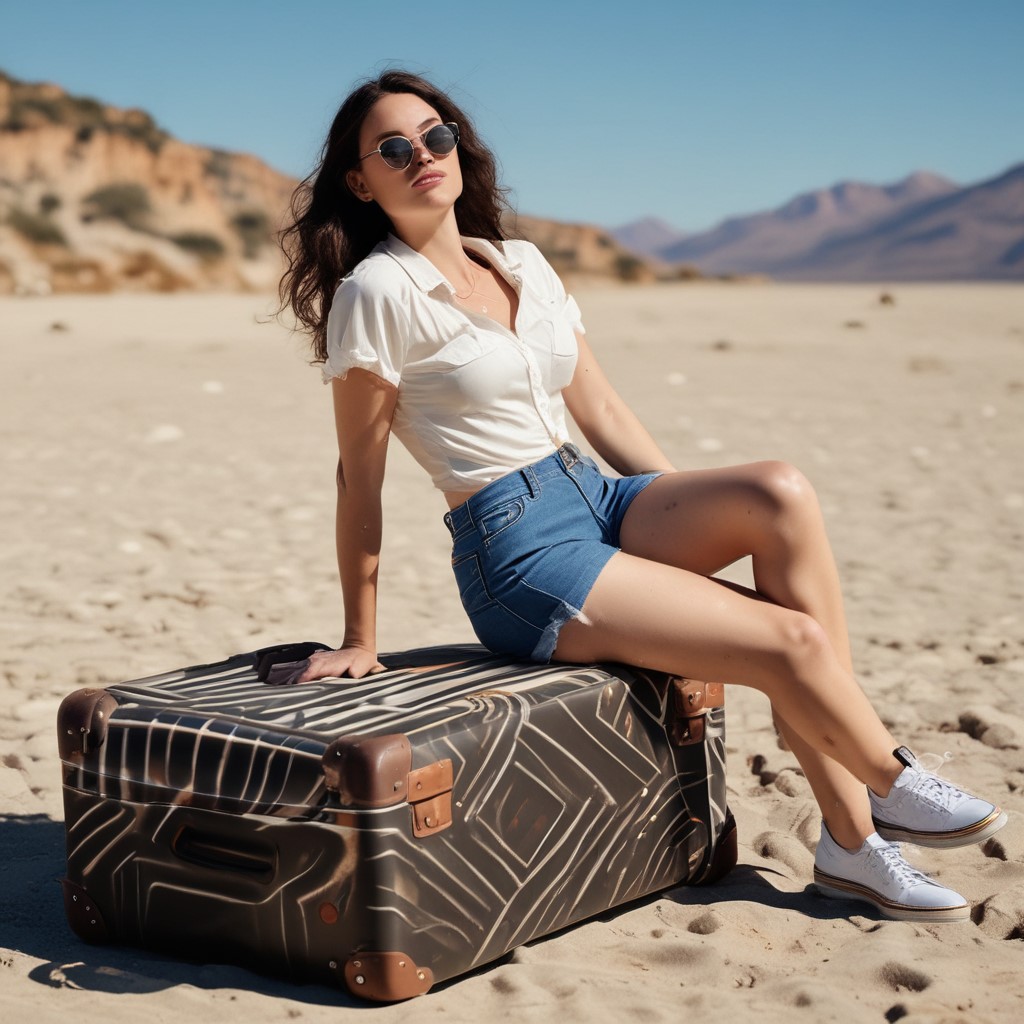} \\
        \end{tabular}
        \vspace{-2.5mm}
        \caption*{``A woman is laying on top of a suitcase''}
    \end{subfigure}
    \hfill
    \begin{subfigure}[t]{0.49\textwidth}
        \centering
        \begin{tabular}{@{}c@{\hspace{1mm}}c@{\hspace{1mm}}c@{}}
            \includegraphics[width=0.32\linewidth]{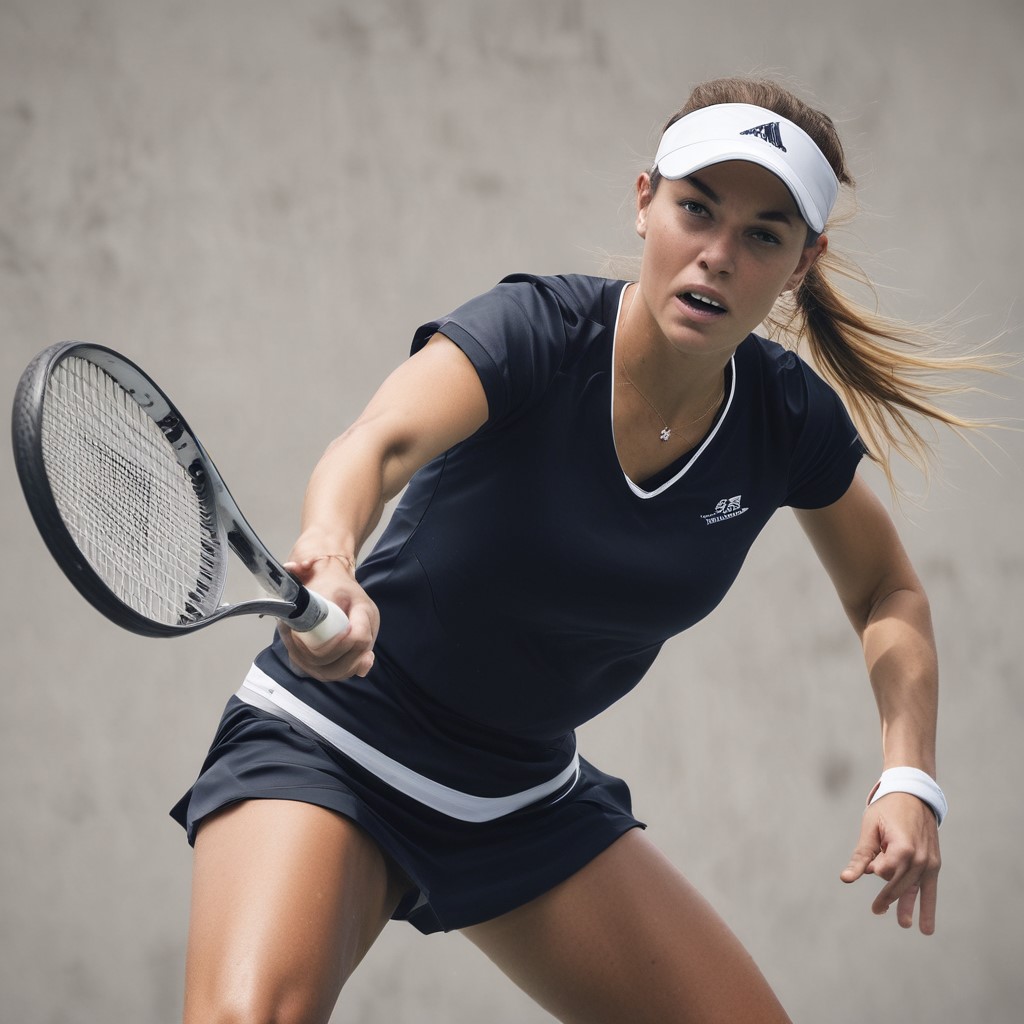} &
            \includegraphics[width=0.32\linewidth]{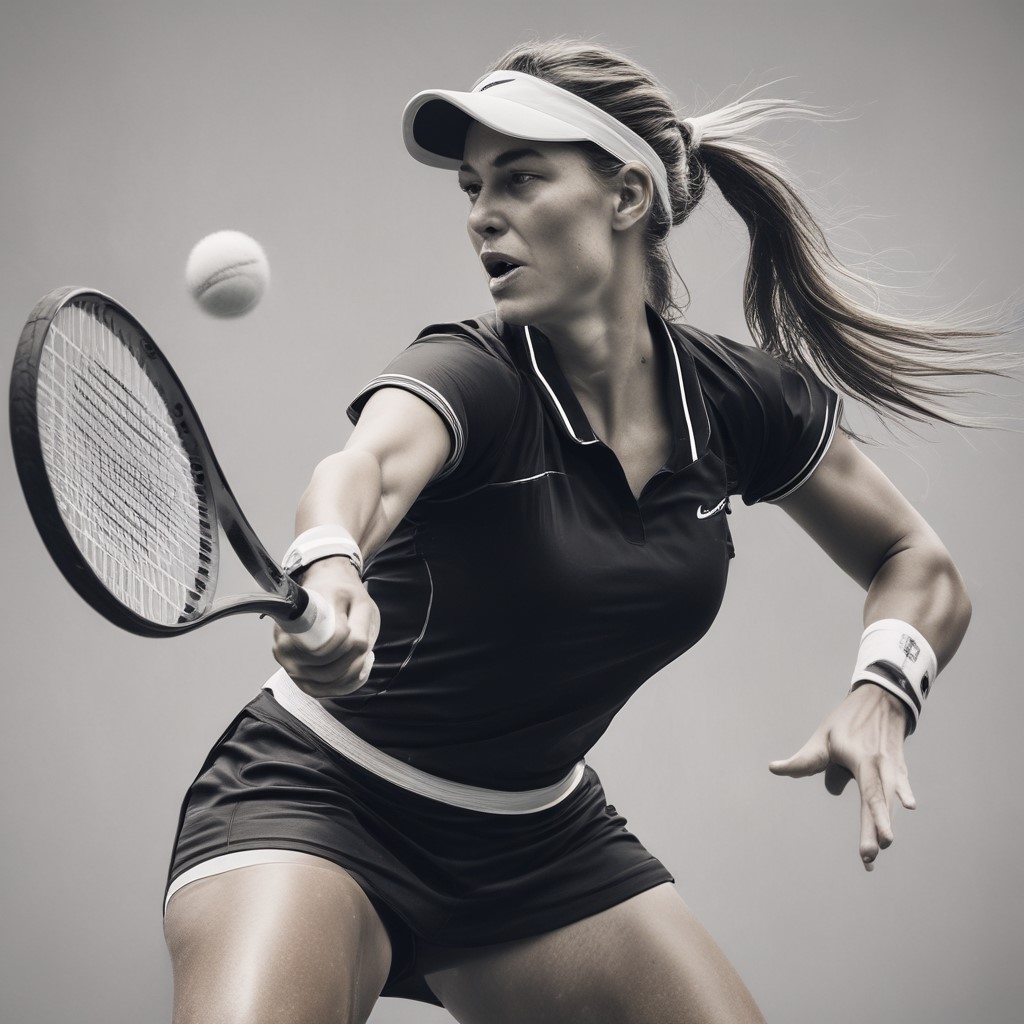} &
            \includegraphics[width=0.32\linewidth]{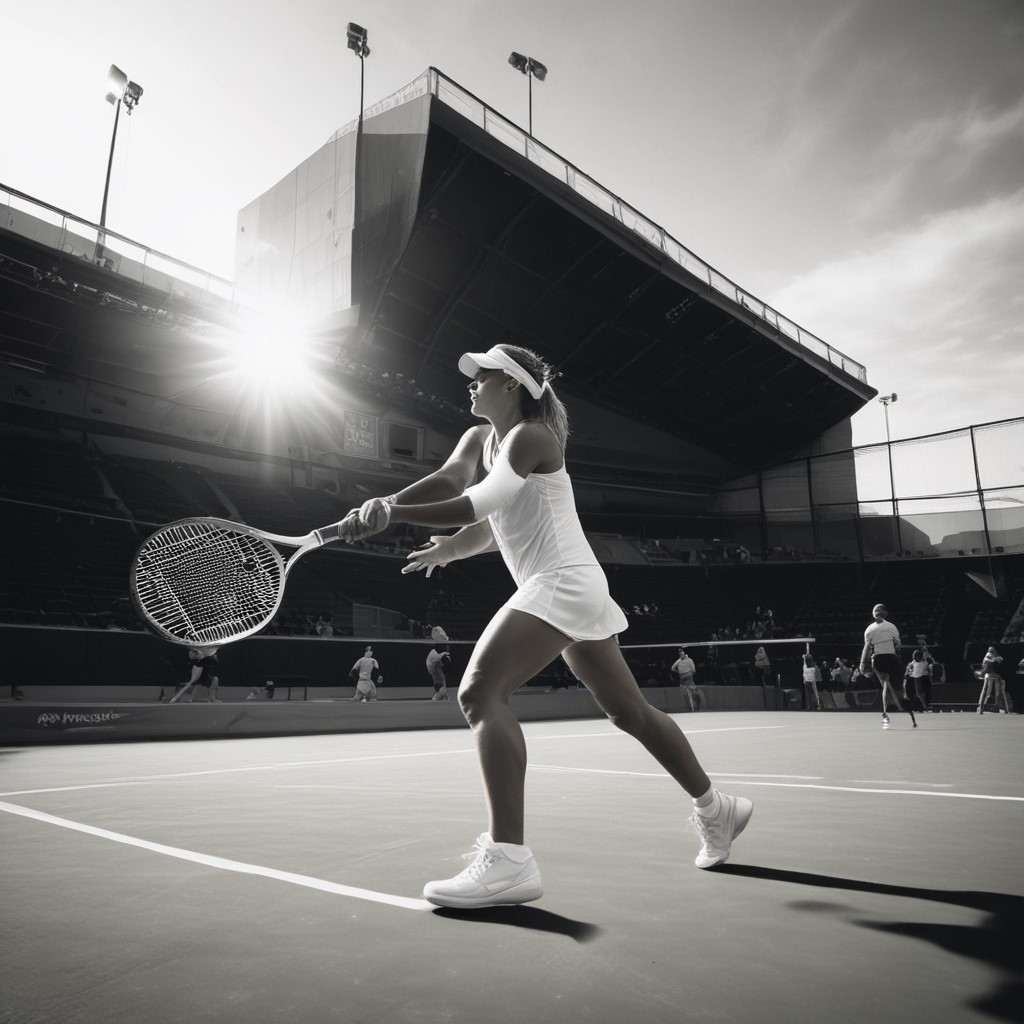} \\
        \end{tabular}
        \vspace{-2.5mm}
        \caption*{``A tennis player winds up to hit a serve to her opponent''}
    \end{subfigure}

    \vspace{2mm}

    \begin{subfigure}[t]{0.49\textwidth}
        \centering
        \begin{tabular}{@{}c@{\hspace{1mm}}c@{\hspace{1mm}}c@{}}
            \includegraphics[width=0.32\linewidth]{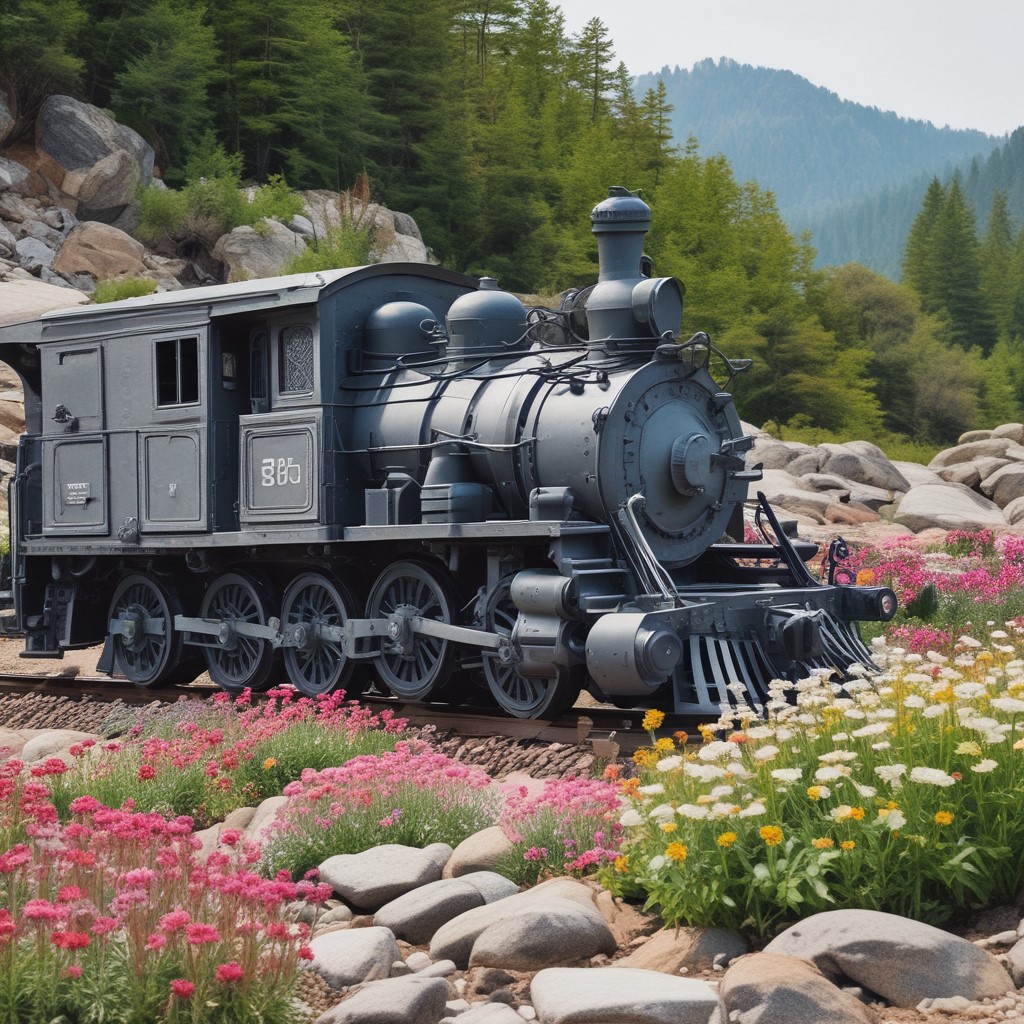} &
            \includegraphics[width=0.32\linewidth]{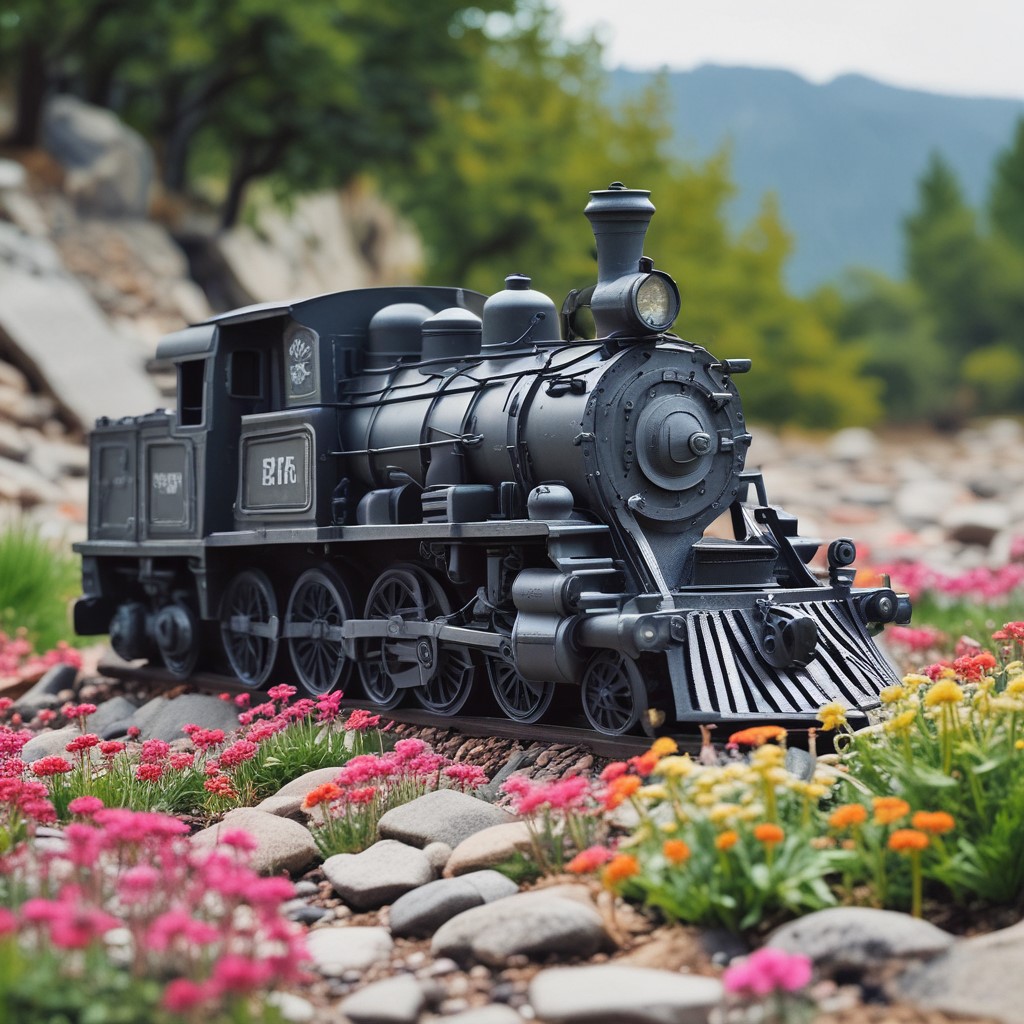} &
            \includegraphics[width=0.32\linewidth]{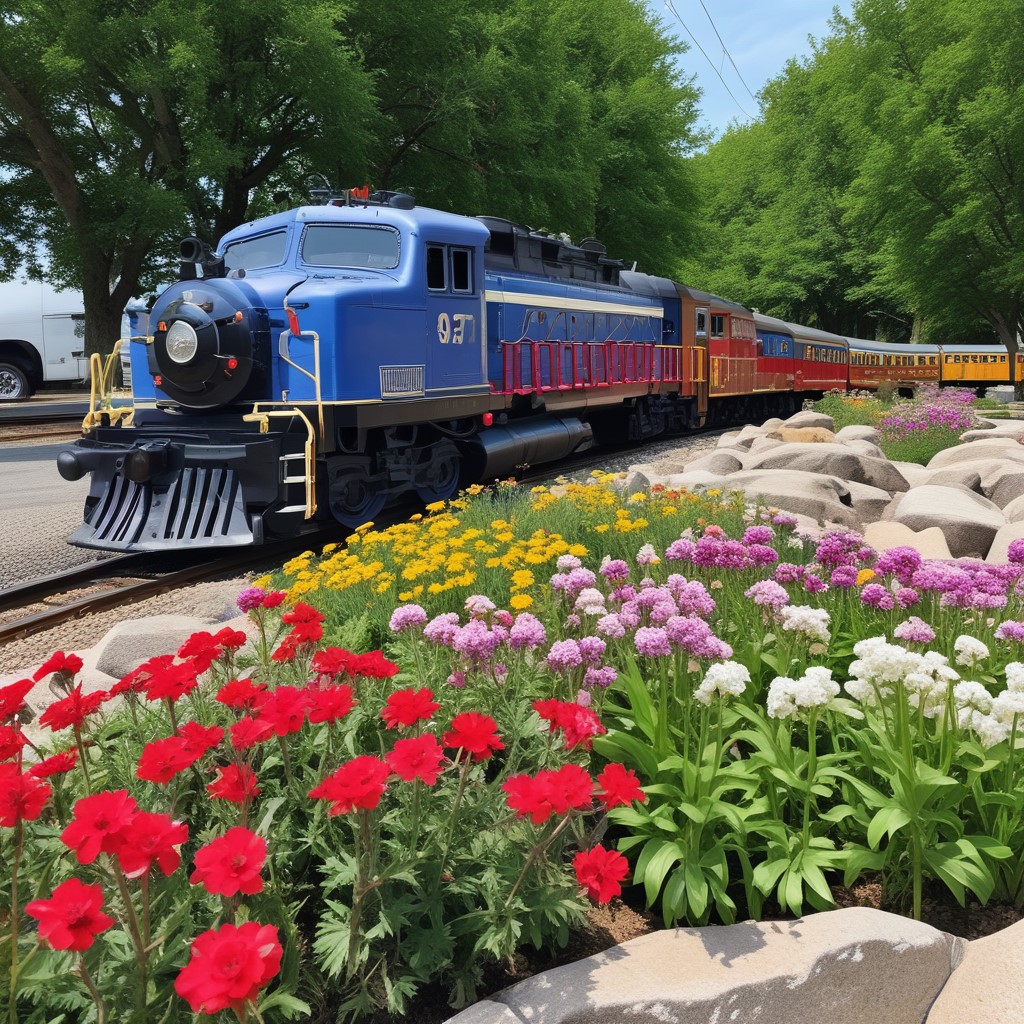} \\
        \end{tabular}
        \vspace{-2.5mm}
        \caption*{``A train monument sitting on a bed of rocks and flowers''}
    \end{subfigure}
    \hfill
    \begin{subfigure}[t]{0.49\textwidth}
        \centering
        \begin{tabular}{@{}c@{\hspace{1mm}}c@{\hspace{1mm}}c@{}}
            \includegraphics[width=0.32\linewidth]{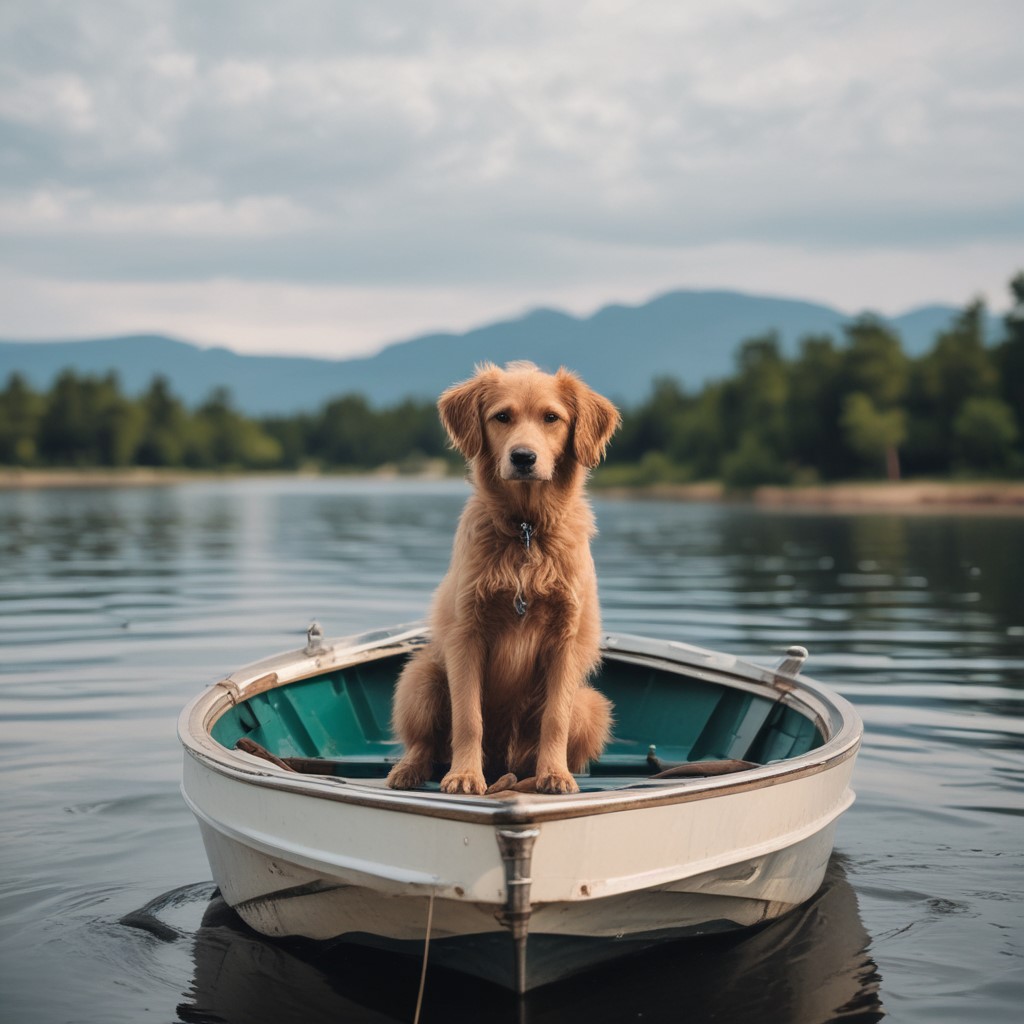} &
            \includegraphics[width=0.32\linewidth]{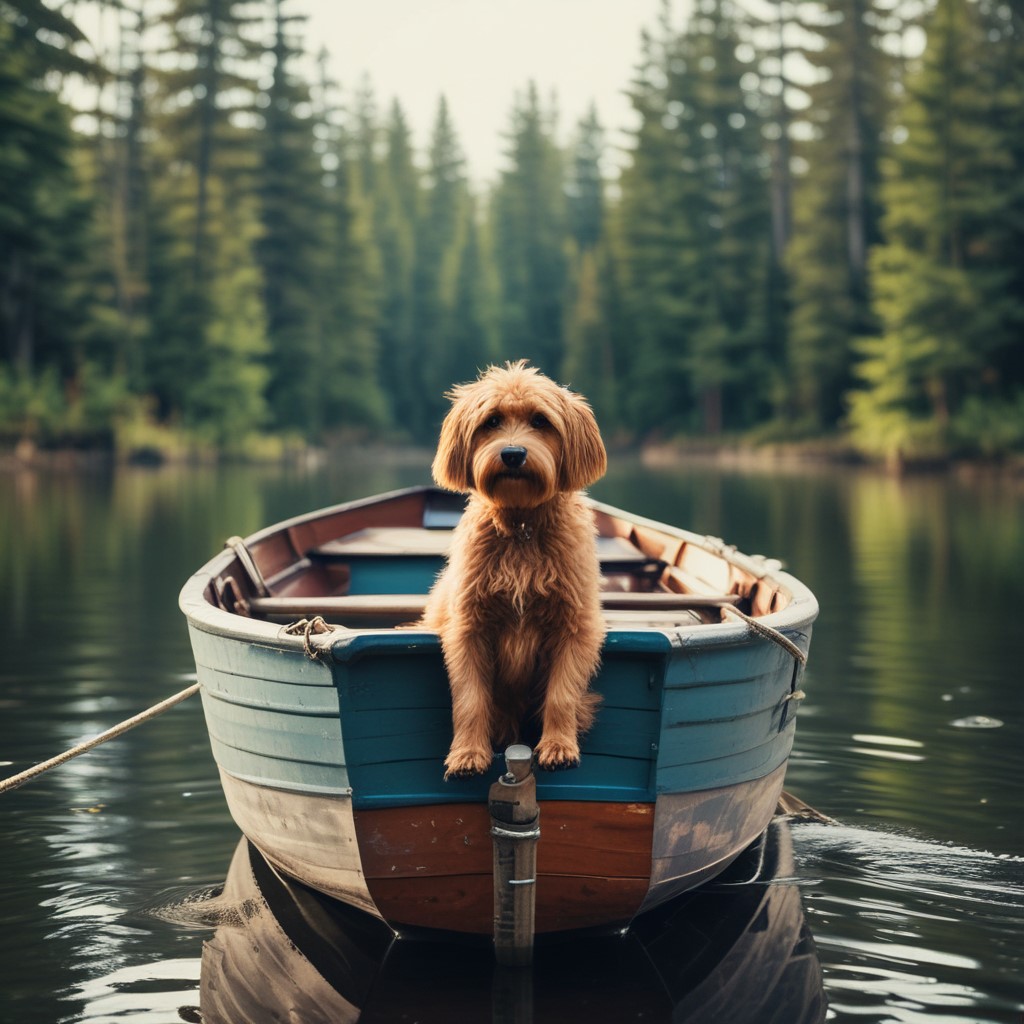} &
            \includegraphics[width=0.32\linewidth]{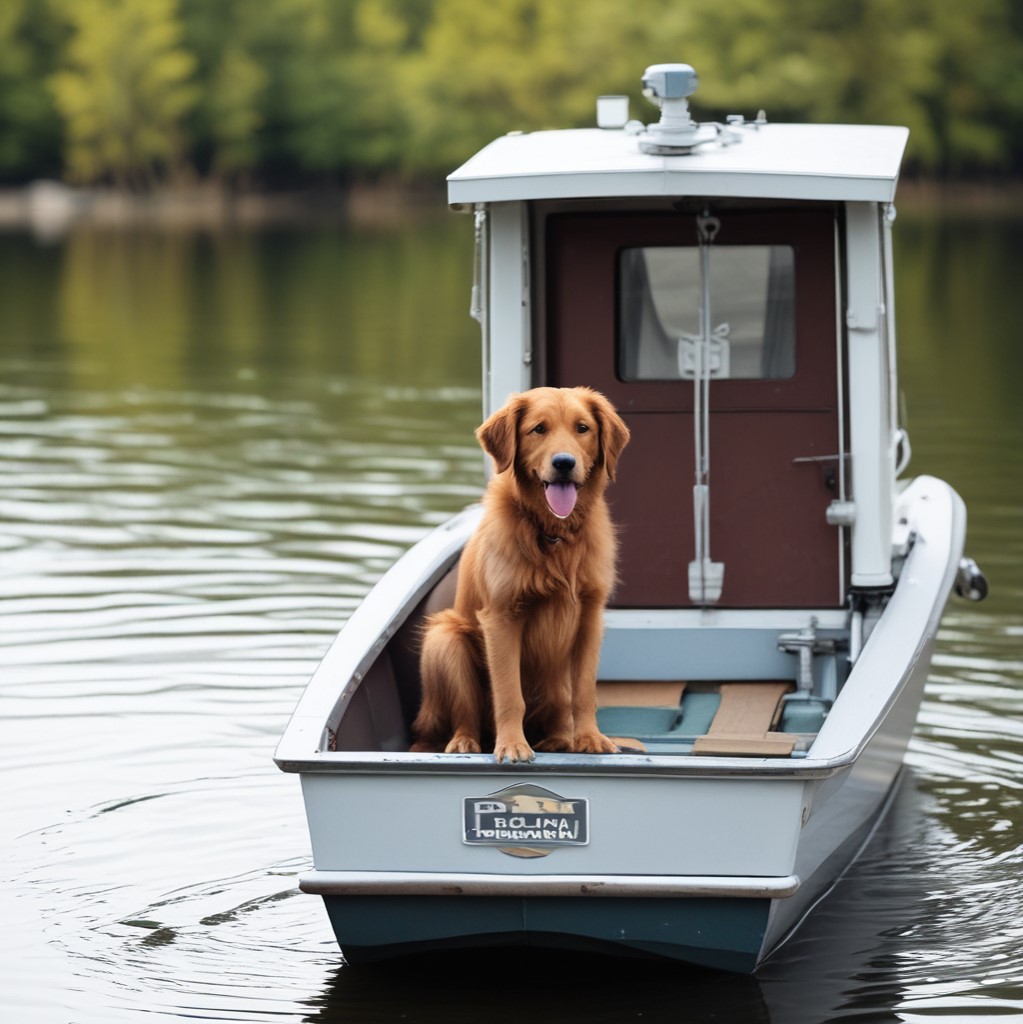} \\
        \end{tabular}
        \vspace{-2.5mm}
        \caption*{``A dog sits on a boat floating in water''}
    \end{subfigure}

    \vspace{2mm}

    \begin{subfigure}[t]{0.49\textwidth}
        \centering
        \begin{tabular}{@{}c@{\hspace{1mm}}c@{\hspace{1mm}}c@{}}
            \includegraphics[width=0.32\linewidth]{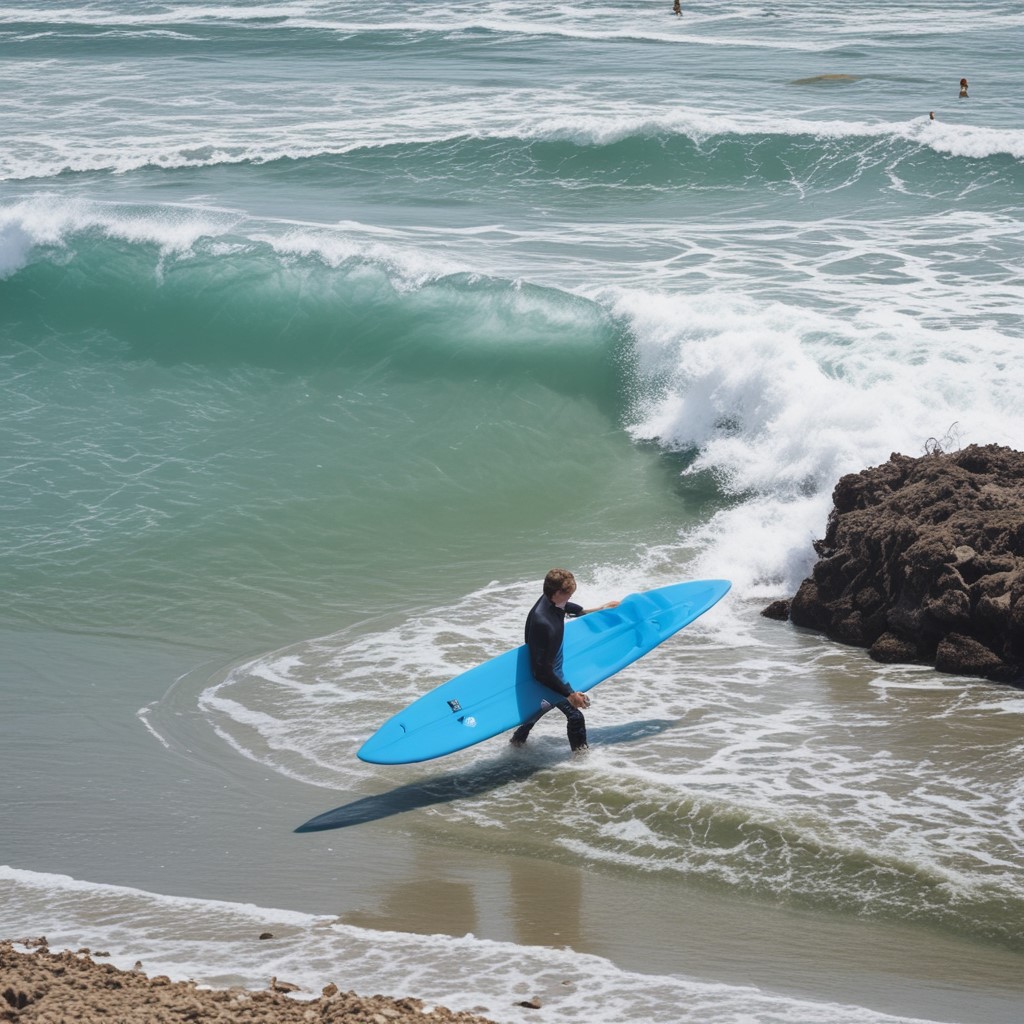} &
            \includegraphics[width=0.32\linewidth]{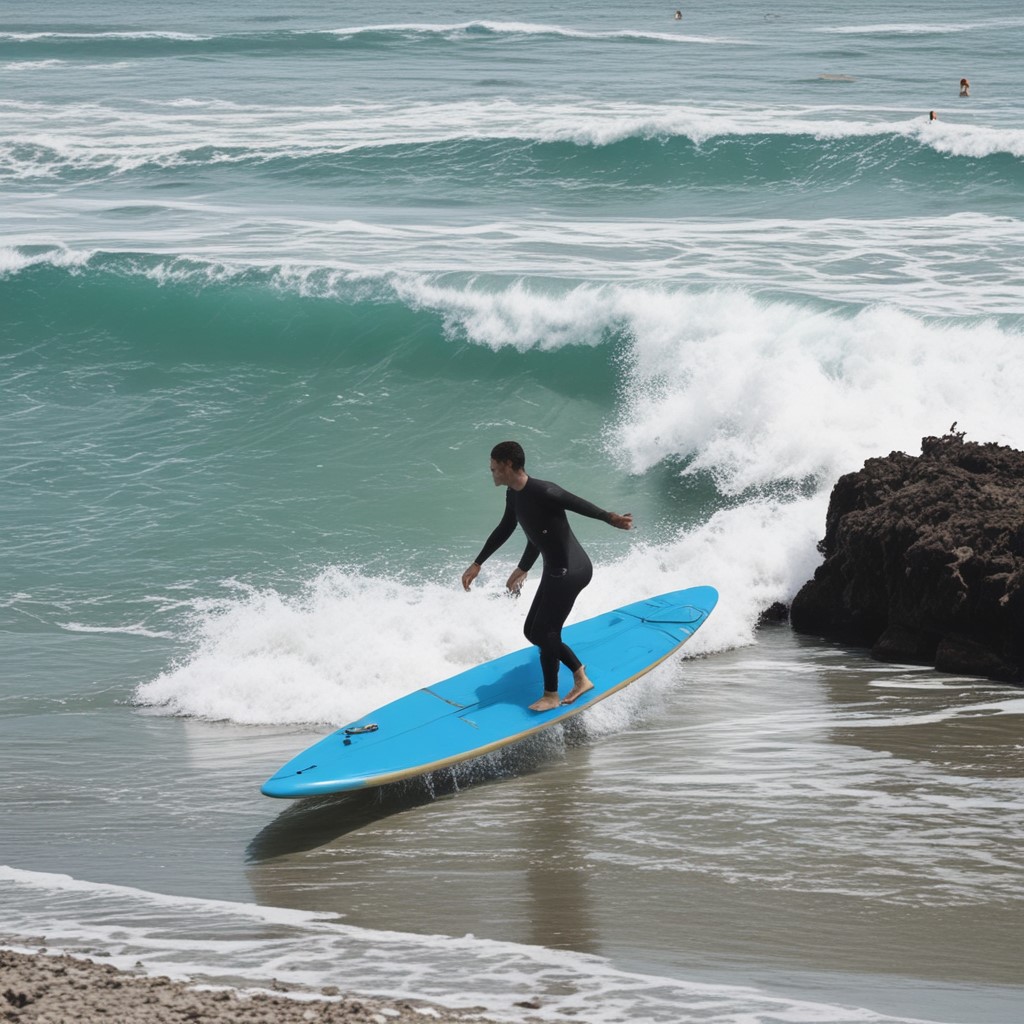} &
            \includegraphics[width=0.32\linewidth]{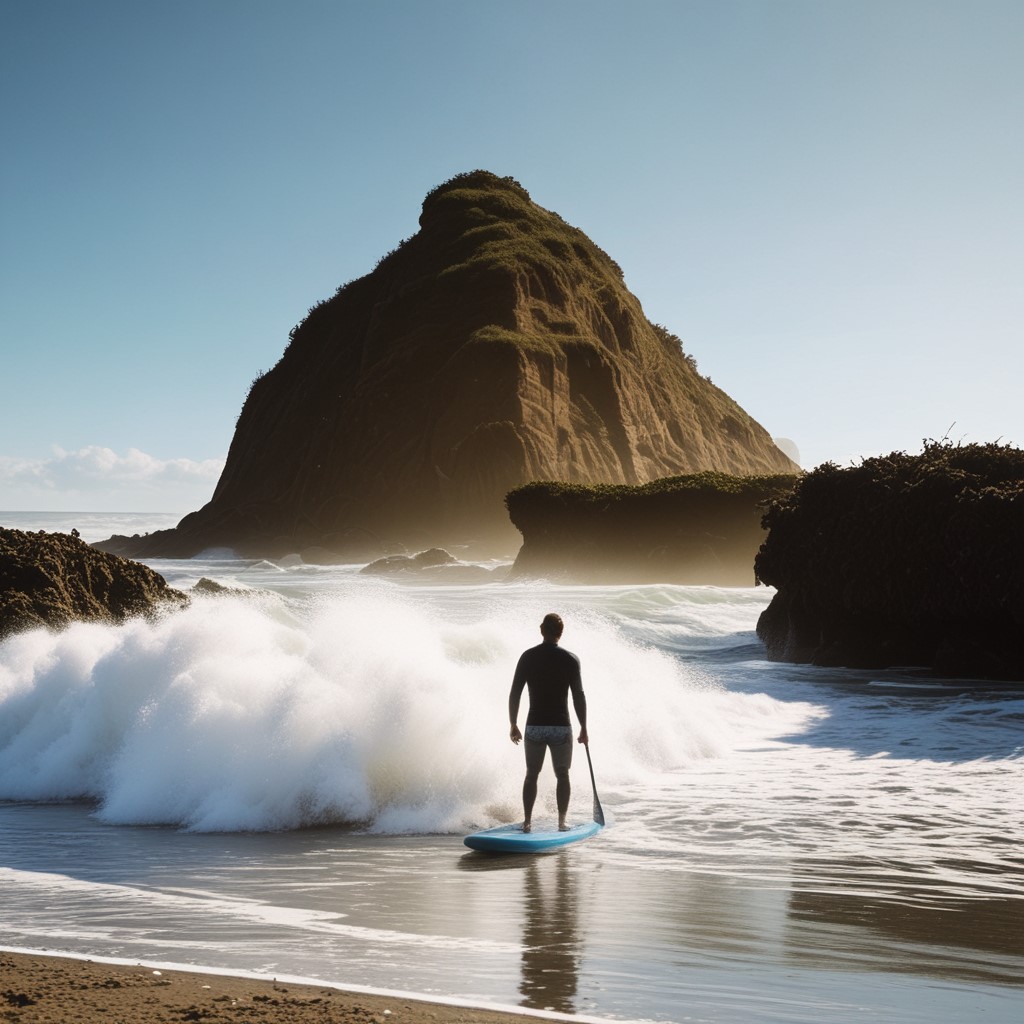} \\
        \end{tabular}
        \vspace{-2.5mm}
        \caption*{``Man surfing on blue surfboard during low tide''}
    \end{subfigure}
    \hfill
    \begin{subfigure}[t]{0.49\textwidth}
        \centering
        \begin{tabular}{@{}c@{\hspace{1mm}}c@{\hspace{1mm}}c@{}}
            \includegraphics[width=0.32\linewidth]{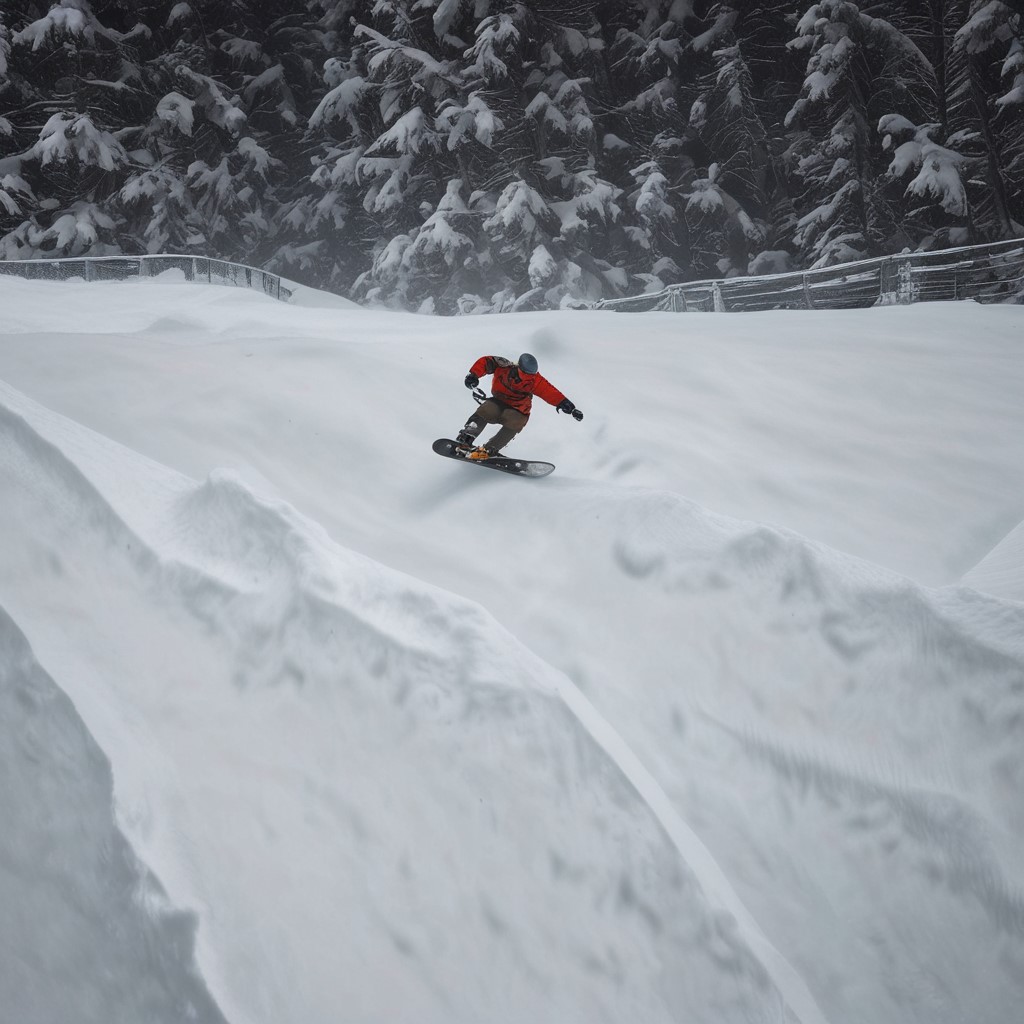} &
            \includegraphics[width=0.32\linewidth]{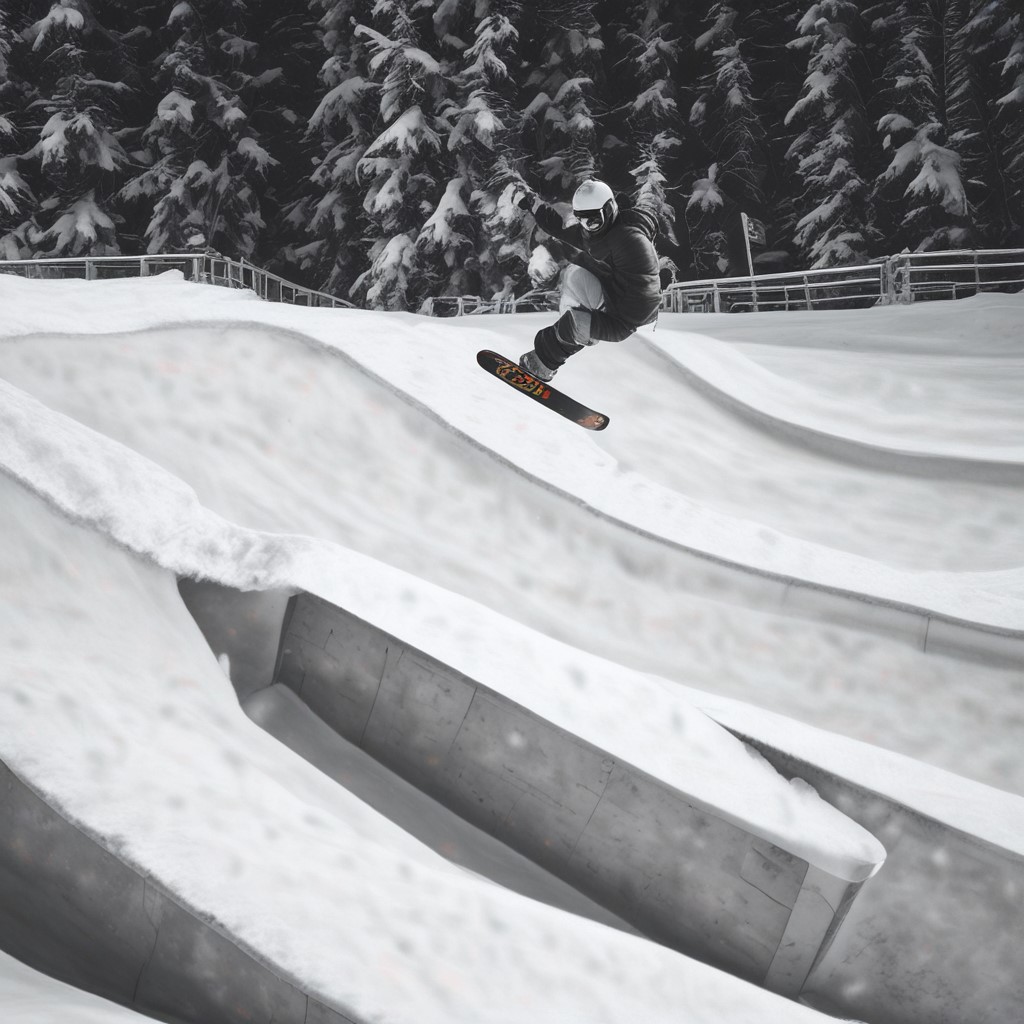} &
            \includegraphics[width=0.32\linewidth]{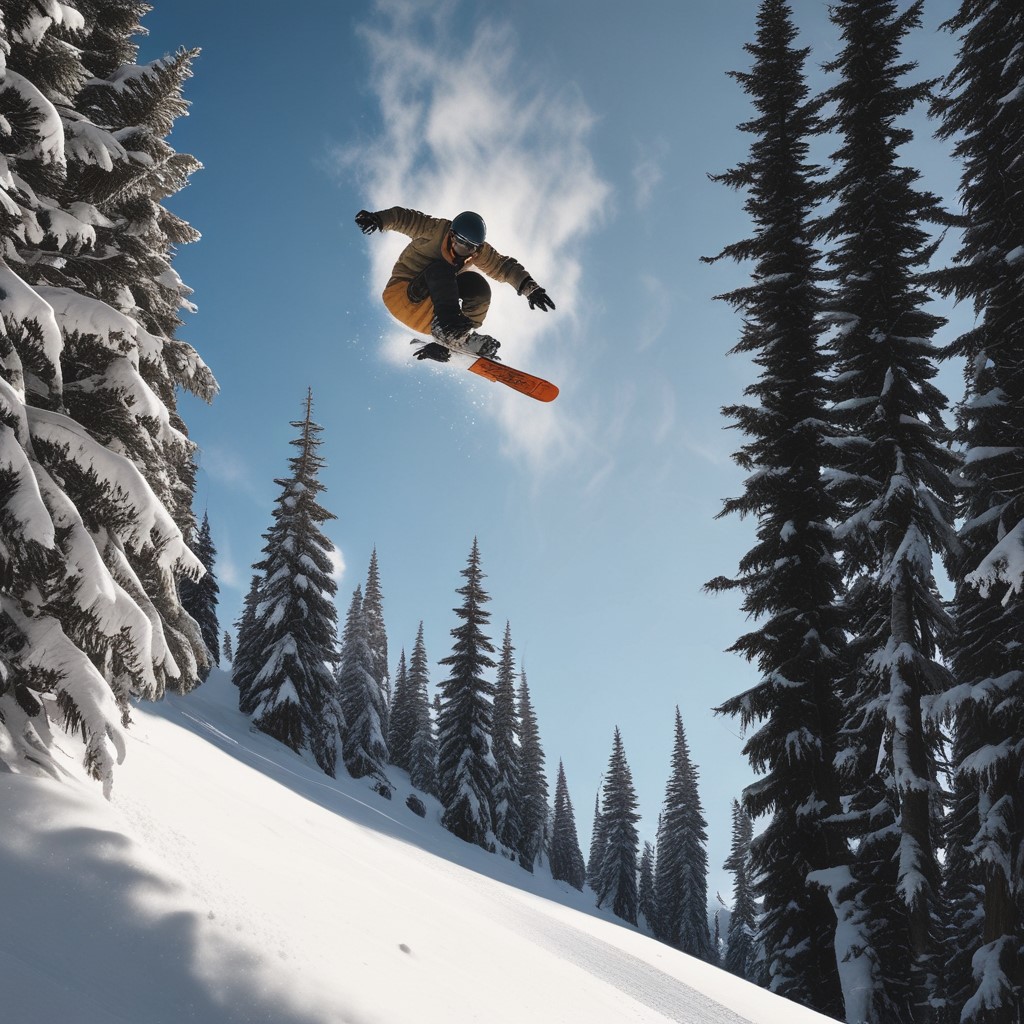} \\
        \end{tabular}
        \vspace{-2.5mm}
        \caption*{``A snowboarder hitting a trick off a giant ramp''}
    \end{subfigure}

    \vspace{2mm}
    
    \begin{subfigure}[t]{0.49\textwidth}
        \centering
        \begin{tabular}{@{}c@{\hspace{1mm}}c@{\hspace{1mm}}c@{}}
            \includegraphics[width=0.32\linewidth]{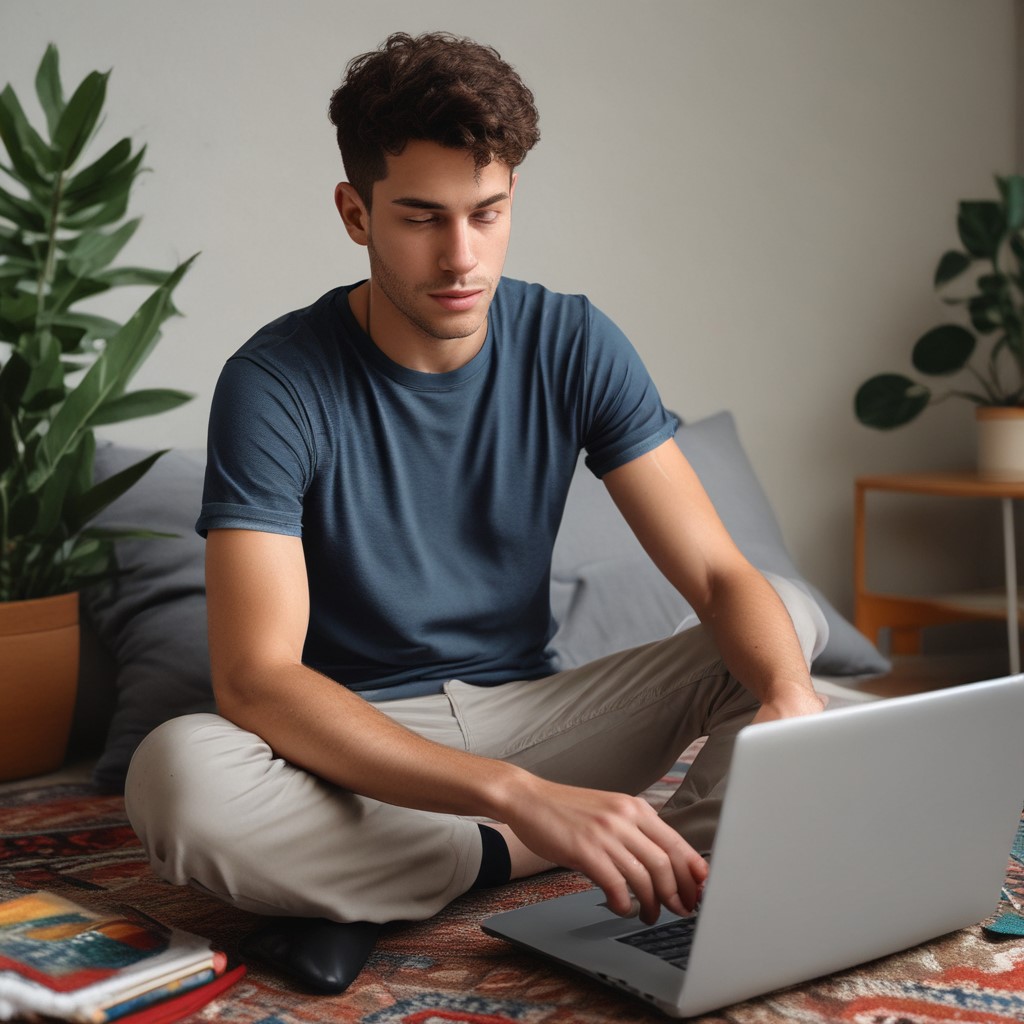} &
            \includegraphics[width=0.32\linewidth]{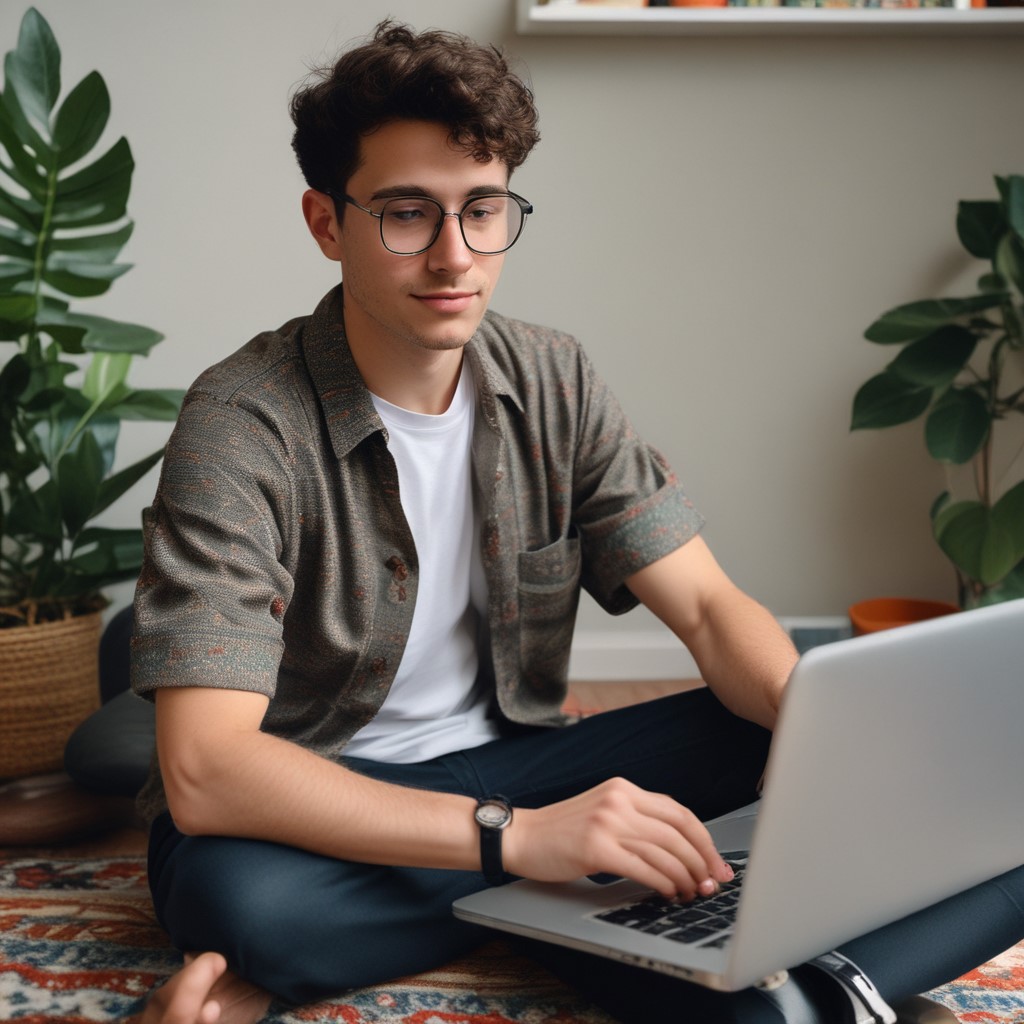} &
            \includegraphics[width=0.32\linewidth]{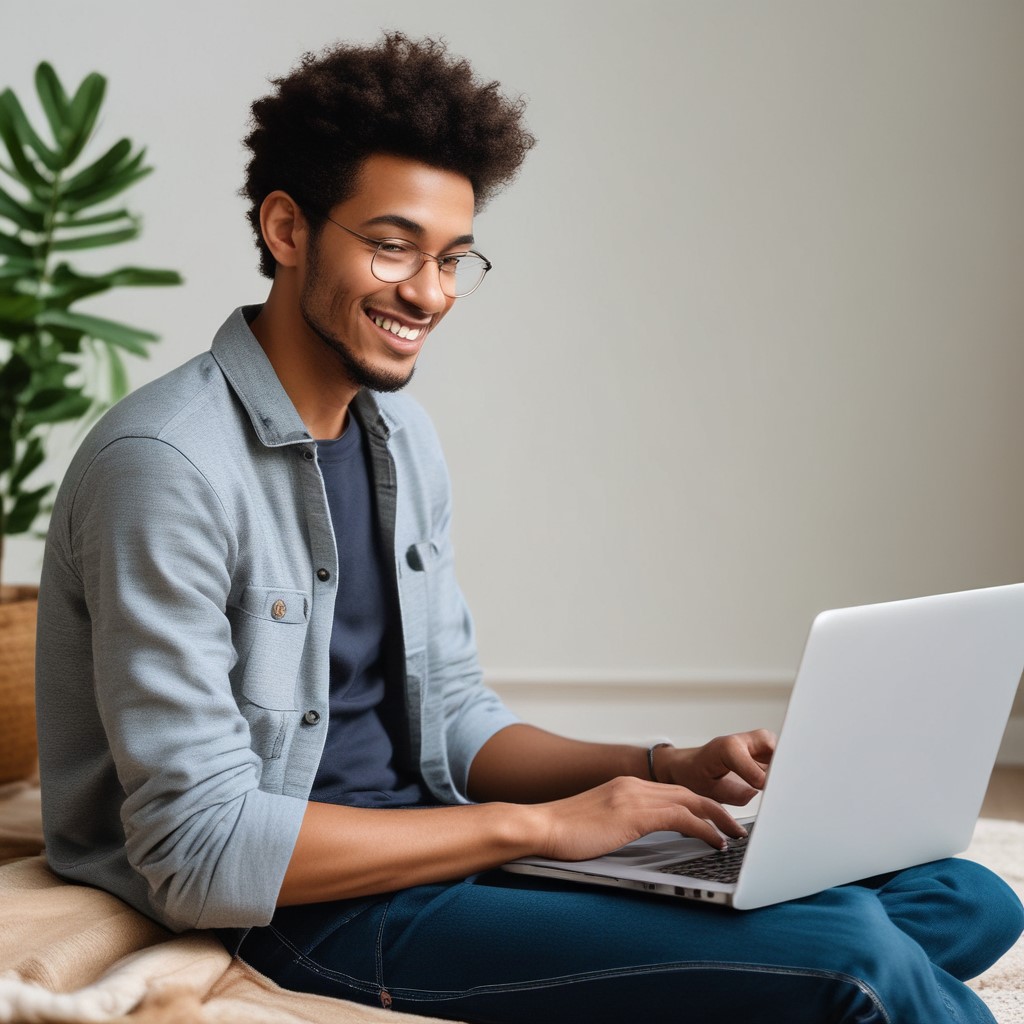} \\
        \end{tabular}
        \vspace{-2.5mm}
        \caption*{``A young man is sitting on a rug while on a laptop''}
    \end{subfigure}
    \hfill
    \begin{subfigure}[t]{0.49\textwidth}
        \centering
        \begin{tabular}{@{}c@{\hspace{1mm}}c@{\hspace{1mm}}c@{}}
            \includegraphics[width=0.32\linewidth]{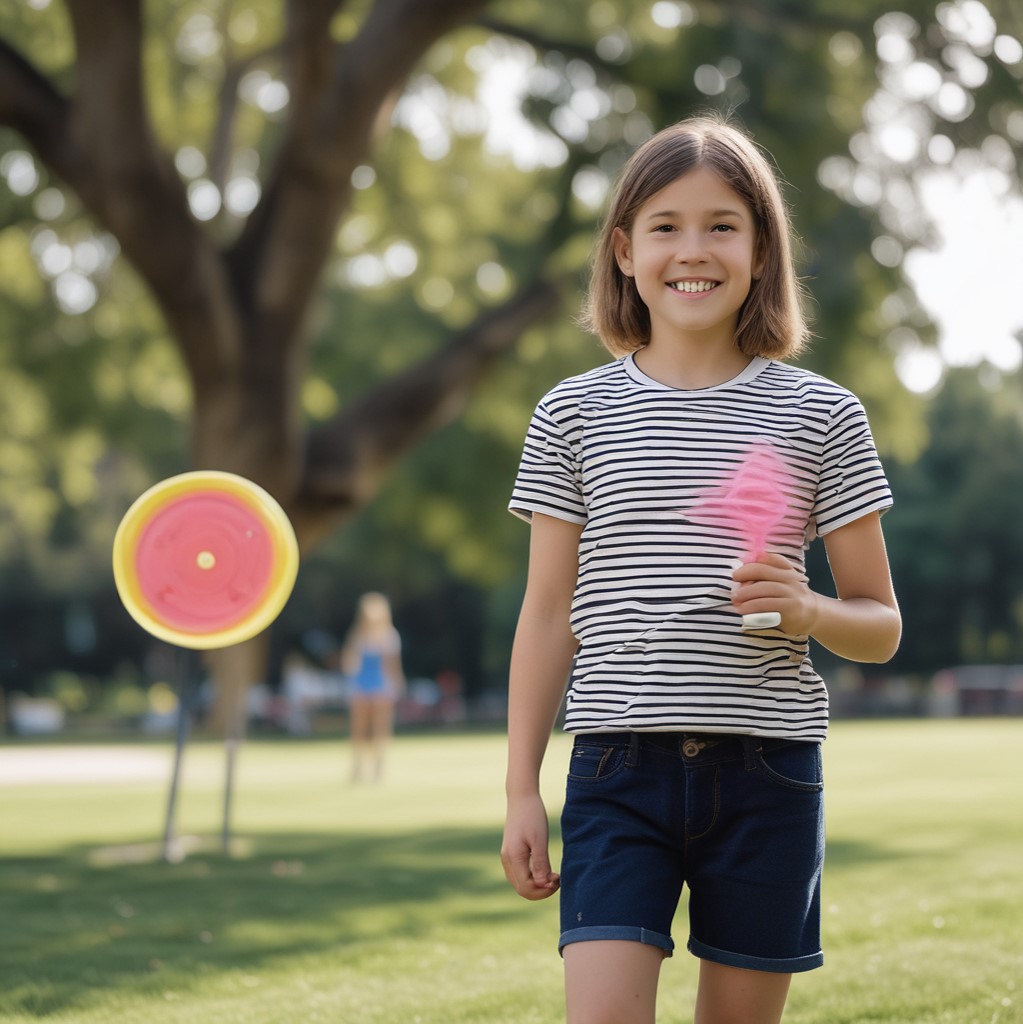} &
            \includegraphics[width=0.32\linewidth]{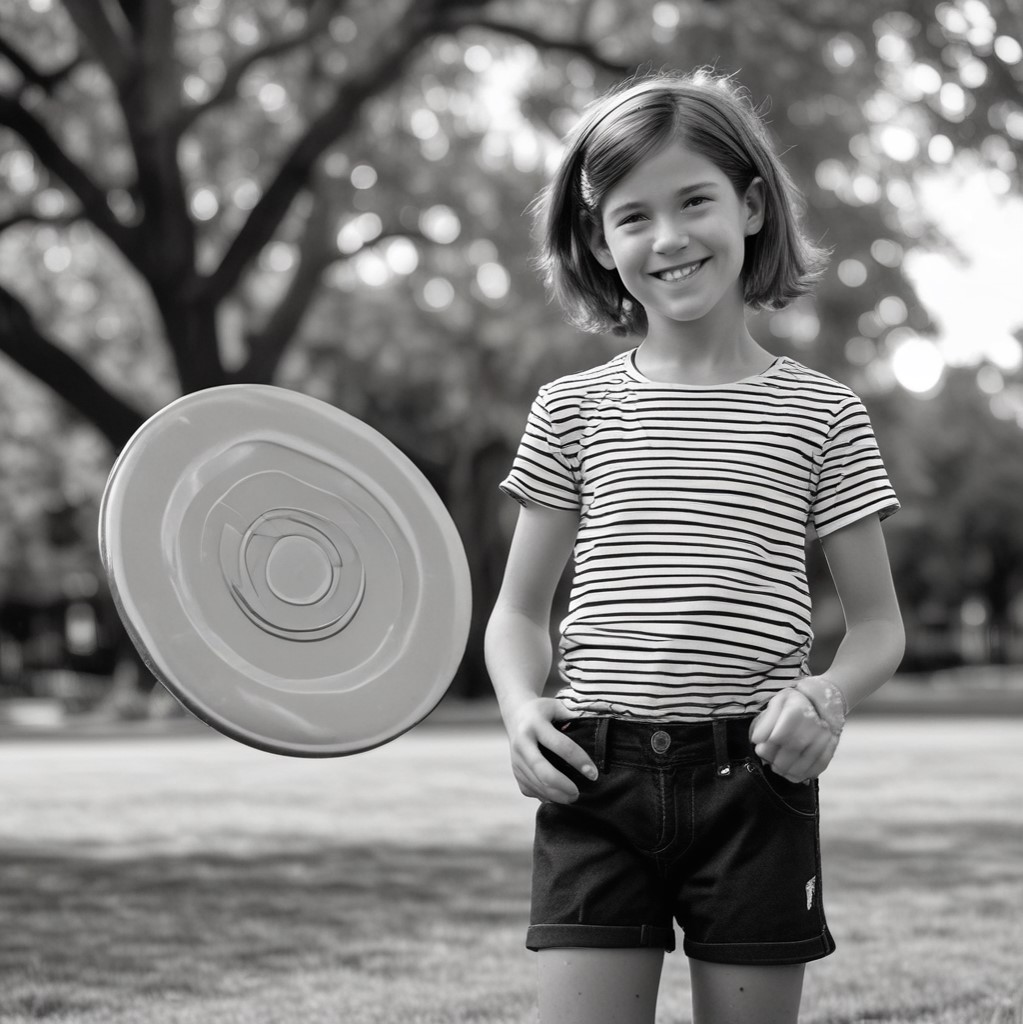} &
            \includegraphics[width=0.32\linewidth]{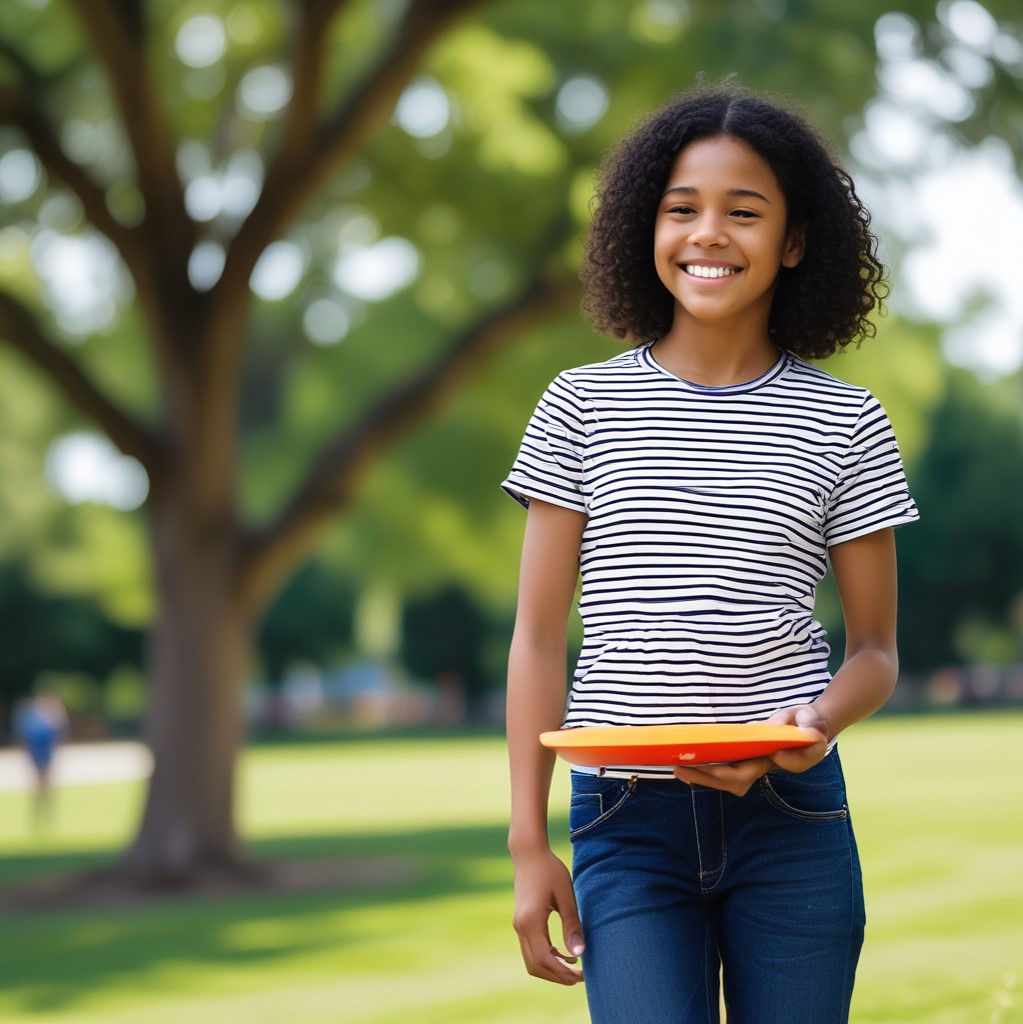} \\
        \end{tabular}
        \vspace{-2.5mm}
        \caption*{``A young girl holds a Frisbee at a park''}
    \end{subfigure}
    
    \caption{\textbf{Additional sample comparison on SDXL-Lightning.} Generated samples from three approaches: (i) DDIM~\citep{song2020denoising}, (ii) MinorityPrompt~\citep{um2025minority}, and (iii) Ours. Six prompts were used, and random seeds were shared across all methods.}
    \label{fig:apdx_qualitative_t2i}
\end{figure*}


\end{document}